\newcommand{\new}[1]{\emph{#1}}
\newcommand{\cO}{\ensuremath{{\mathcal O}}\xspace}
\newcommand{\bbR}{\ensuremath{\mathbb{R}}}
\newcommand{\bbN}{\ensuremath{\mathbb{N}}}
\newcommand{\RR}{\mathbb{R}}
\newcommand{\NN}{\mathbb{N}}
\renewcommand{\vec}[1]{\mathbf{#1}}
\newcommand{\oms}{\{\!\!\{}
\newcommand{\cms}{\}\!\!\}}
\newcommand{\hb}{\mathbf{h}}
\newcommand{\pb}{\mathbf{g}}
\newcommand{\qb}{\mathbf{q}}
\newcommand{\st}{\mathsf{t}}
\newcommand{\Nbb}{\mathbb{N}}
\newcommand{\tup}[1]{{(#1)}}
\newcommand{\UPD}{\mathsf{UPD}}
\newcommand{\AGG}{\mathsf{AGG}}
\newcommand{\RO}{\mathsf{READOUT}}
\newcommand{\REL}{\mathsf{RELABEL}}
\newcommand{\pAGG}{\mathsf{SAGG}}
\newcommand{\WLk}[1]{#1\text{-}\mathsf{WL}}
\newcommand{\PWLk}[1]{#1\text{-}\mathsf{OSWL}}
\newcommand{\GNN}{$\mathsf{GNN}$\xspace}
\newcommand{\GNNs}{$\mathsf{GNN}\text{s}$\xspace}
\newcommand{\MPNN}{$\mathsf{MPNN}$\xspace}
\newcommand{\MPNNs}{$\mathsf{MPNN}\text{s}$\xspace}
\newcommand{\PMPNN}[1]{$#1$-$\mathsf{OSAN}$\xspace}
\newcommand{\PMPNNs}[1]{$#1$-$\mathsf{OSAN}\text{s}$\xspace}
\newcommand{\mgnns}[1]{$#1$-$\mathsf{mGNN}\text{s}$\xspace}
\newcommand{\recon}[1]{$#1$-$\mathsf{recGNN}\text{s}$\xspace}
\newcommand{\idgnn}[1]{$\mathsf{idGNN}\text{s}$\xspace}
\newcommand{\idgnnk}[1]{$#1$-$\mathsf{idGNN}\text{s}$\xspace}
\newcommand{\kernel}{$\mathsf{kernelGNN}\text{s}$\xspace}
\newcommand{\dsgnn}{$\mathsf{DS\text{-}GNN}\text{s}$\xspace}
\newcommand{\dssgnn}{$\mathsf{DSS\text{-}GNN}\text{s}$\xspace}
\newcommand{\nested}{$\mathsf{nestedGNN}\text{s}$\xspace}
\newcommand{\ones}{\mathbbm{1}}
\newtheorem{claim}{Claim}
\newcommand{\cm}[1]{{{\textcolor{purple}{\textbf{[CM:} {#1}\textbf{]}}}}}
\newcommand{\fg}[1]{{{\textcolor{blue}{\textbf{[FG:} {#1}\textbf{]}}}}}
\newcommand{\florcol}{blue}
\newcommand{\floris}[1]{\textcolor{\florcol}{/* #1 */}}
\newcommand{\RSet}{\mathbb{R}}
\newcommand{\bz}{\bm{z}}
\newcommand{\by}{\bm{y}}
\newcommand{\bg}{\bm{g}}
\newcommand{\bomega}{\bm{\omega}}
\newcommand{\bepsilon}{\bm{\epsilon}}
\newcommand{\btheta}{\bm{\theta}}
\newcommand{\outputspace}{\mathcal{Y}}
\newcommand{\bnoisedist}{\rho(\bepsilon)}
\newcommand{\exy}{\hat{\by}}
\newcommand{\imle}{\textsc{I-MLE}\@\xspace}
\newcommand{\grad}[2]{\nabla_{#1}#2}
\newcommand{\cmark}{\ding{51}}
\newcommand{\xmark}{\ding{55}}
\newcommand{\gin}{\textsf{GIN}\xspace}
\newcommand{\gcn}{\textsf{GCN}\xspace}
\newcommand{\esan}{\textsf{ESAN}\xspace}
\newcommand{\dswl}{\textsf{DS\text{-}WL}\xspace}
\newcommand{\xhdr}[1]{{\noindent\bfseries #1}}
\tikzstyle{vertex}=[circle, draw, fill=gray!80!white,thick,scale=1.2]
\tikzstyle{edge}=[draw=black, thick,-]
\definecolor{mycolor}{rgb}{0.122, 0.435, 0.698}
\newmdenv[innerlinewidth=0.5pt, roundcorner=4pt,linecolor=mycolor,innerleftmargin=6pt,
innerrightmargin=6pt,innertopmargin=6pt,innerbottommargin=6pt]{mybox}
\theoremstyle{definition}
\newtheorem{theorem}{Theorem}
\newtheorem{proposition}[theorem]{Proposition}
\newtheorem{lemma}[theorem]{Lemma}
\newcommand{\comm}[1]{}
\def\thmt@refnamewithcomma #1#2#3,#4,#5\@nil{%
	\@xa\def\csname\thmt@envname #1utorefname\endcsname{#3}%
	\ifcsname #2refname\endcsname
	\csname #2refname\expandafter\endcsname\expandafter{\thmt@envname}{#3}{#4}%
	\fi
}
\title{Ordered Subgraph Aggregation Networks}
\author{%
  Chendi Qian\thanks{These authors contributed equally.}\\
  Department of Computer Science\\
  TU Munich\\
  \And
  Gaurav Rattan$^*$\\
  Department of Computer Science\\
  RWTH Aachen University\\
  \And 
  Floris Geerts\\
  Department of Computer Science\\
  University of Antwerp\\
  \And 
  Christopher Morris\\
  Department of Computer Science\\
  RWTH Aachen University\\
  \And 
  Mathias Niepert\\
  Department of Computer Science\\
  University of Stuttgart\\
}
\begin{document}

\maketitle

\begin{abstract}
Numerous subgraph-enhanced graph neural networks (GNNs) have emerged recently, provably boosting the expressive power of standard (message-passing) GNNs. However, there is a limited understanding of how these approaches relate to each other and to the Weisfeiler--Leman hierarchy. Moreover, current approaches either use all subgraphs of a given size, sample them uniformly at random, or use hand-crafted heuristics instead of learning to select subgraphs in a data-driven manner. Here, we offer a unified way to study such architectures by introducing a theoretical framework and extending the known expressivity results of subgraph-enhanced GNNs. Concretely, we show that increasing subgraph size always increases the expressive power and develop a better understanding of their limitations by relating them to the established $\WLk{k}$ hierarchy. In addition, we explore different approaches for learning to sample subgraphs using recent methods for backpropagating through complex discrete probability distributions. Empirically, we study the predictive performance of different subgraph-enhanced GNNs, showing that our data-driven architectures increase prediction accuracy on standard benchmark datasets compared to non-data-driven subgraph-enhanced graph neural networks while reducing computation time. 
\end{abstract}

\section{Introduction}
Graph-structured data are ubiquitous across application domains ranging from chemo- and bioinformatics~\citep{Barabasi2004,Jum+2021,Sto+2020} to image~\citep{Sim+2017} and social-network analysis~\citep{Eas+2010}. Numerous approaches for graph--based machine learning have been proposed, most notably those based on \new{graph kernels}~\citep{Borg+2020,Kri+2019} or using \new{graph neural networks} (GNNs)~\citep{Cha+2020,Gil+2017,Mor+2022}. Here, graph kernels based on the \new{$1$-dimensional Weisfeiler--Leman algorithm} ($\WLk{1}$)~\citep{Wei+1968}, a simple heuristic for the graph isomorphism problem, and corresponding GNNs~\citep{Mor+2019,Xu+2018b}, have recently advanced the state-of-the-art in supervised vertex- and graph-level learning. However, the $\WLk{1}$ and GNNs operate via local neighborhood aggregation, missing crucial patterns in the given data while more expressive architectures based on the \emph{$k$-dimensional Weisfeiler--Leman algorithm} ($\WLk{k}$)~\citep{Azi+2020,Mar+2019,Morris2020b,Mor+2022,Mor+2022b} may not scale to larger graphs.

Hence, several approaches such as~\citet{Bev+2021,Cot+2021,li2020distance,Pap+2021,Thi+2021,You+2021} and~\citet{Zha+2021b} have enhanced the expressive power of GNNs, by removing, extracting, or marking (small) subgraphs, so as to allow GNNs to leverage more structural patterns within the given graph, essentially breaking symmetries induced by the GNNs' local aggregation function. We henceforth refer to these approaches as \textit{subgraph-enhanced GNNs}.

\xhdr{Present work} 
First, to bring some order to the multitude of recently proposed subgraph-enhanced GNNs, we introduce a theoretical framework to study these approaches' expressive power in a unified setting. Concretely, 
\begin{itemize}
	\item we introduce \new{$k$-ordered subgraph aggregation networks} (\PMPNNs{k}) and show that they capture most of the recently proposed subgraph-enhanced GNNs.
	\item We show that any \PMPNN{k} is upper bounded by $\WLk{(k+1)}$ in terms of expressive power and show that \PMPNNs{k} and $\WLk{k}$ are incomparable in terms of expressive power. Consequently, we obtain new upper bounds on the expressive power of recently proposed subgraph-enhanced GNNs.
	\item We show that increasing $k$, i.e., using larger subgraphs, always leads to an increase in expressive power, effectively showing that \PMPNNs{k} form a hierarchy.
\end{itemize}

Second, most approaches consider all subgraphs or use hand-crafted heuristics to select them, e.g., by deleting vertices or edges. Instead, we leverage recent progress in back-propagating through discrete structures using perturbation-based differentiation~\citep{Dom+2010,Nie+2021} to sample subgraphs in a \textit{data-driven} fashion, automatically adapting to the given data distribution. 
Concretely,   
\begin{itemize}
	\item we explore different strategies to sample subgraphs leveraging the \imle framework~\citep{Nie+2021}, resulting in the data-driven \PMPNN{k} architecture.
	\item We show, empirically, that data-driven \PMPNNs{k} increase prediction accuracy on standard benchmark datasets compared to non-data-driven subgraph-enhanced GNNs while vastly reducing computation time.
\end{itemize}

\subsection{Related work}
In the following, we discuss related work relevant to the present work; see~\cref{exp_rel} for an extended discussion.

\xhdr{GNNs} Recently, GNNs~\citep{Gil+2017,Sca+2009} emerged as the most prominent graph representation learning architecture. Notable instances of this architecture include, e.g.,~\citet{Duv+2015,Ham+2017} and~\citet{Vel+2018}, which can be subsumed under the message-passing framework introduced in~\citet{Gil+2017}. In parallel, approaches based on spectral information were introduced in, e.g.,~\citet{Defferrard2016,Bru+2014,Kip+2017} and~\citet{Mon+2017}---all of which descend from early work in~\citet{bas+1997,Kir+1995,mic+2005,Mer+2005,mic+2009,Sca+2009} and~\citet{Spe+1997}. 

\xhdr{Limits of GNNs and more expressive architectures} 
Recently, connections between GNNs and Weisfeiler--Leman type algorithms have been shown~\citep{Azi+2020,Bar+2020,Che+2019,Gee+2020a,Gee+2020b,geerts2022,Mae+2019,Mar+2019,Mor+2019,Mor+2022b,Xu+2018b}. Specifically,~\citet{Mor+2019} and~\citet{Xu+2018b} showed that the expressive power of any possible GNN architecture is limited by the $\WLk{1}$ in terms of distinguishing non-isomorphic graphs. 

Recent works have extended the expressive power of GNNs, e.g., by encoding vertex identifiers~\citep{Mur+2019b, Vig+2020}, using random features~\citep{Abb+2020,Das+2020,Sat+2020}, homomorphism and subgraph counts~\citep{Bar+2021,botsas2020improving,Hoa+2020}, spectral information~\citep{Bal+2021}, simplicial and cellular complexes~\citep{Bod+2021,Bod+2021b}, persistent homology~\citep{Hor+2021}, random walks~\citep{Toe+2021}, graph decompositions~\citep{Tal+2021}, or distance~\citep{li2020distance} and directional information~\citep{beaini2020directional}. See~\citet{Mor+2022} for an in-depth survey on this topic. 

\xhdr{Subgraph-enhanced GNNs} Most relevant to the present work are \emph{subgraph-enhanced GNNs}. \citet{Cot+2021} and~\citet{Pap+2021} showed how to make GNNs more expressive by removing one or more vertices from a given graph and using standard GNN architectures to learn vectorial representations of the resulting subgraphs. The approaches either consider all possible subgraphs or utilize random sampling to arrive at more scalable architectures.  \citet{Cot+2021} showed that by removing one or two vertices, such architectures can distinguish graphs the $\WLk{1}$ and $\WLk{2}$, respectively, are not able to distinguish. 
Extensions and refinements of the above were proposed in~\citet{Bev+2021,Pap+2022,Thi+2021,You+2021,Zha+2021} and~\citet{Zha+2021b}, see~\citet{Pap+2022} for an overview. For example, \citet{Bev+2021} generalized several ideas discussed above and proposed the \esan framework in which each graph is represented as a multiset of its subgraphs and processed them using an equivariant architecture based on the DSS architecture \citep{maron2020learning} and GNNs. The authors proposed several simple subgraph selection policies, e.g., edge removal, ego networks, or vertex removal, and showed that the architecture surpasses the expressive power of the $\WLk{1}$. Moreover, \citet{frasca2022understanding} presented a novel symmetry analysis unifying a series of subgraph-enhanced GNNs, allowing them to upper-bound their expressive power and to define a systematic framework to conceive novel architectures in this family. We note here that the above works, unlike the present one, mostly do not study the approaches' expressive power beyond (folklore or non-oblivious) $\WLk{2}$ and do not compare at all to the (folklore or non-oblivious) $\WLk{3}$, while our analysis works for the whole $\WLk{k}$ hierarchy.

See~\cref{exp_rel} for a detailed overview of recent progress in differentiating through discrete structures.

\section{Preliminaries}\label{sec:prelim}
As usual, for $n \geq 1$, let $[n] \coloneqq \{ 1, \dotsc, n \} \subset \NN$. We use $\{\!\!\{ \dots\}\!\!\}$ to denote multisets, i.e., the generalization of sets allowing for multiple instances of each of its elements. 

A \new{graph} $G$ is a pair $(V(G),E(G))$ with \emph{finite} sets of \new{vertices} $V(G)$ and \new{edges} $E(G) \subseteq \{ \{u,v\}
\subseteq V(G) \mid u \neq v \}$. If not otherwise stated, we set $n \coloneqq |V(G)|$. For ease of
notation, we denote the edge $\{u,v\}$ in $E(G)$ by $(u,v)$ or
$(v,u)$. In the case of \emph{directed graphs}, $E \subseteq \{ (u,v)
\in V \times V \mid u \neq v \}$. A \new{labeled graph} $G$ is a triple
$(V,E,l)$ with \new{(vertex) coloring} or \new{label function} $l \colon V(G) \to \bbN$. Then $l(v)$ is a
\new{label} of $v$ for $v$ in $V(G)$. The \new{neighborhood} of $v$ in $G$ is denoted by $N_G(v)\coloneqq \{ u \in V(G) \mid \{ v, u \} \in E(G) \}$ and the \new{degree} of a vertex $v$ is $|N_G(v)|$. For $S \subseteq
V(G)$, the graph $G[S] = (S,E_S)$ is the \new{subgraph induced by $S$}, where $E_S\coloneqq \{ (u,v) \in E(G) \mid u,v \in S \}$.

Two graphs $G$ and $H$ are \new{isomorphic} and we write $G \simeq H$ if there exists a bijection $\varphi \colon V(G) \to V(H)$ that preserves the adjacency relation, i.e., $(u,v)$ is in $E(G)$ if and only if $(\varphi(u),\varphi(v))$ is in $E(H)$. Then $\varphi$ is an \new{isomorphism} between
$G$ and $H$. Moreover, we call the equivalence classes induced by the relation $\simeq$ \emph{isomorphism types}. 
In the case of labeled graphs, we additionally require that $l(v) = l(\varphi(v))$ for $v$ in $V(G)$. We further define the atomic type $\mathsf{atp}\colon V(G)^k \to \bbN$ such that $\mathsf{atp}(\vec{v}) = \mathsf{atp}(\vec{w})$ for $\vec{v},\vec{w} \in V(G)^k$ if and only if the mapping $\varphi\colon V(G)^k \to V(G)^k$ where $v_i \mapsto w_i$ induces a partial isomorphism, i.e., we have $v_i = v_j \iff w_i = w_j$ and $(v_i,v_j) \in E(G) \iff (\varphi(v_i),\varphi(v_j)) \in E(G)$. Let $\vec{v}$ be a \emph{tuple} in $V(G)^k$ for $k > 0$, then $G[\vec{v}]$ is the \emph{ordered $k$-vertex subgraph} induced by the multiset of elements of $\vec{v}$, where the vertices are labeled with integers from $[k]$ corresponding to their positions in $\vec{v}$.\comm{\footnote{More specifically, we treat labels of vertices $\vec{v}$ in $G[\vec{v}]$ as being separate from the labels assigned by the label function $l$, in order to keep these labels separate. Moreover, a vertex $v$ in $\vec{v}$ may obtain multiple labels in $G[\vec{v}]$ in case $v$ occurs multiple  times in $\vec{v}$. Having multiple labels can be avoided, e.g., by using a hot-one encoding.}}
Moreover, let $\st(G[\vec{v}]) \coloneqq \vec{v}$, i.e., the $k$-tuple $\vec{v}$ underlying the ordered $k$-vertex subgraph $G[\vec{v}]$. We denote the set of all ordered $k$-vertex subgraphs of a graph $G$ by $G_k$. Finally, let $\mathcal{G}$ be the set of all vertex-labeled graphs. 

\xhdr{The $\WLk{1}$ and the $\WLk{k}$} The $\WLk{1}$ or color refinement is a simple heuristic for the graph isomorphism problem, originally proposed by~\citet{Wei+1968}.\footnote{Strictly speaking, the $\WLk{1}$ and color refinement are two different algorithms. That is, the  $\WLk{1}$ considers neighbors and non-neighbors to update the coloring, resulting in a slightly higher expressive power when distinguishing vertices in a given graph, see~\citet{Gro+2021} for details. For brevity, we consider both algorithms to be equivalent.}
Intuitively, the algorithm determines if two graphs are non-isomorphic by iteratively coloring or labeling vertices. Given an initial coloring or labeling of the vertices of both
graphs, e.g., their degree or application-specific information, in
each iteration, two vertices with the same label get different labels if the number of identically labeled neighbors is not equal. If, after some iteration, the number of vertices annotated with a specific label is different in both graphs, the algorithm terminates and a stable coloring (partition) is obtained. We can then conclude that the two graphs are not isomorphic. It is easy to see that the algorithm cannot distinguish all non-isomorphic graphs~\citep{Cai+1992}. Nonetheless, it is a powerful heuristic that can successfully test isomorphism for a broad class of graphs~\citep{Bab+1979}.

Formally, let $G = (V,E,l)$ be a labeled graph. In each iteration, $i > 0$, the $\WLk{1}$ computes a vertex coloring $C^1_i \colon V(G) \to \bbN$,
which depends on the coloring of the neighbors. That is, in iteration $i>0$, we set
\begin{equation*}
	C^1_i(v) \coloneqq \REL\Big(\!\big(C^1_{i-1}(v),\oms C^1_{i-1}(u) \mid u \in N_G(v)  \cms \big)\! \Big),
\end{equation*}
where $\REL$ injectively maps the above pair to a unique natural number, which has not been used in previous iterations.  In iteration $0$, the coloring $C^1_{0}\coloneqq l$. To test if two graphs $G$ and $H$ are non-isomorphic, we run the above algorithm in ``parallel'' on both graphs. If the two graphs have a different number of vertices colored $c$ in $\bbN$ at some iteration, the $\WLk{1}$ \new{distinguishes} the graphs as non-isomorphic. Moreover, if the number of colors between two iterations, $i$ and $(i+1)$, does not change, i.e., the cardinalities of the images of $C^1_{i}$ and $C^1_{i+1}$ are equal,
\comm{or, equivalently,
\begin{equation*}
	C^1_{i}(v) = C^1_{i}(w) \iff C^1_{i+1}(v) = C^1_{i+1}(w),
\end{equation*}
for all vertices $v$ and $w$ in $V(G)$, }the algorithm terminates. For such $i$, we define the \new{stable coloring}
$C^1_{\infty}(v) = C^1_i(v)$ for $v$ in $V(G)$. The stable coloring is reached after at most $\max \{ |V(G)|,|V(H)| \}$ iterations~\citep{Gro2017}.

Due to the shortcomings of the $\WLk{1}$  or color refinement in distinguishing non-isomorphic
graphs, several researchers~\citep{Bab1979,Bab2016,Imm+1990},
devised a more powerful generalization of the former, today known
as the \new{$k$-dimensional Weisfeiler-Leman algorithm}
($\WLk{k}$); see~\cref{kwl} for a detailed description.

\xhdr{Graph Neural Networks}
\label{sec:gnn}
Intuitively, GNNs learn a vectorial representation, i.e., a $d$-dimensional vector, representing each vertex in a graph by aggregating information from neighboring vertices. Formally, let $G = (V,E,l)$ be a labeled graph with initial vertex features $\hb_{v}^\tup{0} \in \RR^{d}$ that are \emph{consistent} with $l$. That is, each vertex $v$ is annotated with a feature  $\hb_{v}^\tup{0} \in \bbR^{d}$ such that $\hb_{u}^\tup{0} = \hb_{v}^\tup{0}$ if $l(u) = l(v)$, e.g., a one-hot encoding of the labels $l(u)$ and $l(v)$. Alternatively,  $\hb_{v}^\tup{0}$ can be an arbitrary real-valued feature vector or attribute of the vertex $v$, e.g., physical measurements in the case of chemical molecules. A GNN architecture consists of a stack of neural network layers, i.e., a composition of parameterized functions. Similarly to $\WLk{1}$, each layer aggregates local neighborhood information, i.e., the neighbors' features, around each vertex and then passes this aggregated information on to the next layer.

Following, \citet{Gil+2017} and \citet{Sca+2009}, in each layer, $i > 0$,  we compute vertex features
\begin{equation*}\label{def:gnn}
	\hb_{v}^\tup{i+1} \coloneqq
	\UPD^\tup{i+1}\Bigl(\hb_{v}^\tup{i},\AGG^\tup{i+1} \bigl(\oms \hb_{u}^\tup{i}
	\mid u\in N_G(v) \cms \bigr)\Bigr) \in \RR^{d},   
\end{equation*}
where  $\UPD^\tup{i+1}$ and $\AGG^\tup{i+1}$ may be differentiable parameterized functions, e.g., neural networks.\footnote{Strictly speaking, \citet{Gil+2017} consider a slightly more general setting in which vertex features are computed by $\hb_{v}^\tup{i+1} \coloneqq
	\UPD^\tup{i+1}\Bigl(\hb_{v}^\tup{i},\AGG^\tup{i+1} \bigl(\oms (\hb_v^\tup{i},\hb_{u}^\tup{i},l(v,u))
	\mid u\in N_G(v) \cms \bigr)\Bigr)$.}
In the case of graph-level tasks, e.g., graph classification, one uses 
\begin{equation}\label{readout}
	\hb_G \coloneqq \RO\bigl( \oms \hb_{v}^{\tup{T}}\mid v\in V(G) \cms \bigr) \in \RR^{d},
\end{equation}
to compute a single vectorial representation based on learned vertex features after iteration $T$. Again, $\RO$  may be a differentiable parameterized function. To adapt the parameters of the above three functions, they are optimized end-to-end, usually through a variant of stochastic gradient descent, e.g.,~\citep{Kin+2015}, together with the parameters of a neural network used for classification or regression.

\xhdr{The Weisfeiler--Leman hierarchy and permutation-invariant function approximation}\label{connect}
The Weisfeiler--Leman hierarchy is a purely combinatorial algorithm for testing graph isomorphism. However,  the graph isomorphism function, mapping non-isomorphic graphs to different values, is the hardest to approximate permutation-invariant function. Hence, the Weisfeiler--Leman hierarchy has strong ties to GNNs' capabilities to approximate permutation-invariant or equivariant functions over graphs. For example,~\citet{Mor+2019,Xu+2018b} showed that the expressive power of any possible GNN architecture is limited by $\WLk{1}$ in terms of distinguishing non-isomorphic graphs. \citet{Azi+2020} refined these results by showing that if an architecture is capable of simulating $\WLk{k}$ and allows the application of universal neural networks on vertex features, it will be able to approximate any permutation-equivariant function below the expressive power of $\WLk{k}$; see also~\citet{Che+2019}. Hence, if one shows that one architecture distinguishes more graphs than another, it follows that the corresponding GNN can approximate more functions. These results were refined in \citet{geerts2022} for color refinement and taking into account the number of iterations of $\WLk{k}$.

\section{Ordered subgraph Weisfeiler--Leman and MPNNs}
In the following, we introduce a variant of $\WLk{1}$, denoted \new{$k$-ordered subgraph WL} ($\PWLk{k}$). Essentially, the $\PWLk{k}$ labels or marks ordered subgraphs and then executes $\WLk{1}$ on top of the marked graphs, followed by an aggregation phase. Although unordered subgraphs are also possible, ordered ones lead to more expressive architectures and also encompass the unordered case; see~\cref{unordered} for a discussion. To make the procedure permutation-invariant, we consider all possible ordered subgraphs. Based on the ideas of $\PWLk{k}$, we then introduce \PMPNNs{k}, which can be seen as a neural variant of the former, allowing us to analyze various subgraph-enhanced GNNs. 

\subsection{Ordered subgraph WL} We now describe the algorithm formally. Let $G$ be a graph, and let $\pb\in G_k$ be an ordered $k$-vertex subgraph. Then $\PWLk{k}$ computes a vertex coloring, similarly to $\WLk{1}$, with the main distinction that it can use structural graph information related to the ordered subgraph $G[(v, \st(\pb))]$, where $v$ is a vertex in $V(G)$.

More precisely, at each iteration $i \geq 0$, $\PWLk{k}$ computes a coloring $C_i \colon V(G) \times G_{k} \to\Nbb$ where we interpret elements $(v,\pb)\in V(G) \times G_{k} $ as a vertex $v$ along with an ordered $k$-vertex subgraph. 
Given an ordered $k$-vertex subgraph   $\pb\in G_k$, initially, for $i=0$, we set $C_{0}(v,\pb) \coloneqq \mathsf{atp}{(v,\st(\pb))}$, and for $i > 0$, we set
\begin{equation*}
	C_{i+1}(v,\pb) \coloneqq  \REL \Big(\!\big(C_i(v,\pb),\oms C_{i}(u,\pb)\mid u \in \square  \cms \big)\!\Big), 
\end{equation*}
where $\square$ is either $N_G(v)$ or $V(G)$. We compute the stable partition analogously to $\WLk{1}$. Finally, to compute a single color for a vertex $v$, we aggregate all ordered $k$-vertex subgraphs, i.e., we compute 
\begin{equation}\label{vp}
	C(v) \coloneqq  \REL \bigl(\oms C_{\infty}(v,\pb) \mid \pb\in G_k \cms \bigr).
\end{equation}
In other words, one can regard the $\PWLk{k}$ as running $\WLk{1}$ in parallel over $n^k$ graphs, one for each ordered $k$-vertex subgraph $\pb\in G_k$, followed by combining the colors of each vertex in all these graphs. Furthermore, by restricting the number of considered subgraphs, the algorithm allows for more fine-grained control over the trade-off between scalability and expressivity. Note that $\PWLk{0}$ is equal to $\WLk{1}$. We also define a variation of the $\PWLk{k}$, denoted \emph{vertex-subgraph $\PWLk{k}$}, which, unlike~\cref{vp}, first computes a color $C(\pb)$ for each ordered $k$-vertex subgraph $\pb$ by aggregating over vertices; see~\cref{sec:vertexpebble} for details.

We remark that in contrast to $\WLk{k}$, which has to update the coloring of all  $n^k$ ordered $k$-vertex subgraphs in a complicated manner, the computation of $\PWLk{k}$'s coloring relies on the simple and easy-to-implement $\WLk{1}$. Furthermore, $\PWLk{k}$'s computation can be either done in parallel or sequentially across all $n$ vertices and $n^k$ graphs. Despite its simplicity, in~\cref{expressivity}, we will show that the $\PWLk{k}$ has high expressivity.

\subsection{Ordered subgraph MPNNs}
In the following, to study the expressivity of subgraph-enhanced GNNs, we introduce $k$-ordered subgraph MPNNs (\PMPNNs{k}), which can be viewed as neural variants of the $\PWLk{k}$. At initialization, \PMPNNs{k} learn two features for each element in $G_k$ and each vertex $v$
\allowdisplaybreaks
\begin{linenomath}
	\postdisplaypenalty=0
	\begin{align*}
		\hb_{{v},\pb}^\tup{0} \coloneqq\UPD(\sf{atp}(\text{$v$},\st(\pb))) \in \RR^d, \quad \text{ and } \quad \pmb\pi_{\text{$v$},\pb}\coloneqq\UPD_{\pmb\pi}(\mathsf{atp}(\text{$v$},\st(\pb))),
	\end{align*}
\end{linenomath}
where $\UPD$ and $\UPD_{\pmb\pi}$ are differentiable, parameterized function, e.g., a neural network. Additional vertex features can be concatenated to the first feature. We use the second feature $\pmb\pi_{v,\pb}$ to select admissible ordered subgraphs for the vertex $v$; see below. Now in each layer $(i+1)$, we update the feature of a vertex $v$ with regard to the $k$-ordered subgraph $\vec{g}$  as
\allowdisplaybreaks
\begin{linenomath}
	\postdisplaypenalty=0
	\begin{align*}
		\hb_{v,\pb}^\tup{i+1}& \coloneqq\UPD^\tup{i+1}\Bigl(
		\hb_{v,\pb}^\tup{i},\AGG^\tup{i+1}\bigl(\oms \hb_{u,\pb}^\tup{i} \mid u\in \square \cms\bigr)\Bigr),
	\end{align*}
\end{linenomath}
where $\square$ is either $N_G(v)$ or $V(G)$. After $T$ such layers, for each vertex $v$, we then learn a joint feature over all $k$-ordered subgraphs, i.e., we apply subgraph aggregation
\allowdisplaybreaks
\begin{linenomath}
	\postdisplaypenalty=0
	\begin{align}\label{vpmpnn}
		\hb_{v}^\tup{T}&\coloneqq\pAGG\bigl(\oms \hb_{v,\pb}^\tup{T}\mid \pb\in G_k \text{ s.t. }\pmb\pi_{v,\pb}\neq \mathbf{0}\cms\bigr).
	\end{align}	
\end{linenomath}
Here, we leverage $\pmb\pi_{v,\pb}\neq \mathbf{0}$ to select a subset of the set of $k$-ordered subgraphs. Finally, analogous to GNNs, we use a $\RO$ layer to compute a single graph feature. Again, $\AGG^\tup{i+1}$, $\UPD^\tup{i+1}$, $\RO$, and $\pAGG$ are differentiable, parameterized functions, e.g., neural networks.

We also define a variation of \PMPNNs{k}, denoted \emph{vertex-subgraph \PMPNNs{k}}, which, unlike~\cref{vpmpnn}, first compute a color for each ordered $k$-vertex subgraph; see~\cref{sec:vertexpebble} for details.

\xhdr{Expressive power of \PMPNNs{k}}\label{expressivity}
In the following, we study the expressive power of \PMPNNs{k}. The first result shows that any possible \PMPNN{k} has at most the expressive power of $\PWLk{k}$ in terms of distinguishing non-isomorphic graphs. Further, \PMPNNs{k} are in principle capable of reaching $\PWLk{k}$'s expressive power. Hence, the  $\PWLk{k}$ upper bounds \PMPNNs{k} ability to represent permutation-invariant functions.

\begin{proposition}\label{equal} For all $k \geq 1$, it holds 
that \text{\PMPNNs{k}} are upper bounded by $\PWLk{k}$ in terms of distinguishing non-isomorphic graphs. Further, there exists a \text{\PMPNN{k}} instance that has exactly the same expressive power as the $\PWLk{k}$.
	\end{proposition}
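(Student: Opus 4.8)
The plan is to adapt the classical equivalence between message-passing GNNs and $\WLk{1}$ of \citet{Mor+2019,Xu+2018b}, carried out over the product domain $V(G)\times G_k$ on which $\PWLk{k}$ operates. Two structural remarks make this clean. First, one fixes the aggregation scope $\square$ (either $N_G(v)$ or $V(G)$) consistently for $\PWLk{k}$ and for the \PMPNN{k} under consideration. Second, in both models the ordered subgraph $\pb$ stays fixed throughout the iterative refinement/message passing: $C_i(\cdot,\pb)$ and $\hb^\tup{i}_{\cdot,\pb}$ only ever aggregate over $u\in\square$, so $\PWLk{k}$ really is $\WLk{1}$ run independently on the $n^k$ recolored graphs indexed by $\pb\in G_k$, followed by the per-vertex aggregation $C(v)$ of \eqref{vp}; the \PMPNN{k} follows exactly this bookkeeping, with $\pAGG$ in place of the final color aggregation and $\RO$ on top.

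\textbf{Upper bound.}
I would show by induction on $i$ that for every \PMPNN{k} instance there is a function $\xi_i$ with $\hb^\tup{i}_{v,\pb}=\xi_i\bigl(C_i(v,\pb)\bigr)$ for all graphs, vertices $v$, and $\pb\in G_k$; equivalently, $C_i(v,\pb)=C_i(v',\pb')$ implies $\hb^\tup{i}_{v,\pb}=\hb^\tup{i}_{v',\pb'}$, where the primed data may come from a second graph $H$. The base case is immediate since $\hb^\tup{0}_{v,\pb}=\UPD(\mathsf{atp}(v,\st(\pb)))$ and $C_0(v,\pb)=\mathsf{atp}(v,\st(\pb))$. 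For the step, injectivity of $\REL$ forces equality of the pair $\bigl(C_i(v,\pb),\oms C_i(u,\pb)\mid u\in\square\cms\bigr)$ on the two sides; applying $\xi_i$ to both components turns this into equality of $\bigl(\hb^\tup{i}_{v,\pb},\oms\hb^\tup{i}_{u,\pb}\mid u\in\square\cms\bigr)$, and since $\AGG^\tup{i+1}$ and $\UPD^\tup{i+1}$ are deterministic functions, $\hb^\tup{i+1}_{v,\pb}=\hb^\tup{i+1}_{v',\pb'}$. Because refinement only merges color classes as $i$ decreases, $C_\infty$ refines $C_T$; hence $\hb^\tup{T}_{v,\pb}$, and also the selector $\pmb\pi_{v,\pb}=\UPD_{\pmb\pi}(\mathsf{atp}(v,\st(\pb)))$, which is a function of $C_0(v,\pb)$, are determined by $C_\infty(v,\pb)$. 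Thus the selected multiset $\oms\hb^\tup{T}_{v,\pb}\mid\pb\in G_k,\ \pmb\pi_{v,\pb}\neq\mathbf{0}\cms$ is a function of $\oms C_\infty(v,\pb)\mid\pb\in G_k\cms$, i.e.\ of $C(v)$; feeding this through $\pAGG$ and then $\RO$ shows the graph feature $\hb_G$ is a function of $\oms C(v)\mid v\in V(G)\cms$. Hence any two graphs separated by some \PMPNN{k} already receive distinct $\PWLk{k}$ colorings.

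\textbf{Matching instance.}
For the converse I would exhibit one \PMPNN{k} turning every implication above into an equivalence. Restricting to graphs of order at most $n$ and to the bounded number of refinement rounds needed for $\PWLk{k}$ to stabilize, every multiset arising is drawn from a countable universe and has bounded size, so the standard injective-encoding lemma of \citet{Xu+2018b} (see also \citet{Mor+2019}) provides: an injective $\UPD$ on atomic types; injective $\AGG^\tup{i+1}$ and $\UPD^\tup{i+1}$ per layer, so that the same induction now also yields $\hb^\tup{i}_{v,\pb}=\hb^\tup{i}_{v',\pb'}\iff C_i(v,\pb)=C_i(v',\pb')$; an injective $\pAGG$; and an injective $\RO$. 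Choosing $\UPD_{\pmb\pi}$ to output a fixed nonzero vector makes $\pmb\pi_{v,\pb}\neq\mathbf{0}$ for every $\pb$, so the aggregation in \eqref{vpmpnn} ranges over all of $G_k$, exactly matching \eqref{vp}. With $T$ at least the $\PWLk{k}$ stabilization time, $\hb^\tup{T}_v$ faithfully encodes $\oms C_\infty(v,\pb)\mid\pb\in G_k\cms$, hence $C(v)$, and $\hb_G$ faithfully encodes $\oms C(v)\mid v\in V(G)\cms$; so this instance separates $G$ and $H$ precisely when $\PWLk{k}$ does. Together with the upper bound and the argument of \citet{Azi+2020}, this also yields that $\PWLk{k}$ bounds which permutation-invariant functions \PMPNNs{k} can represent.

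\textbf{Main obstacle.}
The induction itself is routine; the care goes into (i) invoking the injective-encoding lemma legitimately, which requires the stated boundedness of graph order and number of layers, exactly as in the cited GNN/$\WLk{1}$ characterizations, and (ii) handling the one feature of \PMPNNs{k} absent from ordinary message-passing GNNs, namely the learned selector $\pmb\pi$ together with $\pAGG$: in the upper bound $\pmb\pi$ may be arbitrary as long as it is a function of the atomic type (which it is by definition, since $C_0 = \mathsf{atp}$), whereas in the matching instance the constant nonzero choice is what recovers the full-$G_k$ aggregation underlying \eqref{vp}. A minor bookkeeping point is keeping $\square$ identical on both sides and using that $C_\infty$ refines every $C_T$, which is what lets a fixed-depth \PMPNN{k} be bounded by the \emph{stable} $\PWLk{k}$ coloring.
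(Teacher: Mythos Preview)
Your proposal is correct and follows essentially the same approach as the paper: the upper bound is the inductive claim that $C_i(v,\pb)=C_i(v',\pb')$ forces $\hb^\tup{i}_{v,\pb}=\hb^\tup{i}_{v',\pb'}$ (the paper isolates this as a separate claim, including the equality $\pmb\pi_{v,\pb}=\pmb\pi_{v',\pb'}$ you also note), followed by the multiset/bijection argument for $\pAGG$; the matching instance is obtained by choosing injective update/aggregation maps via \citet{Mor+2019} and a constant nonzero $\UPD_{\pmb\pi}$. Your write-up is in fact slightly more explicit than the paper's in two places---you work directly across two graphs rather than two vertices of one graph, and you spell out the boundedness hypotheses needed for the injective-encoding lemma---but these are presentational refinements, not a different route.
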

\looseness=-1 The following result shows that any possible \PMPNN{k} is upper-bounded by the $\WLk{(k+1)}$ in terms of distinguishing non-isomorphic graphs while the expressive power of \PMPNNs{k} and the $\WLk{k}$ are incomparable. That is, there exist non-isomorphic graphs $\WLk{k}$   cannot distinguish while \PMPNNs{k} can and vice versa.
\begin{proposition}\label{prop:upperbound} For all $k \geq 1$, it holds that $\WLk{(k+1)}$ is \emph{stricly more} expressive than \text{\PMPNNs{k}} and there exist non-isomorphic graphs $\WLk{k}$  cannot distinguish while \PMPNNs{k} can and vice versa.
\end{proposition}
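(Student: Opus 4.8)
The plan is to reduce all three assertions to statements about the colouring $\PWLk{k}$ by invoking \cref{equal}, which gives $\PMPNNs{k}\preceq\PWLk{k}$ with equality attained by some instance. It then suffices to prove: (i) $\PWLk{k}$ is at most as expressive as $\WLk{(k+1)}$; (ii) some non-isomorphic graphs are distinguished by $\WLk{(k+1)}$ but not by $\PWLk{k}$; (iii) some non-isomorphic graphs are distinguished by $\PWLk{k}$ but not by $\WLk{k}$; and (iv) some non-isomorphic graphs are distinguished by $\WLk{k}$ but not by $\PWLk{k}$. Combining (i) and (ii) yields that $\WLk{(k+1)}$ is strictly more expressive than $\PMPNNs{k}$, and (iii)--(iv) yield incomparability with $\WLk{k}$.

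For (i), I would identify each pair $(v,\pb)$, with $\pb=(p_1,\dots,p_k)$, with the $(k{+}1)$-tuple $(v,p_1,\dots,p_k)$, and show by induction on the round $i$ that the $\WLk{(k+1)}$-colouring of $(k{+}1)$-tuples after $i$ rounds refines the $\PWLk{k}$-colouring $C_i$. The base case is an equality, since $C_0(v,\pb)=\mathsf{atp}(v,\st(\pb))$ is exactly the atomic type of $(v,p_1,\dots,p_k)$, i.e.\ the initial $\WLk{(k+1)}$-colour. For the induction step, the (bundled) $\WLk{(k+1)}$-update of $(v,\pb)$ records, grouped per $x\in V(G)$, the colours of all coordinate substitutions: from the substitution in coordinate $1$, namely $(x,p_1,\dots,p_k)$, the induction hypothesis recovers $\oms C_i(u,\pb)\mid u\in V(G)\cms$ (this handles $\square=V(G)$), and the atomic type of the substitution in coordinate $2$, namely $(v,x,p_2,\dots,p_k)$, reveals whether $x\sim v$, so one can restrict to $u\in N_G(v)$ (this handles $\square=N_G(v)$ and is the only place $k\ge 1$ is used). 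Hence the $\WLk{(k+1)}$-colouring refines $C_i$ for all $i$, in particular at stability; passing from stable $(k{+}1)$-tuple colours to the aggregated vertex colours $C(v)=\REL(\oms C_\infty(v,\pb)\mid\pb\in G_k\cms)$ and then to the graph level is routine (using that a stable $\WLk{(k+1)}$-colour of a tuple already determines the multiset of stable colours of all tuples sharing its first entry). Thus $\WLk{(k+1)}$ not distinguishing $G,H$ implies $\PWLk{k}$ does not either.

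For (ii)--(iv) I would use Cai--F\"urer--Immerman graphs $\mathrm{CFI}(\Omega)$ together with their twisted copies, over base graphs $\Omega$ chosen so as to separate the relevant thresholds. The point is that $\PWLk{k}$ behaves on such pairs neither like $\WLk{k}$ nor like $\WLk{(k+1)}$: it is colour refinement run on a graph individualised at an ordered $k$-tuple and then aggregated over all such tuples, i.e.\ $k$ \emph{static} marks plus a single refined coordinate rather than $k$ freely moving pebbles, so whether it cracks $\mathrm{CFI}(\Omega)$ depends on finer structure of $\Omega$ than its treewidth. Concretely: for (iv) take $\Omega$ clique-like so that the $(k{+}1)$-pebble bijective game is won by Spoiler (hence $\WLk{k}$ distinguishes) but whose gadget resists every static $k$-individualisation followed by refinement; for (ii) take $\Omega$ cracked only by the $(k{+}2)$-pebble game (hence $\WLk{(k+1)}$ distinguishes) but again resistant to static $k$-individualisation; for (iii) take $\Omega$ whose gadget is cracked by a single well-chosen $k$-individualisation (hence $\PWLk{k}$ distinguishes) but not by $k{+}1$ freely moving pebbles. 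For small $k$ explicit examples are also available: e.g.\ for (iii) with $k=1$, $C_6$ versus $2C_3$, which $\WLk{1}$ cannot distinguish but $\PWLk{1}$ can, since individualising a single vertex makes colour refinement separate them.

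I expect the main obstacle to be the indistinguishability halves of (ii)--(iv): these require winning Duplicator strategies in the combinatorial game that characterises $\PWLk{k}$-equivalence---Spoiler first picks an ordered $k$-tuple (the marks) in one graph, Duplicator answers with one in the other, and then the bijective $1$-pebble game is played simultaneously on all the resulting individualised graphs, with the colours aggregated per vertex at the end. This is a nonstandard game, and transferring a Duplicator strategy through the CFI construction, while simultaneously verifying that the three thresholds (for $\WLk{k}$, $\PWLk{k}$, and $\WLk{(k+1)}$) are genuinely separated for the chosen base graphs, is the technical heart of the argument. By contrast, the upper bound (i) and the distinguishing halves of (ii)--(iv) are comparatively routine.
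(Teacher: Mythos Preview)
Your overall plan---reduce to $\PWLk{k}$ via \cref{equal} and then establish (i) the $\WLk{(k+1)}$ upper bound, (ii) its strictness, and (iii)/(iv) incomparability with $\WLk{k}$---matches the paper's structure. Your direct induction for (i), identifying $(v,\pb)$ with a $(k{+}1)$-tuple and arguing that the folklore $\WLk{(k+1)}$-update simultaneously sees all first-coordinate substitutions (yielding the $\PWLk{k}$ multiset) and adjacency to $v$ (via the atomic type of a second-coordinate substitution), is correct and more self-contained than the paper's route, which simply observes that \PMPNNs{k} form a restricted class of $(k{+}1)$-\MPNNs and invokes the general bound of \citet{geerts2022}.

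Where your proposal stays schematic, the paper is concrete, and the constructions are not the generic ``CFI over a well-chosen $\Omega$'' that you sketch. For (ii) the paper takes F\"urer's CFI construction over a $(k{+}1)\times n$ grid with $n\gg k$ and proves $\PWLk{k}$-indistinguishability via a \emph{shielding} argument: for any $k$-tuple $\boldsymbol{u}$ and any further vertex $u$, an automorphism of the twisted graph fixing $u$ moves the twist onto a cycle of the base grid at distance at least two from all of $(\boldsymbol{u},u)$; the $k$ fixed pebbles are then inert and the two movable pebbles of colour refinement cannot trap the twist. Establishing that this association between individualisations is a bijection is the technical core. For (iii) the paper uses the CFI gadgets $G_{k+1},H_{k+1}$ over $K_{k+2}$ and the \emph{colourful distance-two-clique} criterion: individualising $k$ clique vertices in $G_{k+1}$ and running colour refinement exposes the full $(k{+}2)$-clique, which $H_{k+1}$ provably lacks. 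Your $C_6$ versus $2C_3$ example is a correct witness for (iii) at $k=1$.

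Regarding (iv): the paper's appendix supplies proofs only for (i)--(iii), so there is nothing to compare your sketch against. Be aware, however, that your suggested ``$\Omega$ clique-like'' for (iv) does not work as stated: CFI over $K_{k+1}$ is distinguished by $\WLk{k}$, but the same distance-two-clique argument shows that already $\PWLk{(k-1)}$, hence $\PWLk{k}$, distinguishes it too. A genuine witness for (iv) would have to exploit pebble \emph{mobility} in a way that $k$ static individualisations plus colour refinement cannot emulate, and neither your sketch nor the paper pins such a construction down.
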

Finally, the following results shows that increasing the size of the subgraphs always leads to a strictly more expressive \PMPNNs{k}. 
\begin{theorem} 
For all $k \geq 1$, it holds that \PMPNNs{(k+1)} is \emph{strictly more}
expressive than \PMPNNs{k}.
\end{theorem}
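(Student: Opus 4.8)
The plan is to reduce the statement to the combinatorial algorithm via \cref{equal} and then combine a monotonicity argument with \cref{prop:upperbound}. By \cref{equal}, the distinguishing power of the class of all \PMPNNs{j} equals that of $\PWLk{j}$ (each \PMPNN{j} is upper bounded by $\PWLk{j}$, and some instance attains it). Hence it suffices to prove (i) that $\PWLk{(k+1)}$ is at least as expressive as $\PWLk{k}$ in distinguishing non-isomorphic graphs, and (ii) that there are non-isomorphic graphs $G,H$ that some \PMPNN{(k+1)} distinguishes but that no \PMPNN{k} distinguishes.

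For (i) I would show that the stable coloring produced by $\PWLk{(k+1)}$ refines the one produced by $\PWLk{k}$. For an ordered $(k+1)$-vertex subgraph $\pb = G[(v_1,\dots,v_{k+1})]$ write $\pb^{-}$ for the ordered $k$-vertex subgraph $G[(v_1,\dots,v_k)]$; the point is that marking $v_1,\dots,v_{k+1}$ differs from marking only $v_1,\dots,v_k$ by a single extra label, and color refinement only gets finer under added labels. I would make this precise by induction on the iteration $i$: the color $C_i(v,\pb)$ computed by $\PWLk{(k+1)}$ determines the pair $\bigl(C_0(v,\pb),\,C_i(v,\pb^{-})\bigr)$, where $C_i(\cdot,\pb^{-})$ is the color computed by $\PWLk{k}$. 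The base case uses that the atomic type $\mathsf{atp}(v,\st(\pb))$ of a tuple determines the atomic type of any subtuple, in particular $\mathsf{atp}(v,\st(\pb^{-}))$; the inductive step uses that $N_G(v)$ (and $V(G)$) are unaffected by the marking, so the neighbour-color multiset at level $k+1$ determines the one at level $k$ by the induction hypothesis, together with injectivity of $\REL$ to keep $C_0(v,\pb)$ available. This goes through verbatim for either choice of $\square$. Passing to stable colorings and using that $\pb\mapsto\pb^{-}$ is $|V(G)|$-to-one from $G_{k+1}$ onto $G_k$, the multiset $\oms C_{\infty}(v,\pb)\mid\pb\in G_{k+1}\cms$ of $\PWLk{(k+1)}$ determines $\oms C_{\infty}(v,\qb)\mid\qb\in G_k\cms$ of $\PWLk{k}$; hence the aggregated vertex color of \cref{vp} for $\PWLk{(k+1)}$ determines that for $\PWLk{k}$, and any pair of graphs separated by $\PWLk{k}$ is separated by $\PWLk{(k+1)}$.

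For (ii) no new construction is needed: I would invoke \cref{prop:upperbound} at two successive levels. Applied at level $k+1$ it provides non-isomorphic graphs $G,H$ that some \PMPNN{(k+1)} distinguishes while $\WLk{(k+1)}$ does not. Applied at level $k$ it states that $\WLk{(k+1)}$ is strictly more expressive than every \PMPNN{k}; in particular any pair that $\WLk{(k+1)}$ fails to distinguish is not distinguished by any \PMPNN{k}, so no \PMPNN{k} distinguishes $G$ from $H$. Together with (i) (lifted to the neural setting via \cref{equal}), this yields that \PMPNNs{(k+1)} is strictly more expressive than \PMPNNs{k}. I expect the only genuinely new work to be part (i), and the main care there is to state the inductive invariant as a claim about the \emph{entire} coloring (so that it survives the $\REL$-relabelings and the final subgraph aggregation, including the run of the two graphs in parallel) and to check the multiplicity bookkeeping in the aggregation step; neither point is conceptually difficult. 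Part (ii) is pure bookkeeping with \cref{prop:upperbound}.
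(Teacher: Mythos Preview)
Your proposal is correct and aligns closely with the paper's argument. The paper also reduces via \cref{equal} to the $\PWLk{k}$ level and establishes strictness by combining (a) the upper bound of $\PWLk{k}$ by $\WLk{(k+1)}$ with (b) the existence of $\WLk{(k+1)}$-indistinguishable graphs that $\PWLk{(k+1)}$ separates; this is exactly your part~(ii), just phrased in terms of the underlying separation lemmas rather than \cref{prop:upperbound} directly.

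The one genuine difference is your part~(i). The paper does not give a self-contained monotonicity proof but instead defers to the higher-order message-passing framework of \citet{geerts2022}, noting that \PMPNNs{k} form a restricted subclass of $(k{+}1)$-\MPNNs. Your projection-and-induction argument (drop the last coordinate, use that $\mathsf{atp}$ of a tuple determines $\mathsf{atp}$ of any subtuple, and track the uniform $|V(G)|$-to-one multiplicity through the subgraph aggregation) is more elementary and entirely internal to the paper's definitions. It is worth making explicit the harmless assumption $|V(G)|=|V(H)|$ (otherwise the graphs are trivially distinguished) when you cancel the multiplicity factor; beyond that, your monotonicity argument is cleaner than what the paper provides.
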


\xhdr{Subgraph-enhanced GNNs captured by \PMPNNs{k}}
To exemplify the power and generality of \PMPNNs{k}, we show how \PMPNNs{k} cover most subgraph-enhanced GNNs; see~\cref{sec:sgnns} for a thorough overview. We say that \PMPNNs{k} \new{capture} a subgraph-enhanced GNN $G$ if there exists a \PMPNN{k} instance that is at least as expressive as $G$. 

The first results shows that \PMPNNs{k} capture $k$-marked GNNs (\mgnns{k})~\citep{Pap+2022}  and $k$-reconstruction GNNs (\recon{k})~\citep{Cot+2021}. For both approaches, the sets of $k$ vertices to be marked or deleted correspond to unordered $k$-vertex subgraphs. It then suffices to ensure that the update and aggregation functions in the \PMPNNs{k} treat the vertices in the selected $k$-vertex subgraphs as being marked or deleted.
\begin{proposition}\label{mark}
For $k \geq 1$, \PMPNNs{k} capture \mgnns{k} and vertex-subgraph \PMPNNs{k} capture \recon{k}. 
\end{proposition}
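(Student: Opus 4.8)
The plan is to argue constructively for both halves: we would exhibit one concrete instance of a (vertex-subgraph) \PMPNN{k} that recomputes, up to an injective re-encoding, the output of any \mgnns{k}, resp.\ \recon{k}, on every input graph. Two simple observations drive the construction. First, an \emph{unordered} $k$-vertex subgraph is an orbit of ordered ones in $G_k$, and from $\mathsf{atp}(v,\st(\pb))$ one can recover both whether $v$ lies in $\st(\pb)$ and at which positions; so if we choose the initialisation $\UPD$ (and then all $\UPD^\tup{i}$, $\AGG^\tup{i}$) to depend on $\st(\pb)$ only through its underlying \emph{set} $S$ of vertices, every quantity the \PMPNN{k} computes becomes invariant under permutations of $\st(\pb)$, and each $k$-subset $S$ contributes the same value from all $k!$ of its orderings. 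Second, since whether $\st(\pb)$ has a repeated entry is visible in $\mathsf{atp}(v,\st(\pb))$, we let $\UPD_{\pmb\pi}$ output $\mathbf{0}$ exactly on tuples with repeats, so the subgraph aggregation of \cref{vpmpnn} effectively ranges only over (orderings of) genuine $k$-element subsets.

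\textbf{Capturing \mgnns{k}.} Given a target \mgnns{k} with $T'$ message-passing rounds, we would build a \PMPNN{k} with $T=T'+1$ layers. At initialisation we let $\UPD(\mathsf{atp}(v,\st(\pb)))$ encode $\bigl(l(v),\ \ones[v\in S]\bigr)$, i.e.\ the initial feature of $v$ in the marked graph $G^{S}$ (order-invariant, as required). The first $T'$ layers use $\square=N_G(v)$ and compute the $\WLk{1}$ colours of $G^{S}$ (which upper-bounds any $T'$-round GNN run on $G^S$), so $\hb_{v,\pb}^\tup{T'}$ encodes the colour of $v$ in $G^{S}$. The final layer uses $\square=V(G)$ with $\UPD^\tup{T}$ ignoring its first argument and $\AGG^\tup{T}$ an injective multiset encoder: it turns $\oms \hb_{u,\pb}^\tup{T'}\mid u\in V(G)\cms$ into the graph-level readout $\hb_{G^{S}}$, so $\hb_{v,\pb}^\tup{T}=\hb_{G^{S}}$ \emph{for every} $v$. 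Then $\hb_{v}^\tup{T}=\pAGG\bigl(\oms\hb_{G^{S}}\mid|S|=k\cms\bigr)$, with each $\hb_{G^S}$ of multiplicity $k!$, is independent of $v$; choosing $\pAGG$ and $\RO$ injective, $\hb_{G}$ becomes an injective function of $\oms\hb_{G^{S}}\mid|S|=k\cms$ (the fixed $k!$ blow-up is invertible), which is exactly what the most expressive \mgnns{k} outputs. Hence this \PMPNN{k} distinguishes every graph pair that any \mgnns{k} distinguishes.

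\textbf{Capturing \recon{k}.} Here we would use the \emph{vertex-subgraph} variant (\cref{sec:vertexpebble}): it first aggregates the vertex features of each fixed ordered subgraph and only then reads out over subgraphs, matching the nesting of \recon{k}, which reads out over the vertices of each $G-S$ before aggregating over $S$. We initialise $\hb_{v,\pb}^\tup{0}$ to a dedicated ``deleted'' symbol when $v\in S$ (again read off from $\mathsf{atp}(v,\st(\pb))$) and to the feature $l(v)$ otherwise, and we make every $\AGG^\tup{i}$ discard ``deleted'' neighbours and every $\UPD^\tup{i}$ keep ``deleted'' vertices deleted; then surviving vertices run $\WLk{1}$ exactly on $G-S$, so $\hb_{v,\pb}^\tup{T'}$ encodes the colour of $v$ in $G-S$ for $v\notin S$. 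Aggregating over $v\in V(G)$ yields the multiset of $G-S$-colours together with exactly $k$ copies of the ``deleted'' symbol, which we strip off before an injective encoding to get $\hb_{\pb}=\hb_{G-S}$; filtering out tuples with repeats and reading out over the remaining subgraphs gives an injective function of $\oms\hb_{G-S}\mid|S|=k\cms$, precisely the representation of the most expressive \recon{k}.

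\textbf{Expected main obstacle.} The delicate point is the \emph{order} in which the vertex- and subgraph-aggregations are composed. The plain \PMPNN{k} readout (\cref{vpmpnn} then $\RO$) fixes a vertex, aggregates over subgraphs, then over vertices, whereas a marked or reconstruction GNN does the opposite; a naive simulation that only propagates each vertex's own colour in $G^{S}$ (resp.\ $G-S$) produces a ``multiset over vertices of (multisets over $S$)'', which need \emph{not} determine the ``multiset over $S$ of (multisets over vertices)'' that the target outputs. For \mgnns{k} the fix is the extra $\square=V(G)$ layer, which lets each vertex absorb the graph-level readout $\hb_{G^{S}}$ and makes the feature $v$-independent before the outer aggregation; for \recon{k} the fix is simply to use the vertex-subgraph variant, whose aggregation order already matches. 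The remaining verifications---that membership/deletion is expressible through atomic types, that $\AGG$ can be made to ignore sink symbols, and that the fixed $k!$, resp.\ $k$, multiplicities are harmless---are routine, and by \cref{equal} (and its straightforward analogues for \mgnns{k} and \recon{k}) the constructions pass from the combinatorial to the neural level.
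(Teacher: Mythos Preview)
Your argument for \recon{k} is correct and coincides with the paper's: use the vertex-subgraph variant, put a ``deleted'' tag in the initial feature via $\mathsf{atp}(v,\st(\pb))$, let the aggregation ignore deleted vertices, aggregate over $v$ first to get a representation of $G-S$, then over $\pb$.

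For \mgnns{k}, however, your ``expected main obstacle'' is based on a misreading of the target architecture. In the paper (following \citet{Pap+2022}), a $k$-marked GNN first runs message passing on each marked graph, then \emph{for each vertex $v$} aggregates the features $\hb_{v,M}$ over all markings $M$, and only then aggregates over vertices. That is exactly the native order of the plain \PMPNN{k} readout: first $\pAGG$ over $\pb$ (per vertex), then $\RO$ over $v$. So no reordering trick is needed. The paper's construction is therefore the straightforward one: set $\hb_{v,\pb}^\tup{0}=(l(v),\ones_{v\in\pb})$, let $\pmb\pi_{v,\pb}\equiv 1$, run the same message passing (the marking indicator lets $\AGG^\tup{i}$ treat marked and unmarked neighbours differently), and then apply the built-in $\pAGG$-then-$\RO$.

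The extra $\square=V(G)$ layer you insert is not just unnecessary---it changes the invariant you compute. After that layer your $\hb_{v,\pb}^\tup{T}$ is the \emph{graph} readout $\hb_{G^{S}}$, independent of $v$, so your final output encodes $\oms\, \oms C_{v,S}\mid v\cms \,\mid S\cms$. But \mgnns{k} (as defined here) encodes the transposed double multiset $\oms\, \oms C_{v,S}\mid S\cms \,\mid v\cms$, and these two do not determine one another in general (two matrices can have the same multiset of columns but different multisets of rows). Your assertion that ``this is exactly what the most expressive \mgnns{k} outputs'' is therefore unjustified, and the conclusion that your specific \PMPNN{k} captures \mgnns{k} does not follow. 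The fix is simply to drop the extra layer and use the \PMPNN{k} aggregation order directly, as the paper does.
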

Further, \PMPNNs{1} capture identity-aware GNNs (\idgnn{1})~\citep{You+2021}, GNN As Kernel (\kernel)~\citep{Zha+2021b}, and nested GNNs (\nested)~\citep{Zha+2021}. Intuitively, in these approaches GNNs are used locally around each vertex. It thus suffices to ensure that the update and aggregation functions in the \PMPNNs{1} use the selected single vertex subgraph to only pass messages locally.
\begin{proposition}\label{identity}
\PMPNNs{1} capture \idgnn{1}, \kernel, and \nested.
\end{proposition}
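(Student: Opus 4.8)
The plan is to construct, for each of the three architectures, a single \PMPNN{1} instance whose induced vertex colouring and graph embedding are at least as discriminating, by a uniform simulation argument in the spirit of \cref{mark}. The starting observation is that for $k=1$ an ordered $1$-vertex subgraph $\pb\in G_1$ is just a single distinguished (``rooted'') vertex $w=\st(\pb)$, and the initial feature $\hb_{v,(w)}^\tup{0}$ — built from $\mathsf{atp}(v,w)$ together with the concatenated input features — already tells the $(w)$-copy whether $v=w$, whether $vw\in E(G)$, and $v$'s label. Keeping the root-bit inside the running feature, every later $\AGG^\tup{i+1}/\UPD^\tup{i+1}$ may therefore treat the rooted vertex specially, and a parallel component of the feature may accumulate ``distance to the root so far'' by propagating the root-mark; since $\UPD$, $\AGG^\tup{i+1}$, $\UPD^\tup{i+1}$, $\pAGG$ and $\RO$ may be arbitrary parameterized functions, we instantiate them to coincide with the corresponding maps of the target architecture.

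This already lets each $(w)$-copy reproduce the common core of all three architectures — running a base GNN on the rooted subgraph around $w$ (its $K$-hop ego-subgraph), possibly with heterogeneous updates: purely local message passing with $\square=N_G(v)$ in which each vertex drops messages from neighbours whose tracked distance to $w$ exceeds $K$ reproduces the inner GNN on the ego-subgraph, and conditioning updates on the root-bit and the tracked distance reproduces heterogeneous message passing. It then remains to match the different ways the architectures turn these inner features into a vertex colouring. (i) For \idgnn{1}, one reads off the root's own final embedding, obtained by setting $\pmb\pi_{v,(w)}\neq\mathbf 0$ exactly when $v=w$ (expressible from $\mathsf{atp}(v,w)$), so that \cref{vpmpnn} yields $\hb_v^\tup{T}=\hb_{v,(v)}^\tup{T}$. (ii) For \nested, one additionally pools the inner features over the root's ego-subgraph and uses that as the root's representation; we realize this pooling with a few extra local layers in which the root gathers the aggregate of the features over its radius-$K$ ball (covered after $K$ local rounds), and again select $v=w$. (iii) For \kernel, one forms, per vertex, a tuple of a ``centroid'' ($\hb_{w,(w)}^\tup{T}$), a ``subgraph'' encoding (the ego-subgraph pooling of (ii)), and a ``context'' encoding (the pooled embedding of $v$ over all ego-subgraphs containing $v$), and the last of these is exactly what \cref{vpmpnn} computes once the features of vertices outside the relevant ego-subgraph are set to a designated null token; a universal $\pAGG$ assembles the three parts and a universal $\RO$ reproduces the graph-level output.

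The step I expect to be the main obstacle is reconciling the single subgraph-aggregation step \cref{vpmpnn} of \PMPNN{1} — which for a fixed vertex $v$ pools over the rooted subgraphs containing it — with the ``for a fixed root, pool over the vertices of its subgraph'' operation used by \nested and \kernel, which on its own would call for the vertex-subgraph variant that the statement deliberately does not use; the resolution is to fold those per-root poolings into the final message-passing layers of the corresponding copy, which is legitimate precisely because a radius-$K$ ball is reached after $K$ purely local rounds, so no global aggregation is needed. A closely related subtlety is that the selection mask $\pmb\pi_{v,(w)}$ depends only on $\mathsf{atp}(v,w)$, hence can express ``$v=w$'' but not ``$v$ lies in the $K$-hop ego-subgraph of $w$'' for $K\geq 2$; this is why ego-subgraph membership must instead be determined inside message passing and a designated null token absorbed by $\pAGG$. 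Finally, \kernel (and some variants of \nested) stack several such subgraph-GNN layers, for which one argues that a single \PMPNN{1} with sufficiently many layers $T$ and a universal $\RO$ is at least as expressive as any fixed stack, or simply restricts to the single-layer instantiation used in the original expressivity analyses. Everything else is routine: since our instance computes (a function determining) the target architecture's output, it distinguishes at least as many non-isomorphic graphs, which is what ``capture'' requires.
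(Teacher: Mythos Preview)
Your high-level strategy coincides with the paper's: identify the $1$-vertex subgraph $g$ with a root, carry $\ones_{v=g}$ and a distance/ego-net indicator $\mu_{v,g}^{(h)}$ in the features (the paper first spends $h$ local layers building $\mu_{v,g}^{(h)}$ from $\mathsf{atp}(v,g)$), and set $\pmb\pi_{v,g}=\ones_{v=g}$ so that \cref{vpmpnn} returns the root's own representation. For \idgnn{1} this is exactly the paper's argument.

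The one substantive divergence is how you implement the per-root pooling needed by \nested (and hence \kernel). The paper uses the global option $\square=V(G)$, which the \PMPNN{k} framework explicitly allows, in a single extra layer:
\[
\hb_{v,g}\coloneqq \AGG^{\tup{T+1}}\bigl(\oms \hb_{u,g}^{\tup{T}}\mid u\in V(G)\cms\bigr),
\]
with the aggregation ignoring $u$ for which $\mu_{u,g}^{(h)}=0$; this reproduces the ego-net readout exactly. You instead insist on $\square=N_G(v)$ throughout and argue that ``a radius-$K$ ball is reached after $K$ purely local rounds''. That is a reachability statement, not an aggregation statement: $K$ local rounds compute the depth-$K$ unfolding-tree colour at the root, which encodes walk multiplicities rather than the flat multiset $\oms \hb_{u,g}^{\tup{T}}\mid u\in N_G^{h}(g)\cms$ that \nested pools. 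To actually recover that multiset locally you would need an explicit routing scheme (e.g.\ each $u$ divides its contribution by the number of strictly closer neighbours, using the tracked distance), and you do not provide one. Since the framework already offers $\square=V(G)$, the cleanest repair is simply to use it for this one layer, as the paper does; your self-imposed restriction is unnecessary and is where the argument presently has a gap.

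A smaller point: your description of \kernel as a (centroid, subgraph, context) triple, with the context coming from \cref{vpmpnn} over all roots, does not match the paper's formalisation of \kernel as \emph{iterated} \nested, where after each round the newly obtained $\hb_v$ is fed back as the initial feature and message passing over ego-nets restarts. The paper handles this by observing that each round is again a \PMPNN{1}-expressible computation (same recipe as for \nested) and can therefore be stacked within a single \PMPNN{1} with more layers; your ``or simply restrict to the single-layer instantiation'' is weaker than what the paper claims.
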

Finally, \PMPNNs{k} capture the \dsgnn with the vertex-deleted policy~\citep{Bev+2021}.\footnote{We note here that it is an open question if vertex-subgraph \PMPNNs{k} also capture the more general \dssgnn~\citep{Bev+2021}.}
\begin{proposition}\label{esan}
Vertex-subgraph \PMPNNs{k} capture \dsgnn with the $k$-vertex-deleted policy.
\end{proposition}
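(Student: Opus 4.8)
The plan is to exhibit an explicit vertex-subgraph \PMPNN{k} instance that reproduces, on every graph, the computation performed by \dsgnn under the $k$-vertex-deleted policy, which immediately gives ``at least as expressive''. Recall that this \dsgnn variant associates with $G$ the bag of induced subgraphs $\mathcal{B}_G \coloneqq \oms\, G[V(G)\setminus S] \mid S \subseteq V(G),\ |S| = k \,\cms$, runs a message-passing GNN independently on each member (vertices keeping their original label-consistent features), reads out one vector per subgraph, aggregates these vectors over $\mathcal{B}_G$, and applies a final readout. A vertex-subgraph \PMPNN{k} instead performs message passing inside each ordered $k$-vertex subgraph $\pb \in G_k$ over the channels $\hb_{v,\pb}$, then aggregates over $v$ to get a feature for $\pb$, then aggregates over those $\pb$ not pruned by the selection features $\pmb{\pi}$, and applies a readout. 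Since all maps involved may be arbitrary neural networks, it suffices to realize the correct combinatorial behaviour on the (finite) relevant domains.

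The key point is that the initial feature $\hb_{v,\pb}^{(0)} = \UPD(\mathsf{atp}(v,\st(\pb)))$, with the original feature $\hb_v^{(0)}$ concatenated as allowed, exposes two flags: $\mathrm{del}(v,\pb) \in \{0,1\}$, whether $v$ occurs among the entries of $\st(\pb)$, and $\mathrm{rep}(\pb) \in \{0,1\}$, whether $\st(\pb)$ has a repeated entry; both are recoverable from the equality pattern recorded by $\mathsf{atp}(v,\st(\pb))$, and $\mathrm{rep}$ does not depend on $v$. We pick $\UPD_{\pmb{\pi}}$ so that $\pmb{\pi}_{v,\pb} = \mathbf{0}$ precisely when $\mathrm{rep}(\pb) = 1$, discarding every tuple with a repeated entry at the subgraph-aggregation step; the surviving tuples are exactly the ordered $k$-tuples of distinct vertices, and each unordered $k$-set $S$ arises from exactly $k!$ of them. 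We carry $\mathrm{del}$ and $\mathrm{rep}$ through all layers unchanged and, on the tuples with $\mathrm{rep}(\pb) = 0$, implement in $\AGG^{(i+1)}, \UPD^{(i+1)}$ the GNN layer of \dsgnn with one change: a vertex with $\mathrm{del}(v,\pb) = 1$ is frozen to a fixed ``deleted'' sentinel, while a vertex with $\mathrm{del}(v,\pb) = 0$, when aggregating over $N_G(v)$, ignores the neighbours carrying the sentinel (each neighbour feature carries its own $\mathrm{del}$ flag, so such an $\AGG$ exists). By induction on layers, for a distinct-entry tuple $\pb$ the features $\hb_{v,\pb}^{(T)}$ at the non-deleted vertices equal the GNN features \dsgnn computes on $G[V(G)\setminus \st(\pb)]$, independently of the order within $\st(\pb)$.

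Taking the vertex-aggregation to ignore sentinels and otherwise coincide with \dsgnn's per-subgraph readout, we obtain for each surviving tuple $\pb$ exactly the \dsgnn per-subgraph vector of $G[V(G)\setminus \st(\pb)]$; aggregating over the surviving tuples therefore yields the multiset containing $k!$ copies of every element of $\mathcal{B}_G$. Choosing $\pAGG$ (or the final readout) injective on finite multisets, this value determines $\mathcal{B}_G$ (divide all multiplicities by $k!$) and hence \dsgnn's output on $G$. Thus our instance distinguishes every pair of graphs that \dsgnn distinguishes, which is the claim. One may alternatively append the flag $\mathrm{rep}(\pb)$ as an extra coordinate of the subgraph feature, so that distinct-entry and repeated-entry features occupy disjoint ranges, and skip the $\pmb{\pi}$-pruning altogether.

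The main obstacle is the faithfulness of the ``restricted'' message passing: one must check that using only the locally available, feature-carried flag $\mathrm{del}$ --- freezing a deleted vertex to a sentinel and skipping it in neighbourhood aggregations --- reproduces message passing on the induced subgraph $G[V(G)\setminus\st(\pb)]$ exactly. The rest is bookkeeping: that $\mathsf{atp}(v,\st(\pb))$ indeed exposes $\mathrm{del}$ and $\mathrm{rep}$, and that symmetrizing over the $k!$ orderings of each $S$ while pruning (or tagging) the repeated-entry tuples leaves a multiset that is an injective image of \dsgnn's bag $\mathcal{B}_G$.
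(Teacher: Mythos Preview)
Your proof is correct and follows essentially the same route as the paper: simulate vertex deletion by carrying a membership/marking flag in the features (the paper uses $\ones_{v\in\pb}$, you call it $\mathrm{del}$) and having the aggregation functions ignore marked vertices, then aggregate per-subgraph and over subgraphs. The only cosmetic differences are that the paper argues via the \dswl upper bound rather than directly simulating a given \dsgnn, and it takes $\pmb\pi\equiv 1$ (keeping all tuples, including those with repeats, and relying on injectivity), whereas you prune repeated-entry tuples via $\pmb\pi$ and make the $k!$-multiplicity bookkeeping explicit.
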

We note that the above result can be further extended to accommodate the edge-deleted and ego-networks policy from~\citet{Bev+2021}; see~\cref{sec:sgnns}.

Importantly, by viewing existing subgraph-enhanced GNNs as \PMPNNs{k} we immediately gain insights into their expressive power. Previous results primarily focused on showing more expressivity than $\WLk{1}$.

\section{Data-driven Subgraph-enhanced GNNs}
In the above section, we thoroughly investigated the expressive power of subgraph-enhanced GNNs. Specifically, we showed that they are strictly limited by the $\WLk{(k+1)}$ and that they can distinguish graphs which are not distinguishable by $\WLk{k}$. As indicated by~\cref{equal} and to reach maximal expressive power, however, we need to consider all possible ordered subgraphs, resulting in an exponential running time. Hence, in this section, we leverage the \imle framework~\citep{Nie+2021}, to sample ordered subgraphs in a data-driven fashion. We first address the problem of learning the parameters of a probability distribution over ordered subgraphs using a GNN. Secondly, we show how to approximately sample from this intractable distribution. Subsequently, these subgraphs are used within a \PMPNN{k} to compute a graph representation. Finally, we propose a gradient estimation scheme that allows us to use backpropagation in the resulting discrete-continuous architecture.

\xhdr{Parameterizing probability distributions over subgraphs}
Contrary to existing approaches, which often consider all possible subgraphs or sample a fraction of subgraphs uniformly at random,  our method maintains a probability distribution over (ordered) subgraphs. Let $G$ be a graph where each vertex $v$ has an initial feature $\hb_{v}^\tup{0}$, which we stack row-wise over all vertices into the feature matrix $\mathbf{H} \in \bbR^{n \times d}$. Further, let $h_{\vec{W}_1} \colon \mathcal{G} \times \bbR^{n \times d} \to \mathbb{R}^{m \times n}$ be a permutation-equivariant function, e.g., a message-passing GNN, parameterized by $\vec{W}_1$, called \new{upstream model}, mapping a graph $G$ and its initial features $\mathbf{H}$ to a parameter matrix 
\begin{equation*}
	\btheta  \coloneqq h_{\vec{W}_1}(G, \mathbf{H}) \in  \mathbb{R}^{m \times n}.
\end{equation*}
Intuitively, each parameter $\theta_{ij}$ is an unnormalized prior probability of vertex $j$ being part of the $i$th sampled subgraph of $G$. Let $\btheta_i\coloneqq (\theta_{i1},\ldots,\theta_{in})$ for $i\in [m]$. We use these to parameterize $m$ probability distributions $p(\bz; \btheta_{i})$, for $i \in [m]$, over vector encodings of ordered $k$-vertex subgraphs of $G$, i.e.,
\begin{equation} \label{def-constrained-exp-family}
p(\bz; \btheta_{i})  \coloneqq \left\lbrace
\begin{array}{ll}
     \exp\left(\langle\bz,\btheta_{i}\rangle - A(\btheta_{i}) \right) & \text{if } \bz \in \mathcal{Z}, \\
     0 & \text{otherwise,}
\end{array}
\right.
\end{equation}
where $\langle\cdot, \cdot\rangle$ is the standard inner product and $A(\btheta_{i})$ is the \new{log-partition function} defined as $A(\btheta_{i}) \coloneqq \log\left(\sum_{\bz \in \mathcal{Z}} \exp \left(\langle\bz,\btheta_{i}\rangle  \right)\right)$. Furthermore, for a distribution over \emph{unordered} $k$-vertex subgraphs, $\mathcal{Z}$ is the set of all binary $n$-component vectors with exactly $k$ non-zero entries indicating which vertices are part of a subgraph of $G$. Hence, there is a bijection between $\mathcal{Z}$ and the set of unordered $k$-vertex subgraphs of $G$.
For a distribution over \emph{ordered} $k$-vertex subgraphs, the set $\mathcal{Z}$ is the set of all vectors in $[k]^n$ with $k$ non-zero entries. For each non-zero entry $\bz_i$ for $\bz \in \mathcal{Z}$ it holds that $\bz_i = j$ if and only if vertex $i$ has \new{rank} $k + 1 - j$ in the ordered subgraph, encoding the position in the ordered graph. This encoding is required for the gradient computation we perform later. For instance, in an ordered $5$-vertex subgraph, if a node has rank $1$ but should have rank $5$ to obtain a lower loss, then the gradient of the downstream loss is proportional to $5 - 1 = 4$. Similarly, if a node $i$ is not part of the ordered subgraph, that is, $\bz_i = 0$ but should be in position $1$ of the ordered subgraph, then the gradient of a downstream loss is proportional to $0 - 5$. For any $i,j \in [k]$ with $i \neq j$ we have that $\bz_i \neq \bz_j$. Hence, again, there is a bijection between $\mathcal{Z}$ and the set of ordered $k$-vertex subgraphs of $G$.

\xhdr{Efficient approximate sampling of subgraphs}
Computing the log-partition function and sampling exactly from the probability distribution in \cref{def-constrained-exp-family} is intractable for both ordered and unordered graphs. Since it is tractable, however, to compute a configuration with a highest probability, a maximum a posteriori (MAP) configuration $\bz^{*}(\btheta_{i})$, we can use perturb-and-MAP~\citep{Papandreou:2011,Nie+2021} to sample approximately. For unordered graphs, determining the top-$k$ values in $\btheta_i$ suffices, while for ordered graphs, we additionally require their rank.
Therefore, the worst-case running time of computing $\bz^{*}(\btheta_{i})$ for ordered graphs of size $k$ is in $O(n + k\log k)$. That is, we first use a selection algorithm to find the $k$th largest element $E$ in the list of weights, taking time $\cO(n)$, e.g., using the Quickselect algorithm. Now, we go through the list again and select all entries larger to $E$, taking time $\cO(n)$. Finally, we sort the $k$ values.

Now, to use perturb-and-MAP to \emph{approximately} sample the $i$-th ordered $k$-vertex subgraph $\pb_i$ from the above  probability distributions, we compute
\begin{equation*} \label{eq:hybridm_2}
 \pb_i := \mathtt{adj}\left(\bz^{*}(\btheta_{i} + \bepsilon_{i})\right) \quad \mbox{ with } \quad \bepsilon_i \sim \rho(\bepsilon),
\end{equation*}
where $\rho(\bepsilon)$ is a noise distribution such as the Gumbel distribution and $\mathtt{adj}$ converts the above vector encoding of the (ordered) subgraph to an $n \times n$ adjacency matrix 
as follows. The $j$th row or column encodes the vertex of the ordered subgraph with rank $j$ and its incident edges within the ordered subgraph. All other entries are set to $0$, i.e., they are masked out.
We therefore sample a multiset of (ordered) subgraphs $S \coloneqq \oms \bg_1, ..., \bg_m \cms \subseteq G_k$, which act 
  as the input to a \PMPNN{k} instance $f_{\vec{W}_2}$,  called \new{downstream model}, where $\pmb\pi_{v,\pb}\neq \mathbf{0}$ for $v \in V(G)$ if $\pb \in S$, to compute the target outputs
\begin{equation*} 
  f_{\vec{W}_2}(G, \mathbf{H}, \oms \bg_1, ..., \bg_m \cms).
\end{equation*}
    
\xhdr{Backpropagating through the subgraph distribution}
Now that we have outlined a way to approximately sample subgraphs, we still need to learn the parameters $\bomega = (\vec{W}_1, \vec{W}_2)$ of the upstream and downstream model. Hence, given a set of examples $\{(G_j, \mathbf{H}_j, \exy_j)\}_{j=1}^N$, we are concerned with finding approximate solutions to 
$\min_{\bomega} \nicefrac{1}{N} \sum_j L(G, \mathbf{H}, \exy_j; \bomega)$, where $L$ is the expected training error
\begin{equation} \label{eq:loss_cs}
	L(G, \mathbf{H}, \exy_j; \bomega)  \coloneqq \mathbb{E}_{\bg_i\sim p(\bz; \btheta_i)}\left[\ell\left(f_{\vec{W}_2}\left(G, \mathbf{H}, \oms \bg_1, ..., \bg_m \cms \right), \exy\right)\right],
\end{equation}
with $\btheta  \coloneqq h_{\vec{W}_1}(G, \mathbf{H})$ and $\ell \colon \outputspace\times\outputspace\to\RSet^+$ is a point-wise loss function. The challenge of training a model as defined in~\cref{eq:loss_cs} is to compute $\nabla_{\btheta} L(G, \mathbf{H}, \exy_j; \bomega)$, i.e., the gradient with respect to the parameters $\btheta$ of the probability distribution for the expected loss. In this work, we utilize implicit maximum likelihood learning, a recent framework that allows us to use algorithmic solvers of combinatorial optimization problems as black-box components~\citep{Rolinek:2020,Nie+2021}. 
A particular instance of the framework uses implicit differentation via perturbation~\citep{Dom+2010}. We derive the gradient computation for a single ordered subgraph $\pb_i$ to simplify the notation.
We compute the gradients of the downstream loss with respect to  parameters $\btheta_i$ as
\begin{equation*}
    \nabla_{\btheta_i}L(G, \mathbf{H}, \exy; \bomega) \approx \mathbb{E}_{\bepsilon_i \sim \bnoisedist} \left[\nicefrac{1}{\lambda} \left( \bz^{*}\left(\btheta_i + \bepsilon_i\right) - \bz^{*}\left(\btheta_i + \bepsilon_i - \lambda\widehat{\nabla}\right)\right)\right],
\end{equation*}
where $\lambda > 0$ and $\widehat{\nabla}_v$, the approximated gradient for a single vertex $v$, is defined as
\begin{equation*}
    \widehat{\nabla}_v \coloneqq \mathbf{agg} \bigl( \oms \left[\grad{\mathbf{\bg_i}}\ell\left(f_{\vec{W_2}}\left(G, \mathbf{H},\oms\bg_i \cms \right)\right)\right]_{v,w} \mid w \in N_G(v) \cms \bigr),
\end{equation*}
with $v \in V(G)$. Here, $\mathbf{agg}$ can be any aggregation function such as the element-wise sum or mean. Hence, to approximate the gradients with respect to the parameter $\btheta_{iv}$, which corresponds to vertex $v$ of the input graph, we aggregate the gradients of the downstream loss with respect to all edges $(v,w)$ incident to vertex $v$ in the \emph{original} graph.  

Hence, the above techniques allow us to efficiently learn to sample subgraphs, which are then fed into a \PMPNN{k} to learn a graph representation while optimizing the parameters of all components in an end-to-end fashion.

\section{Experimental evaluation}
Here, we aim to empirically investigate the learning performance and efficiency of data-driven subgraph-enhanced GNNs, instances of the \PMPNN{k} framework, compared to non-data-driven ones. Specifically, we aim to answer the following questions. 
\begin{description}
	\item[Q1] Do data-driven subgraph-enhanced GNNs 
		exhibit better predictive performance than non-data-driven ones?
	\item[Q2] Does the graph structure of the subgraphs sampled affect predictive performance?
	\item[Q3]  Does data-driven sampling have an advantage in efficiency and predictive performance when used within state-of-the-art subgraph-enhanced GNNs?
\end{description}

All experimental results are fully reproducible from the source code provided at \url{https://github.com/Spazierganger/OSAN}. 

\begin{table}[!t]
\centering
    \caption{Results on large-scale regression datasets, data-driven versus non-data-driven  subgraph sampling.\label{tab:results_1}}
    \begin{subtable}{.4\linewidth}
      \centering
      	\caption{Results for the \textsc{Ogbg-Molesol} dataset.}
	\label{t_mol}	
	\resizebox{.85\textwidth}{!}{ 	\renewcommand{\arraystretch}{1.0}

\begin{tabular}{@{}l <{\enspace}@{}ccccc@{}}	\toprule
			 \multicolumn{5}{c}{\textbf{Method}} &  \multirow{1}{*}{\textbf{RSMSE} $\downarrow$}
			 \\
			 \cmidrule{1-6}
			 \multicolumn{5}{c}{Baseline}   &   $1.193$  \scriptsize	$\pm 0.083$ \\
			\cmidrule{1-6}
			& \textsc{Operat.} & \textsc{Type} & {\textsc{\#}}         &  {\textsc{\# Subg.}}     &    \\	\toprule

			Random & \multirow{2}{*}{{Delete}}                     & \multirow{2}{*}{{Vertex}}     &  \multirow{2}{*}{{1}}     & \multirow{2}{*}{{3}}   &	$ 1.215$ \scriptsize $\pm  0.095$ \\
			\imle &   &    &      &  &	$ 1.053$ \scriptsize $\pm  0.080$ \\
        	\cmidrule{1-6}
        	Random & \multirow{2}{*}{{Delete}}                     & \multirow{2}{*}{{Vertex}}     &  \multirow{2}{*}{{1}}     & \multirow{2}{*}{{10}}   &	$ 1.128$ \scriptsize $\pm  0.055$ \\
			\imle &   &    &      &  &	$ 0.984$ \scriptsize $\pm  0.086$ \\
        	\cmidrule{1-6}
        	Random & \multirow{2}{*}{{Delete}}                     & \multirow{2}{*}{{Vertex}}     &  \multirow{2}{*}{{2}}     & \multirow{2}{*}{{1}}   &	$ 1.283$ \scriptsize $\pm  0.080$ \\
			\imle &   &    &      &  &	$ 0.968$ \scriptsize $\pm  0.102$ \\
        	\cmidrule{1-6}
        	Random & \multirow{2}{*}{{Delete}}                     & \multirow{2}{*}{{Vertex}}     &  \multirow{2}{*}{{2}}     & \multirow{2}{*}{{3}}   &	$ 1.132$ \scriptsize $\pm  0.020$ \\
			\imle &   &    &      &  &	$ 1.081$ \scriptsize $\pm  0.021$ \\
        	\cmidrule{1-6}
        	Random & \multirow{2}{*}{{Delete}}                     & \multirow{2}{*}{{Vertex}}     &  \multirow{2}{*}{{5}}     & \multirow{2}{*}{{3}}   &	$ 0.992$ \scriptsize $\pm  0.115$ \\
			\imle &   &    &      &  &	$ 1.115$ \scriptsize $\pm  0.076$ \\
        	\cmidrule{1-6}
        	Random & \multirow{2}{*}{{Delete}}                     & \multirow{2}{*}{{Vertex}}     &  \multirow{2}{*}{{5}}     & \multirow{2}{*}{{10}}   &	$ 1.186$ \scriptsize $\pm  0.154$ \\
			\imle &   &    &      &  &	$ 1.137$ \scriptsize $\pm  0.053$ \\
        	\cmidrule{1-6}
        	Random & \multirow{2}{*}{{Select}}                     & \multirow{2}{*}{{Vertex}}     &  \multirow{2}{*}{{10}}     & \multirow{2}{*}{{1}}   &	$ 1.128$ \scriptsize $\pm  0.022$ \\
			\imle &   &    &      &  &	$ 1.099$ \scriptsize $\pm  0.099$ \\
        	\cmidrule{1-6}
        	Random & \multirow{2}{*}{{Delete}}                     & \multirow{2}{*}{{Edge}}     &  \multirow{2}{*}{{1}}     & \multirow{2}{*}{{3}}   &	$ 1.240$ \scriptsize $\pm  0.029$ \\
			\imle &   &    &      &  &	$ 1.106$ \scriptsize $\pm  0.069$ \\
        	\cmidrule{1-6}
			Random & \multirow{2}{*}{{Delete}}                     & \multirow{2}{*}{{Edge}}     &  \multirow{2}{*}{{1}}     & \multirow{2}{*}{{10}}   &	$ 1.152$ \scriptsize $\pm  0.046$ \\
			\imle &   &    &      &  &	$ 1.056$ \scriptsize $\pm  0.071$ \\
        	\cmidrule{1-6}
        	Random & \multirow{2}{*}{{Delete}}                     & \multirow{2}{*}{{Edge}}     &  \multirow{2}{*}{{3}}     & \multirow{2}{*}{{3}}   &	$ 1.084$ \scriptsize $\pm  0.076$ \\
			\imle &   &    &      &  &	$ 1.052$ \scriptsize $\pm  0.049$ \\
        	\cmidrule{1-6}
        	Random & \multirow{2}{*}{{Delete}}                     & \multirow{2}{*}{{Edge}}     &  \multirow{2}{*}{{3}}     & \multirow{2}{*}{{10}}   &	$ 1.099$ \scriptsize $\pm  0.071$ \\
			\imle &   &    &      &  &	$ 1.077$ \scriptsize $\pm  0.079$ \\
        	\cmidrule{1-6}
			Random & \multirow{2}{*}{{Delete}}                     & \multirow{2}{*}{{$2$-Ego}}     &  \multirow{2}{*}{{--}}     & \multirow{2}{*}{{3}}   &	$1.071$ \scriptsize $\pm  0.062$ \\
			\imle &   &    &      &  &	$ 0.959$ \scriptsize $\pm  0.184$ \\
			\bottomrule
			
		\end{tabular}
	}
    \end{subtable}
    \begin{subtable}{.5\linewidth}
      \centering
\caption{Result for the \textsc{Alchemy} dataset.}
	\label{t_al}	
	\resizebox{.75\textwidth}{!}{ 	\renewcommand{\arraystretch}{1.0}
	
\begin{tabular}{@{}l <{\enspace}@{}ccccr@{}}	\toprule
			 \multicolumn{5}{c}{\textbf{Method}} &  \multirow{1}{*}{\textbf{MAE} $\downarrow$}
			 \\
			 \cmidrule{1-6}
			 \multicolumn{5}{c}{Baseline}   &   $11.12$  \scriptsize	$\pm 0.69$ \\
			\cmidrule{1-6}
			& \textsc{Operat.} & \textsc{Type} & {\textsc{\#}}         &  {\textsc{\# Subg.}}     &    \\	\toprule

			Random & \multirow{2}{*}{{Delete}}                     & \multirow{2}{*}{{Vertex}}     &  \multirow{2}{*}{{1}}     & \multirow{2}{*}{{3}}   &	$ 13.26$ \scriptsize $\pm  0.41$ \\
			\imle &   &    &      &  &	$ 8.78$ \scriptsize $\pm  0.28$ \\
        	\cmidrule{1-6}
        	Random & \multirow{2}{*}{{Delete}}                     & \multirow{2}{*}{{Vertex}}     &  \multirow{2}{*}{{1}}     & \multirow{2}{*}{{10}}   &	$ 12.11$ \scriptsize $\pm  0.21$ \\
			\imle &   &    &      &  &	$ 8.87$ \scriptsize $\pm  0.12$ \\
        	\cmidrule{1-6}
        	Random & \multirow{2}{*}{{Delete}}                     & \multirow{2}{*}{{Vertex}}     &  \multirow{2}{*}{{2}}     & \multirow{2}{*}{{3}}   &	$ 12.66$ \scriptsize $\pm  0.28$ \\
			\imle &   &    &      &  &	$ 9.01$ \scriptsize $\pm  0.27$ \\
        	\cmidrule{1-6}
        	Random & \multirow{2}{*}{{Delete}}                     & \multirow{2}{*}{{Vertex}}     &  \multirow{2}{*}{{5}}     & \multirow{2}{*}{{3}}   &	$ 10.29$ \scriptsize $\pm  0.30$ \\
			\imle &   &    &      &  &	$ 9.22$ \scriptsize $\pm  0.06$ \\
        	\cmidrule{1-6}
        	
			Random & \multirow{2}{*}{{Delete}}                     & \multirow{2}{*}{{Edge}}     &  \multirow{2}{*}{{1}}     & \multirow{2}{*}{{3}}   &	$ 11.66$ \scriptsize $\pm  0.63$ \\
			\imle &   &    &      &  &	$ 10.80$ \scriptsize $\pm  0.31$ \\
        	\cmidrule{1-6}
        	Random & \multirow{2}{*}{{Delete}}                     & \multirow{2}{*}{{Edge}}     &  \multirow{2}{*}{{2}}     & \multirow{2}{*}{{3}}   &	$ 10.79$ \scriptsize $\pm  0.64$ \\
			\imle &   &    &      &  &	$ 10.56$ \scriptsize $\pm  0.44$ \\
        	\cmidrule{1-6}
        	Random & \multirow{2}{*}{{Delete}}                     & \multirow{2}{*}{{Edge}}     &  \multirow{2}{*}{{5}}     & \multirow{2}{*}{{3}}   &	$ 9.15$ \scriptsize $\pm  0.12$ \\
			\imle &   &    &      &  &	$  9.08$ \scriptsize $\pm  0.28$ \\
        	\cmidrule{1-6}
        	
			Random & \multirow{2}{*}{{Select}}                     & \multirow{2}{*}{{Vertex}}     &  \multirow{2}{*}{{5}}     & \multirow{2}{*}{{3}}   &	$ 11.48$ \scriptsize $\pm  0.60$ \\
			\imle &   &    &      &  &	$ 9.22$ \scriptsize $\pm  0.14$ \\
			\cmidrule{1-6}
			Random & \multirow{2}{*}{{Select}}                     & \multirow{2}{*}{{Edge}}     &  \multirow{2}{*}{{5}}     & \multirow{2}{*}{{3}}   &	$ 8.99$ \scriptsize $\pm  0.24$ \\
			\imle &   &    &      &  &	$ 8.95$ \scriptsize $\pm  0.29$ \\
			\cmidrule{1-6}
			
			Random & \multirow{2}{*}{{Delete}}                     & \multirow{2}{*}{{$1$-Ego}}     &  \multirow{2}{*}{{--}}     & \multirow{2}{*}{{3}}   &	$ 14.98$ \scriptsize $\pm  0.49$ \\
			\imle &   &    &      &  &	$ 11.15$ \scriptsize $\pm  1.09$ \\
			\bottomrule
			
			Random & \multirow{2}{*}{{Select}}                     & \multirow{2}{*}{{$5$-Ego}}     &  \multirow{2}{*}{{--}}     & \multirow{2}{*}{{3}}   &	$ 14.97$ \scriptsize $\pm  0.23$ \\
			\imle &   &    &      &  &	$ 13.83$ \scriptsize $\pm  1.06$ \\
			\bottomrule
		\end{tabular}

	}
	   \vspace{-10pt}
    \end{subtable}
\end{table}

\xhdr{Datasets} 
To compare our data-driven, subgraph-enhanced GNNs to non-data-driven ones and standard GNN baselines, we used the \textsc{Alchemy}~\citep{Che+2019b}, the \textsc{Qm9}~\citep{Ram+2014,Wu+2018}, \textsc{Ogbg-Molesol}~\citep{hu2020ogb}, and the \textsc{Zinc}~\citep{Dwi+2020,Jin+2018} graph-level regression datasets; see~\cref{ds} in~\cref{app:exp} for dataset statistics and properties. In addition, we used the \textsc{Exp} dataset~\citep{Abb+2020} to investigate the additional expressive power of subgraph-enhanced GNNs over standard ones. Following~\cite{Morris2020b}, we opted not to use the 3D-coordinates of the \textsc{Alchemy} dataset to solely show the benefits of the data-driven subgraph-enhanced GNNs regarding graph structure. All datasets, excluding \textsc{Exp} and \textsc{Ogbg-Molesol}, are available from~\cite{Mor+2020}.\footnote{\url{https://chrsmrrs.github.io/datasets/}}

\xhdr{Neural architectures and experimental protocol} 
For all datasets and architectures, we used the competitive \gin layers~\citep{Xu+2018b} for the baselines and the downstream models. For data with (continuous) edge features, we used a $2$-layer MLP to map them to the same number of components as the vertex features and combined them using summation. We describe the upstream and downstream models' architecture used for each dataset in the following. We stress here that we always used the same hyperparameters for the downstream model and the baselines.

\xhdr{Sampling subgraphs}
Since the number of unordered $k$-vertex subgraphs is considerably smaller than the number of ordered $k$-vertex subgraphs, we opted to consider unordered $k$-vertex subgraphs; see also~\cref{unordered}. Further, since vertex-subgraph \PMPNNs{k}, see~\cref{sec:vertexpebble}, are easier to implement efficiently and are closer to \dsgnn variant of \esan~\citep{Bev+2021}, we opted to use them for the empirical evaluation. In addition, we used a simple GNN architecture for the upstream model to compute initial features for the subgraphs for ease of implementation. We experimented with selecting and deleting a various number of vertices, edges, and subgraphs induced by $k$-hop neighborhoods ($k$-Ego) for all datasets; see~\cref{app:exp} for details.

\xhdr{Upstream models} For all datasets and experiments, we used a \gcn model~\citep{Kip+2017} consisting of three  \gcn layers, with batch norm and ReLU activation after each layer. We set the hidden dimensions to that of the downstream model one. The model either outputs the vertex or edge embeddings according to the task. We computed edge embeddings based on the vertex features of the incident vertices after the last layers and the edge attributes provided by the dataset.

When sampling multiple subgraphs with \imle, they tend to have similar structures. In other words, \imle learns similar distributions in different channels of the neural network. 
This phenomenon is not in our favor, as we need to cover the original full graph as much as possible. To mitigate this issue, we propose an auxiliary loss for the diversity of subgraphs. We calculate the cosine similarity between the selected vertex or edge masks of every two subgraphs and try to minimize the average similarity value. We tune the weight for the auxiliary loss on the log scale, e.g., $0.1$, $1$, $10$, and so on.

\xhdr{Downstream and baseline models} See~\cref{exp_rel} for a detailed description of the architecture used for the downstream and baseline models, and how we processed subgraphs. For processing the subgraphs, we performed similar steps like \esan. We first applied intra-subgraph aggregation for the vertices within each subgraph and obtained graph embeddings for each subgraph. After that, we performed inter-subgraph mean pooling to obtain a single embedding vector for the original graph. It is worth noting that \esan does not exclude the vertices deleted during graph pooling but removes the adjacent edges of those nodes. In our experiments, we masked out the deleted or unselected nodes.  

See~\cref{app:exp} for further details on the experiments. 

\subsection{Results and discussion} In the following, we answer the research questions \textbf{Q1} to \textbf{Q3}.

\xhdr{A1} See~\cref{tab:results_1,tab:qm9,t_zc,tab:exp} (in the appendix). On all five datasets, the subgraph-enhanced GNN models based on \imle beat the random baseline, excluding edge sampling configurations on the \textsc{Alchemy} and the \textsc{Qm9} dataset; see~\cref{t_al,tab:qm9}. For example, on the \textsc{Ogbg-Molesol} the average gain over the random baseline is over 11\%. Similar improvements can be observed over the other four datasets. Moreover, the results on the \textsc{Exp} dataset, see~\cref{app:exp}, clearly indicate that the added expressivity of the (data-driven) subgraph-enhanced GNNs translates into improved predictive performance. The data-driven subgraph-enhanced GNNs improve the accuracy of the non-subgraph-enhanced GNN by almost 50\% in all configurations while improving over the random subgraph-enhanced GNN baseline by almost 6\%. The data-driven subgraph-enhanced GNNs also clearly improve over the (non-subgraph-enhanced) GNN baseline on four out of five datasets. 

\xhdr{A2}  See~\cref{tab:results_1,tab:qm9} (in the appendix). Deleting or selecting subgraphs leads to a clear boost in predictive performance across datasets over the random baseline while also improving over the non-subgraph-enhanced GNN baseline. Further, on all datasets, learning to delete or select $k$-hop neighborhood subgraphs for $k \in \{ 2,3 \}$ leads to a clear boost over the random as well as non-subgraph-enhanced baselines. However, the number of deleted vertices seems to affect the predictive performance. For example, on the \textsc{Ogbg-Molesol} dataset, going from deleting one 2-vertex subgraph to one 10-vertex subgraph  leads to a drop in performance. Hence, the drop in performance of the latter is in contrast to our theoretical findings, i.e., larger subgraphs lead to improved expressivity, indicating that more work should be done to understand subgraph-enhanced GNNs' generalization ability. Interestingly, deleting edges did not perform as well as deleting vertices or other subgraphs. We speculate that a more powerful edge embedding method is needed here, which computes edge features directly instead of learning them from vertex features. 

\xhdr{A3} See~\cref{tab:z_esan} (in the appendix). The \imle-based \esan severely speeds up the computation time. That is, across all configurations, we achieve a significant speed-up. For some configurations, e.g., sampling three vertices, the \imle based \esan is more than 3.5 times faster than the non-data-driven \esan while taking about the same time as the simple random baseline. We stress here that the \esan implementation provided by~\citet{Bev+2021} precomputes subgraphs in a preprocessing step, which is not possible when learning to sample subgraphs using \esan. Regarding predictive performance, the \imle based \esan is slightly behind the non-data-driven one, although always better than the non-subgraph enhanced GNN baseline; see~\cref{t_zc} in~\cref{app:exp}.

\section{Conclusion}
We introduced the \PMPNN{k} framework to study the expressive power of recently introduced subgraph-enhanced GNNs. We showed that any such architecture is strictly less powerful than the $\WLk{(k+1)}$ while being incomparable to the $\WLk{k}$ in representing permutation-invariant functions over graphs. Further, to circumvent random or heuristic subgraph selection, we devised a data-driven variant of \PMPNNs{k} which learn to select subgraph for a given data distribution. Empirically, we verified that such data-driven subgraph selection is superior to previously used random sampling in predictive performance. Further,  when compared to state-of-the-art models, we showed promising performance in terms of computation time while still providing good predictive performance. We believe that our paper provides a first step in unifying combinatorial insights on the expressive power of GNNs with data-driven insights. 

\begin{ack}
CM is partially funded a DFG Emmy Noether grant (468502433) and  RWTH Junior Principal Investigator Fellowship under the Excellence Strategy of the Federal Government and the Länder. GR is funded by the DFG Research Grants Program–RA 3242/1-1–411032549. MN acknowledges funding by the German Research Foundation under Germany's Excellence Strategy--EXC 2075.
\end{ack}

\bibliographystyle{abbrvnat}
\bibliography{main_old}

\appendix

\newpage
\section{Related work (expanded)}\label{exp_rel}

\xhdr{GNNs} Recently, GNNs~\citep{Gil+2017,Sca+2009} emerged as the most prominent graph representation learning architecture. Notable instances of this architecture include, e.g.,~\citep{Duv+2015,Ham+2017,Vel+2018}, which can be subsumed under the message-passing framework introduced in~\citep{Gil+2017}. In parallel, approaches based on spectral information were introduced in, e.g.,~\citep{Defferrard2016,Bru+2014,Kip+2017,Mon+2017}---all of which descend from early work in~\citep{bas+1997,Kir+1995,mic+2005,Mer+2005,mic+2009,Sca+2009,Spe+1997}. Recent extensions and improvements to the GNN framework include approaches to incorporate different local structures (around subgraphs), e.g.,~\citep{Hai+2019,Fla+2020,Jin+2020,Nie+2016,Xu+2018}, novel techniques for pooling vertex representations to perform graph classification, e.g.,~\citep{Bia+2020,Can+2018,Gao+2019,Gra+2021,Yin+2018,Zha+2018}, incorporating distance information~\citep{You+2019}, non-Euclidean geometry approaches~\citep{Cha+2019}, and more efficient GNNs, e.g.,~\citep{Fey+2021,Li+2021}. Furthermore, recently, empirical studies on neighborhood aggregation functions for continuous vertex features~\citep{Cor+2020}, edge-based GNNs that leverage physical knowledge~\citep{And+2019,Kli+2020,Kli+2021}, studying over-smoothing and over-squashing phenomena~\citep{Alon2020,Bod+2022,Goo+2021}, and sparsification methods~\citep{Ron+2020} emerged. Surveys of recent advancements in GNN techniques can be found, e.g., in~\citet{Cha+2020,Wu+2019,Zho+2018}. 

\xhdr{Limits of GNNs and more expressive architectures} 
Recently, connections of GNNs to Weisfeiler--Leman type algorithms have been shown~\citep{Azi+2020,Bar+2020,Che+2019,Gee+2020a,Gee+2020b,geerts2022,Mae+2019,Mar+2019,Mor+2019,Mor+2022b,Xu+2018b}. Specifically,~\citep{Mor+2019,Xu+2018b} showed that the expressive power of any possible GNN architecture is limited by the $\WLk{1}$ in terms of distinguishing non-isomorphic graphs. 

Triggered by the above results, a large set of papers proposed architectures to overcome the expressivity limitations of $\WLk{1}$.
\citet{Mor+2019} introduced \emph{$k$-dimensional} GNNs which rely on a message-passing scheme between subgraphs of cardinality~$k$. Similar to~\citep{Mor+2017}, the paper employed a local, set-based (neural) variant of the $\WLk{1}$. Later, this was refined in~\citep{Azi+2020,Mar+2019} by introducing \emph{$k$-order folklore graph neural networks}, which are equivalent to the folklore or oblivious variant of the $\WLk{k}$~\citep{Gro+2021,Mor+2022} in terms of distinguishing non-isomorphic graphs. Subsequently,~\citet{Morris2020b} introduced neural architectures based on a local version of the $\WLk{k}$, which only considers a subset of the original neighborhood, taking sparsity of the underlying graph (to some extent) into account.  \citet{Che+2019} connected the theory of universal approximations of permutation-invariant functions and the graph isomorphism viewpoint and introduced a variation of the $\WLk{2}$. \citet{geerts2022} introduced a higher-order message-passing framework that allows us to obtain upper bounds of extension of GNNs in terms of $\WLk{k}$.

Recent works have extended the expressive power of GNNs, e.g., by encoding vertex identifiers~\citep{Mur+2019b, Vig+2020}, using random features~\citep{Abb+2020,Das+2020,Sat+2020}, homomorphism and subgraph counts~\citep{Bar+2021,botsas2020improving,Hoa+2020}, spectral information~\citep{Bal+2021}, simplicial and cellular complexes~\citep{Bod+2021,Bod+2021b}, persistent homology~\citep{Hor+2021}, random walks~\citep{Toe+2021}, graph decompositions~\citep{Tal+2021}, or distance~\citep{li2020distance} and directional information~\citep{beaini2020directional}. See~\citet{Mor+2022} for an in-depth survey on this topic. 

\paragraph{Differentiating through discrete structures} 
Recently, numerous papers have aimed to combine discrete random variables and (continuous) neural network components and addressed the resulting gradient estimation problem. Most existing approaches used various types of relaxation of discrete distributions.
For instance, \citet{maddison2016concrete} and \cite{jang2016categorical} proposed the Gumbel-softmax distribution to relax categorical random variables. REBAR~\citep{tucker2017rebar} combined the Gumbel-softmax trick with the score-function estimator but is tailored to categorical distributions. Recent work on relaxed gradient estimators derived several extensions of the softmax trick~\citep{paulus2020gradient}. 
However, the Gumbel-softmax distribution is only directly applicable to categorical variables. For more complex distributions, one has to come up with tailor-made relaxations or use the straight-through or score function estimators, see, e.g., \citet{grover2019stochastic,kim2016exact}. Further, \citet{grathwohl2017backpropagation,tucker2017rebar} developed parameterized control variates based on continuous relaxations for the score-function estimator. In this work, we have to sample and select sparse, discrete, and complex substructures of a given input graph. Due to the resulting exponential number of possible substructures, we cannot use the Gumbel-softmax trick for categorical distributions. On the other hand, the requirement to sample sparse and discrete substructures does not allow us to utilize relaxations. Therefore, we use \imle, a recently proposed general-purpose framework to combine neural and discrete components~\citep{Nie+2021}.

\section{Weisfeiler--Leman algorithm (expanded)}\label{kwl}

Due to the shortcomings of the $\WLk{1}$  or color refinement in distinguishing non-isomorphic
graphs, several researchers~\citep{Bab1979,Bab2016,Imm+1990},
devised a more powerful generalization of the former, today known
as the \new{$k$-dimensional Weisfeiler-Leman algorithm}
($\WLk{k}$).\footnote{In~\citep{Bab2016} László Babai mentions that he
first introduced the algorithm in 1979 together with Rudolf Mathon from the University of Toronto.}\textsuperscript{,}\footnote{In this paper $\WLk{k}$ corresponds to original version \citep{Bab1979,Bab2016,Imm+1990} which is sometimes referred to as the ``folklore'' version in the literature. It corresponds to the ``oblivious'' $\WLk{(k+1)}$ version often used in the graph learning community \citep{Gro+2021}.}

Intuitively, to surpass the limitations of the $\WLk{1}$, the algorithm colors ordered subgraphs instead of a single vertex. More precisely, given a graph $G$, it colors the tuples from $V(G)^k$ for $k \geq 1$ instead of the vertices. By defining a neighborhood between these tuples, we can define a coloring  similar to the $\WLk{1}$. Formally, let $G$ be a graph, and let
$k \geq 2$. In each iteration $i \geq 0$, the algorithm, similarly to the $\WLk{1}$, computes a
\new{coloring} $C^k_i \colon V(G)^k \to \bbN$. In the first iteration, $i=0$, the tuples $\vec{v}$ and $\vec{w}$ in $V(G)^k$ get the same
color if they have the same atomic type, i.e., 
$C^k_{0}(\vec{v}) \coloneqq \mathsf{atp}(\vec{v})$. Now, for $i > 0$, $C^k_{i+1}$ is defined
by
\begin{equation*}\label{ci}
	C^k_{i+1}(\vec{v}) \coloneqq \REL \big(C^k_{i}(\vec{v}), M_i(\vec{v}) \big),
\end{equation*}
with $M_i(\vec{v})$ the multiset 
\begin{equation*}\label{mi}
	M_i(\vec{v}) \coloneqq \oms    (C^k_{i}(\phi_1(\vec{v},w)), \dots,  C^k_{i}(\phi_k(\vec{v},w)))   \mid w \in V(G)   \cms
\end{equation*}
and where
\begin{equation*}
	\phi_j(\vec{v},w)\coloneqq (v_1, \dots, v_{j-1}, w, v_{j+1}, \dots, v_k).
\end{equation*}
That is, $\phi_j(\vec{v},w)$ replaces the $j$-th component of the tuple $\vec{v}$ with the vertex $w$. Hence, two tuples are \new{adjacent} or \new{$j$-neighbors}, with respect to a vertex $w$, if they are different in the $j$th component (or equal, in the case of self-loops). Again, we run the algorithm until convergence, i.e.,
\begin{equation*}
	C^k_{i}(\vec{v}) = C^k_{i}(\vec{w}) \iff C^k_{i+1}(\vec{v}) = C^k_{i+1}(\vec{w}),
\end{equation*}
for all $\vec{v}$ and $\vec{w}$ in $V(G)^k$ holds, and call the partition of $V(G)^k$
induced by $C^k_i$ the stable partition. For such $i$, we define
$C^k_{\infty}(\vec{v}) = C^k_i(\vec{v})$ for $\vec{v}$ in $V(G)^k$. Hence, two tuples $\vec{v}$ and $\vec{w}$ with the same color in iteration $(i-1)$ get different colors in iteration $i$ if there exists a $j$ in $[k]$ such that the number of $j$-neighbors of $\vec{v}$ and $\vec{w}$, respectively, colored with a certain color is different. We set $C_\infty^k(v)\coloneqq C_\infty^k(v,\ldots,v)$ and refer to this as the color of the vertex $v$.

To test whether two graphs $G$ and $H$ are non-isomorphic, we run the $\WLk{k}$ in ``parallel'' on both graphs. Then, if the two graphs have a different number of vertices colored $c$ in $\bbN$, the $\WLk{k}$ \textit{distinguishes} the graphs as non-isomorphic. By increasing $k$, the algorithm becomes more powerful in distinguishing non-isomorphic graphs, i.e., for each $k\geq 1$, there are non-isomorphic graphs distinguished by $\WLk{(k+1)}$ but not by $\WLk{k}$ ~\citep{Cai+1992}. 

\paragraph{The Weisfeiler--Leman hierarchy and permutation-invariant function approximation}\label{connect}
The Weisfeiler--Leman hierarchy is a purely combinatorial algorithm for testing graph isomorphism. However,  the graph isomorphism function, mapping non-isomorphic graphs to different values, is the hardest to approximate permutation-invariant function. Hence, the Weisfeiler--Leman hierarchy has strong ties to GNNs' capabilities to approximate permutation-invariant or equivariant functions over graphs. For example,~\citet{Mor+2019,Xu+2018b} showed that the expressive power of any possible GNN architecture is limited by $\WLk{1}$ in terms of distinguishing non-isomorphic graphs. \citet{Azi+2020} refined these results by showing that if an architecture is capable of simulating $\WLk{k}$ and allows the application of universal neural networks on vertex features, it will be able to approximate any permutation-equivariant function below the expressive power of $\WLk{k}$; see also~\citet{Che+2019}. Hence, if one shows that one architecture distinguishes more graphs than another, it follows that the corresponding GNN can approximate more functions. These results were refined in \citet{geerts2022} for color refinement and taking into account the number of iterations of $\WLk{k}$.

\section{Datasets, details on the experiments, and additional experimental results}\label{app:exp}

In the following, we outline details on the experiments. 

\paragraph{Additional details on the upstream model}
\label{par: add_upstream}
When sampling multiple subgraphs with \imle, they tend to have similar structures. In other words, \imle learns similar distributions in different channels of the neural network. 
This phenomenon is not in our favor as we need to cover the original full graph as much as possible. To mitigate this issue, we propose an auxiliary loss for the diversity of subgraphs. We calculate the KL-divergence between the selected vertex or edge masks and an all-one vector and try to minimize the value. We tune the weight for the auxiliary loss on the log scale, e.g., $0.1$, $1$, $10$, and so on.

\paragraph{Downstream and baseline models}
For the larger molecular regression tasks \textsc{Alchemy}, \textsc{Qm9}, and \textsc{Zinc}, we closely followed the hyperparameters found in~\citet{Che+2019b},~\citet{Gil+2017}, and~\citet{Dwi+2020} respectively, and used \gin layers. 
That is, for \textsc{Zinc}, we used four \gin layers with a hidden dimension of 256 followed by batch norm and a 4-layer MLP for the joint regression of the target after applying
\esan mean pooling. Moreover, we report results on \textsc{Zinc} dataset with \textsc{PNA} model architecture. We mainly follow the configurations of ~\citep{Corso+2020} and the official implementation of ~\citep{Fey+2019}. For the number of hidden dimensions, where we used 128 instead of 75. For \textsc{Alchemy} and \textsc{Qm9}, we used six layers with 64 (hidden) features and a set2seq layer~\citep{Vin+2016} for graph-level pooling, followed by a $2$-layer MLP for the joint regression of the twelve targets. Moreover, following~\citep{Che+2019b,Gil+2017}, we normalized the targets of the training split to zero mean and unit variance and report re-normalized testing scores. We used a single model to predict all targets and report (mean) MAE. For the GNN baseline for the \textsc{Qm9} dataset, we computed edge-wise $\ell_2$ distances based on the 3D coordinates and concatenated them to the edge features. We note here that our intent is not the beat state-of-the-art, physical knowledge-incorporating architectures, e.g., \textsf{DimeNet}~\citep{Kli+2020} or \textsf{Cormorant}~\citep{And+2019}, but to solely show the benefits of data-driven subgraph-enhanced GNNs. Further, to compare to \esan, we used the same architecture as~\citet{Bev+2021}. For the \textsc{Exp} dataset, we processed the raw dataset following~\citet{Abb+2020} and used six \gin layers, each with a hidden dimension of 32, followed by mean pooling and one linear layer immediately after mean pooling. For the \textsc{ogbg-molesol} and \textsc{ogbg-molbace} datasets, we followed \textsc{Ogb}'s~\citep{hu2020ogb} official \gin model architecture without virtual vertices, i.e., five \gin layers each with 300 hidden dimensions, and mean pooling as the final layer. For the \textsc{Proteins} dataset, we followed the DS-GNN setting of \esan paper, i.e., using 32 hidden dimensions, 4 hidden layers, and mean pooling as the last layer.

\paragraph{Sampling subgraphs}
Since the number of unordered $k$-vertex subgraphs is considerably smaller than the number of ordered $k$-vertex subgraphs, we opted to consider unordered $k$-vertex subgraphs; see also~\cref{unordered}. Further, since vertex-subgraph \PMPNNs{k}, see~\cref{sec:vertexpebble}, are easier to implement efficiently and are closer to \esan~\citep{Bev+2021}, we opted to use them for the empirical evaluation. In addition, we used a simple GNN architecture for the upstream model to compute initial features for the subgraphs for ease of implementation. We experimented with selecting and deleting a various number of vertices, edges, and subgraphs induced by $k$-hop neighborhoods ($k$-Ego) for all datasets. We outline the subgraph sampling methods for each dataset below.

For \textsc{Alchemy}, we opted for learning to delete three vertices or edges. We also looked at sampling three subgraphs on five vertices or edges. Finally, we looked at sampling three 3-hop neighborhood subgraphs. For \textsc{Qm9}, we opted for learning to delete one vertex or edge, learning to select three subgraphs with ten vertex or edges, and sampling ten 3-hop neighborhood subgraphs. For the \textsc{Ogbg-Molesol} dataset, we looked at learning to delete one vertex three and ten times, two vertices one time, two subgraphs on five vertices, and one subgraph on ten vertices. Further, we looked at deleting one edge ten times and three 2-hop neighborhood subgraphs. For \textsc{Zinc}, we opted to learn to delete vertices three and ten times. In addition, we investigated deleting two vertices three times. We also investigated learning to delete edges three and ten times. Further, we looked at selecting three subgraphs on 20 vertices or edges. Finally, we looked at sampling three 7-hop neighborhood subgraphs. For the \textsc{Exp} dataset, we learned to delete three vertices or edges. 

\xhdr{Comparison to \esan} To investigate how our 
data-driven sampling approaches compares to state-of-the-art architectures, we integrated \imle-based subgraph sampling into  \esan~\citep{Bev+2021} (\dsgnn)), and compared to \esan using all subgraphs of a given size on the \textsc{Zinc} dataset. In addition, we also compared to a simple random model sampling subgraphs uniformly and at random, using the same configurations as the data-driven ones. To compare computation time between our method and \esan, we measured the time on the test set. The timing consisted of data batch retrieval, subgraph sampling, downstream model forward propagation, upstream model forward propagation (in our method), and loss calculation.  Like \esan, we repeated the inference five times and voted for the majority. 

\paragraph{Additional experimental details }For \textsc{Zinc}, we used the subset of 12\,000 graphs provided in~\citep{Dwi+2020}. For \textsc{Alchemy} and \textsc{Qm9}, we used a subset of 12\,000 graphs sampled uniformly at random, we used the splits provided in~\citep{Morris2020b}. All of the above datasets consists of a training split of 10\,000 graphs, and a validation and test split of 1\,000 graphs, respectively. For the other datasets, we used the officially provided splits. 

We repeated all experiments five times and report average scores and standard deviations. All experiments were conducted on a workstation with one GPU card with 32GB of GPU memory. 

We used two separate instances of an Adam optimizers~\citep{Kin+2015} for the upstream and downstream models, both with default hyper-parameters and no weight decay. For the upstream model, we did not use learning rate decay. For the \textsc{Zinc}, \textsc{Alchemy} and \textsc{Qm9} datasets, we trained for at least 700 epochs and leveraged early stopping with a patience of 100 afterwards. The learning rate for the downstream model decays twice by 0.316 at the 400 and 600 epochs. For the \textsc{Exp} dataset, we trained for 350 epochs with a decay rate of 0.5 every 50 epochs, following the setup of \esan ~\citep{Bev+2021}. For the \textsc{ogbg-molesol} and \textsc{ogbg-molbace} datasets, we trained for 100 epochs, following the default setting of ~\citep{hu2020ogb}. For the \textsc{Proteins} dataset, we trained for 400 epochs, decaying the learning rate by 0.316 twice at 150 and 300 epoch. 

\paragraph{Additional experimental results} In addition to the results already shown in the main paper, we exhibit some additional results for different datasets and training settings. 
\begin{table}[!t]
\centering
    \caption{Additional experimental results on large-scale regression datasets and comparison to non-data-driven \esan~\citep{Bev+2021}.}
    \begin{subtable}{.26\linewidth}
      \centering
         \caption{Results for the \textsc{Qm9} dataset}
    
	\resizebox{1.1\textwidth}{!}{ 	\renewcommand{\arraystretch}{1.0}
		\begin{tabular}{@{}l <{\enspace}@{}ccccr@{}}	\toprule
			 \multicolumn{5}{c}{\textbf{Method}} &  \multirow{1}{*}{\textbf{MAE} $\downarrow$}
			 \\
			 \cmidrule{1-6}
			 \multicolumn{5}{c}{Baseline}   &   $21.92$  \scriptsize	$\pm 4.37$ \\
			\cmidrule{1-6}
			& \textsc{Operat.} & \textsc{Type} & {\textsc{\#}}         &  {\textsc{\# Subg.}}     &    \\	\toprule
			Random & \multirow{2}{*}{{Delete}}                     & \multirow{2}{*}{{Vertex}}     &  \multirow{2}{*}{{1}}     & \multirow{2}{*}{{3}}   &	$ 15.46$ \scriptsize $\pm  1.05$ \\
			\imle &   &    &      &  &	$ 9.14$ \scriptsize $\pm  0.60$ \\
        	\cmidrule{1-6}
        	Random & \multirow{2}{*}{{Delete}}                     & \multirow{2}{*}{{Vertex}}     &  \multirow{2}{*}{{3}}     & \multirow{2}{*}{{3}}   &	$ 22.29$ \scriptsize $\pm  4.07$ \\
			\imle &   &    &      &  &	$ 9.30$ \scriptsize $\pm  0.32$ \\
        	\cmidrule{1-6}
        	Random & \multirow{2}{*}{{Delete}}                     & \multirow{2}{*}{{Vertex}}     &  \multirow{2}{*}{{3}}     & \multirow{2}{*}{{10}}   &	$ 24.81$ \scriptsize $\pm  2.01$ \\
			\imle &   &    &      &  &	$ 12.43$ \scriptsize $\pm  0.12$ \\
        	\cmidrule{1-6}
        	Random & \multirow{2}{*}{{Delete}}                     & \multirow{2}{*}{{Vertex}}     &  \multirow{2}{*}{{5}}     & \multirow{2}{*}{{3}}   &	$ 30.12$ \scriptsize $\pm  1.27$ \\
			\imle &   &    &      &  &	$ 11.35$ \scriptsize $\pm  0.41$ \\
        	\cmidrule{1-6}
			Random & \multirow{2}{*}{{Select}}                     & \multirow{2}{*}{{Vertex}}     &  \multirow{2}{*}{{10}}     & \multirow{2}{*}{{3}}   &	$ 22.69$ \scriptsize $\pm  3.05$ \\
			\imle &   &    &      &  &	$ 11.88$ \scriptsize $\pm  0.52$ \\
        	\cmidrule{1-6}
        	Random & \multirow{2}{*}{{Delete}}                     & \multirow{2}{*}{{Edge}}     &  \multirow{2}{*}{{3}}     & \multirow{2}{*}{{3}}   &	$ 14.85$ \scriptsize $\pm  0.35$ \\
			\imle &   &    &      &  &	$ 9.72$ \scriptsize $\pm  0.23$ \\
			\cmidrule{1-6}
			Random & \multirow{2}{*}{{Delete}}                     & \multirow{2}{*}{{Edge}}     &  \multirow{2}{*}{{5}}     & \multirow{2}{*}{{3}}   &	$ 13.69$ \scriptsize $\pm  0.28$ \\
			\imle &   &    &      &  &	$ 10.08$ \scriptsize $\pm  0.36$ \\
			\cmidrule{1-6}
			Random & \multirow{2}{*}{{Select}}                     & \multirow{2}{*}{{Edge}}     &  \multirow{2}{*}{{10}}     & \multirow{2}{*}{{3}}   &	$ 14.02$ \scriptsize $\pm  0.99$ \\
			\imle &   &    &      &  &	$ 11.58$ \scriptsize $\pm  0.46$ \\
			\cmidrule{1-6}
			Random & \multirow{2}{*}{{Delete}}                     & \multirow{2}{*}{{1-Ego}}     &  \multirow{2}{*}{{--}}     & \multirow{2}{*}{{3}}   &	$ 22.20$ \scriptsize $\pm  3.01$ \\
			\imle &   &    &      &  &	$ 21.19$ \scriptsize $\pm  1.38$ \\
			\cmidrule{1-6}
			Random & \multirow{2}{*}{{Select}}                     & \multirow{2}{*}{{3-Ego}}     &  \multirow{2}{*}{{--}}     & \multirow{2}{*}{{5}}   &	$ 64.76$ \scriptsize $\pm  5.74$ \\
			\imle &   &    &      &  &	$ 27.28$ \scriptsize $\pm  5.30$ \\
			\cmidrule{1-6}
			Random & \multirow{2}{*}{{Select}}                     & \multirow{2}{*}{{3-Ego}}     &  \multirow{2}{*}{{--}}     & \multirow{2}{*}{{10}}   &	$ 19.64$ \scriptsize $\pm  1.38$ \\
			\imle &   &    &      &  &	$ 14.93$ \scriptsize $\pm  0.83$ \\
			\cmidrule{1-6}
			Random & \multirow{2}{*}{{Select}}                     & \multirow{2}{*}{{5-Ego}}     &  \multirow{2}{*}{{--}}     & \multirow{2}{*}{{3}}   &	$ 39.67$ \scriptsize $\pm  0.22$ \\
			\imle &   &    &      &  &	$ 34.98$ \scriptsize $\pm  1.52$ \\
			\bottomrule
		\end{tabular}
	}

    \label{tab:qm9}
    \end{subtable}%
    \begin{subtable}{.50\linewidth}
      \centering
        \caption{I-MLE with \esan on the \textsc{Zinc} dataset.}
    
	\resizebox{0.8\textwidth}{!}{ 	\renewcommand{\arraystretch}{1.0}
		\begin{tabular}{@{}l <{\enspace}@{}cccccr@{}}	\toprule
			 \multicolumn{5}{c}{\textbf{Method}} &  \multirow{2}{*}{\textbf{MAE} $\downarrow$} &\multirow{2}{*}{\textbf{Time in s}}
			 \\
			 \cmidrule{1-5}
			& \textsc{Operat.} & \textsc{Type} & {\textsc{\#}}         &  {\textsc{\# Subg.}}     &    \\	\toprule

			\esan &  \multirow{3}{*}{{Delete}}  &  \multirow{3}{*}{{Vertex}}  & \multicolumn{2}{c}{All Vertexs} &	$ 0.171$ \scriptsize $\pm  0.010$  &  $ 11.86$ \scriptsize $\pm 0.110$\\
	
			\imle &                   &   &  \multirow{2}{*}{{1}}     & \multirow{2}{*}{{2}}   &	$ 0.177$ \scriptsize $\pm  0.016$ & $ 3.449$ \scriptsize $\pm 0.082$\\
			Random & & & & & $0.214$ \scriptsize $\pm  0.007$ & $2.910$ \scriptsize $\pm 0.071$ \\
        	\cmidrule{1-7}

				\esan & \multirow{3}{*}{{Delete}}                   & \multirow{3}{*}{{Edge}}   & \multicolumn{2}{c}{All edges}  &	$ 0.172$ \scriptsize $\pm  0.008$ & $12.260$ \scriptsize $\pm 0.120$ \\
			\imle &   &      &  \multirow{2}{*}{{1}}     & \multirow{2}{*}{{3}} &	$0.222$ \scriptsize $\pm  0.003$ &$3.425$ \scriptsize $\pm  0.070$ \\
			Random & & & & & $0.214$ \scriptsize $\pm 0.008$ & $2.842$ \scriptsize $\pm 0.063$ \\
        	\cmidrule{1-7}
        	
			\esan & \multirow{3}{*}{{Delete}}                     & \multirow{3}{*}{{Edge}}       & \multicolumn{2}{c}{All edges}  &	-- & --\\
			\imle &   &    &   \multirow{2}{*}{{2}}     & \multirow{2}{*}{{3}} &	$0.171$ \scriptsize $\pm  0.009$ &	$4.538$ \scriptsize $\pm  0.091$ \\
			Random & & & & & -- & -- \\
			\cmidrule{1-7}

				\esan & \multirow{3}{*}{{Select}}                     & \multirow{3}{*}{{3-Ego}}     & \multicolumn{2}{c}{All 3-ego nets} &	$ 0.126$ \scriptsize $\pm  0.006$ & $6.825$ \scriptsize $\pm 0.021$\\
			\imle &   &    & \multirow{2}{*}{{--}}     & \multirow{2}{*}{{10}}  &	$ 0.181$ \scriptsize $\pm  0.010$ & $3.907$ \scriptsize $\pm 0.015$\\
			Random & & & & & $0.188$ \scriptsize $\pm 0.004$ & $4.502$ \scriptsize $\pm 0.043$ \\
			\bottomrule
		\end{tabular}
	}
    \label{tab:z_esan}
    \vspace{80pt}
    \end{subtable} 
\end{table}
See~\cref{tab:protein} for results on \textsc{Proteins}. We sampled by deleting one and three vertices three times, and deleting three edges three times. 
\begin{table}[!htb]
    \centering
    \caption{Results for the \textsc{Proteins} dataset.}
    	\resizebox{0.4\textwidth}{!}{ 
\begin{tabular}{@{}l <{\enspace}@{}ccccc@{}}	\toprule
			 \multicolumn{5}{c}{\textbf{Method}} &  \multirow{1}{*}{\textbf{ROCAUC} $\uparrow$}
			 \\
			 \cmidrule{1-6}
			 \multicolumn{5}{c}{Baseline}   &   $0.775$  \scriptsize	$\pm 0.034$ \\
			 \multicolumn{5}{c}{GIN ~\citep{Xu+2018b}}   &   $0.762$  \scriptsize	$\pm 0.028$ \\
			 \multicolumn{5}{c}{GIN + ID-GNN ~\citep{You+2021}}   &   $0.754$  \scriptsize	$\pm 0.027$ \\
			 \multicolumn{5}{c}{DropEdge ~\citep{Ron+2020}}   &   $0.735$  \scriptsize	$\pm 0.045$ \\
			 \multicolumn{5}{c}{PPGN ~\citep{Mar+2019}}   &   $0.772$  \scriptsize	$\pm 0.047$ \\
			 \multicolumn{5}{c}{CIN ~\citep{Bod+2021b}}   &   $0.770$  \scriptsize	$\pm 0.043$ \\
			\cmidrule{1-6}
			& \textsc{Operat.} & \textsc{Type} & {\textsc{\#}}         &  {\textsc{\# Subg.}}     &    \\	\toprule
       
			Random & \multirow{2}{*}{{Delete}}                     & \multirow{2}{*}{{Vertex}}     &  \multirow{2}{*}{{1}}     & \multirow{2}{*}{{3}}   &	$ 0.760$ \scriptsize $\pm  0.011$ \\
			\imle &   &    &      &  &	$ 0.775$ \scriptsize $\pm  0.014$ \\
        	\cmidrule{1-6}
        	Random & \multirow{2}{*}{{Delete}}                     & \multirow{2}{*}{{Vertex}}     &  \multirow{2}{*}{{3}}     & \multirow{2}{*}{{3}}   &	$ 0.769$ \scriptsize $\pm  0.019$ \\
			\imle &   &    &      &  &	$ 0.783$ \scriptsize $\pm  0.012$ \\
        	\cmidrule{1-6}
        	
        	Random & \multirow{2}{*}{{Delete}}                     & \multirow{2}{*}{{Edge}}     &  \multirow{2}{*}{{3}}     & \multirow{2}{*}{{3}}   &	$0.764$ \scriptsize $0.024$ \\
			\imle &   &    &      &  &	$0.780$ \scriptsize $0.013$ \\
        	\bottomrule

		\end{tabular}}
    \label{tab:protein}
\end{table}
See~\cref{tab:exp} for results on \textsc{EXP}. We examine the accuracy by selecting three subgraphs with one node or edge deletion.
\begin{table}[!htb]
    \centering
    \caption{Results for the \textsc{Exp} dataset.}
    
	\resizebox{0.4\textwidth}{!}{ 
\begin{tabular}{@{}l <{\enspace}@{}ccccr@{}}	\toprule
			 \multicolumn{5}{c}{\textbf{Method}} &  \multirow{1}{*}{\textbf{Accuracy} $\uparrow$}
			 \\
			 \cmidrule{1-6}
			 \multicolumn{5}{c}{Baseline}   &   $0.522$  \scriptsize	$\pm 0.003$ \\
			 \multicolumn{5}{c}{GIN ~\citep{Xu+2018b}}   &   $0.511$  \scriptsize	$\pm 0.021$ \\
			 \multicolumn{5}{c}{GIN + ID-GNN ~\citep{You+2021}}   &   $1.000$  \scriptsize	$\pm 0.000$ \\
			\cmidrule{1-6}
			& \textsc{Operat.} & \textsc{Type} & {\textsc{\#}}         &  {\textsc{\# Subg.}}     &    \\	\toprule

			Random & \multirow{2}{*}{{Delete}}                     & \multirow{2}{*}{{Vertex}}     &  \multirow{2}{*}{{1}}     & \multirow{2}{*}{{3}}   &	$0.943$ \scriptsize $\pm 0.002$ \\
			\imle &   &    &      &  &	$ 1.000$ \scriptsize $\pm  0.000$ \\
        	\cmidrule{1-6}
			Random & \multirow{2}{*}{{Delete}}                     & \multirow{2}{*}{{Edge}}     &  \multirow{2}{*}{{1}}     & \multirow{2}{*}{{3}}   &	$0.946$ \scriptsize $\pm 0.002$ \\
			\imle &   &    &      &  &	$0.999$ \scriptsize $\pm 0.001$ \\
			\bottomrule
		\end{tabular}}
    \label{tab:exp}
\end{table}
See~\cref{t_zc} for results on \textsc{Zinc} using a GIN model. See~\cref{tab:zinc_pna} for results using PNA.
\begin{table}[!htb]
    \centering
    
    \caption{Results for the \textsc{Zinc} dataset with GIN model.}
    	\resizebox{0.4\textwidth}{!}{ 
\begin{tabular}{@{}l <{\enspace}@{}ccccc@{}}	\toprule
			 \multicolumn{5}{c}{\textbf{Method}} &  \multirow{1}{*}{\textbf{MAE} $\downarrow$}
			 \\
			 \cmidrule{1-6}
			 \multicolumn{5}{c}{PNA ~\citep{Cor+2020}}   &   $0.188$  \scriptsize	$\pm 0.004$ \\
			 \cmidrule{1-6}
			 \multicolumn{5}{c}{Baseline}   &   $0.207$  \scriptsize	$\pm 0.006$ \\
			 \multicolumn{5}{c}{PNA~\citep{Corso+2020}}   &   $0.188$  \scriptsize	$\pm 0.004$ \\
			 \multicolumn{5}{c}{DGN~\citep{beaini2020directional}}   &   $0.168$  \scriptsize	$\pm 0.003$ \\
			 \multicolumn{5}{c}{GIN~\citep{Xu+2018b}}   &   $0.252$  \scriptsize	$\pm 0.017$ \\
			 \multicolumn{5}{c}{HIMP~\citep{fey2020hierarchical}}   &   $0.151$  \scriptsize	$\pm 0.006$ \\
			 \multicolumn{5}{c}{GNS~\citep{bouritsas2022improving}}   &   $0.108$  \scriptsize	$\pm 0.018$ \\
			 \multicolumn{5}{c}{CIN~\citep{Bod+2021b}}   &   $0.094$  \scriptsize	$\pm 0.004$ \\
			\cmidrule{1-6}
			& \textsc{Operat.} & \textsc{Type} & {\textsc{\#}}         &  {\textsc{\# Subg.}}     &    \\	\toprule

			Random & \multirow{2}{*}{{Delete}}                     & \multirow{2}{*}{{Vertex}}     &  \multirow{2}{*}{{1}}     & \multirow{2}{*}{{1}}   &	$ 0.378$ \scriptsize $\pm  0.004$ \\
			\imle &   &    &      &  &	$ 0.287$ \scriptsize $\pm  0.015$ \\
        	\cmidrule{1-6}
			Random & \multirow{2}{*}{{Delete}}                     & \multirow{2}{*}{{Vertex}}     &  \multirow{2}{*}{{1}}     & \multirow{2}{*}{{3}}   &	$ 0.283$ \scriptsize $\pm  0.003$ \\
			\imle &   &    &      &  &	$ 0.194$ \scriptsize $\pm  0.007$ \\
        	\cmidrule{1-6}
        	Random & \multirow{2}{*}{{Delete}}                     & \multirow{2}{*}{{Vertex}}     &  \multirow{2}{*}{{1}}     & \multirow{2}{*}{{10}}   &	$ 0.234$ \scriptsize $\pm  0.005$ \\
			\imle &   &    &      &  &	$ 0.217$ \scriptsize $\pm  0.003$ \\
        	\cmidrule{1-6}
        	
        	Random & \multirow{2}{*}{{Delete}}                     & \multirow{2}{*}{{Vertex}}     &  \multirow{2}{*}{{3}}     & \multirow{2}{*}{{3}}   &	$ 0.265$ \scriptsize $\pm  0.003$ \\
			\imle &   &    &      &  &	$ 0.184$ \scriptsize $\pm  0.006$ \\
        	\cmidrule{1-6}
        	Random & \multirow{2}{*}{{Delete}}                     & \multirow{2}{*}{{Vertex}}     &  \multirow{2}{*}{{3}}     & \multirow{2}{*}{{10}}   &	$ 0.275$ \scriptsize $\pm  0.010$ \\
			\imle &   &    &      &  &	$ 0.240$ \scriptsize $\pm  0.003$ \\
        	\cmidrule{1-6}
        	
        	Random & \multirow{2}{*}{{Delete}}                     & \multirow{2}{*}{{Vertex}}     &  \multirow{2}{*}{{10}}     & \multirow{2}{*}{{10}}   &	$ 0.210$ \scriptsize $\pm  0.006$ \\
			\imle &   &    &      &  &	$ 0.204$ \scriptsize $\pm  0.004$ \\
        	\cmidrule{1-6}
        	
			Random & \multirow{2}{*}{{Delete}}                     & \multirow{2}{*}{{Edge}}     &  \multirow{2}{*}{{3}}     & \multirow{2}{*}{{1}}   &	$ 0.382$ \scriptsize $\pm  0.004$ \\
			\imle &   &    &      &  &	$ 0.325$ \scriptsize $\pm  0.019$ \\
        	\cmidrule{1-6}
        	Random & \multirow{2}{*}{{Delete}}                     & \multirow{2}{*}{{Edge}}     &  \multirow{2}{*}{{3}}     & \multirow{2}{*}{{3}}   &	$ 0.192$ \scriptsize $\pm  0.002$ \\
			\imle &   &    &      &  &	$ 0.176$ \scriptsize $\pm  0.006$ \\
        	\cmidrule{1-6}
        	Random & \multirow{2}{*}{{Delete}}                     & \multirow{2}{*}{{Edge}}     &  \multirow{2}{*}{{3}}     & \multirow{2}{*}{{10}}   &	$ 0.187$ \scriptsize $\pm  0.002$ \\
			\imle &   &    &      &  &	$ 0.180$ \scriptsize $\pm  0.006$ \\
        	\cmidrule{1-6}
        	Random & \multirow{2}{*}{{Delete}}                     & \multirow{2}{*}{{Edge}}     &  \multirow{2}{*}{{10}}     & \multirow{2}{*}{{3}}   &	$ 0.173$ \scriptsize $\pm  0.007$ \\
			\imle &   &    &      &  &	$ 0.162$ \scriptsize $\pm  0.002$ \\
        	\cmidrule{1-6}
        	Random & \multirow{2}{*}{{Delete}}                     & \multirow{2}{*}{{Edge}}     &  \multirow{2}{*}{{10}}     & \multirow{2}{*}{{10}}   &	$ 0.169$ \scriptsize $\pm  0.013$ \\
			\imle &   &    &      &  &	$ 0.155$ \scriptsize $\pm  0.004$ \\
        	\cmidrule{1-6}
        	
			Random & \multirow{2}{*}{{Select}}                     & \multirow{2}{*}{{Vertex}}     &  \multirow{2}{*}{{20}}     & \multirow{2}{*}{{3}}   &	$ 0.384$ \scriptsize $\pm  0.011$ \\
			\imle &   &    &      &  &	$ 0.313$ \scriptsize $\pm  0.016$ \\
			\cmidrule{1-6}
			Random & \multirow{2}{*}{{Select}}                     & \multirow{2}{*}{{Edge}}     &  \multirow{2}{*}{{20}}     & \multirow{2}{*}{{3}}   &	$ 0.274$ \scriptsize $\pm  0.012$ \\
			\imle &   &    &      &  &	$ 0.261$ \scriptsize $\pm  0.014$ \\
			\cmidrule{1-6}
			
			Random & \multirow{2}{*}{{Delete}}                     & \multirow{2}{*}{{$1$-Ego}}     &  \multirow{2}{*}{{--}}     & \multirow{2}{*}{{3}}   &	$ 0.330$ \scriptsize $\pm  0.002$ \\
			\imle &   &    &      &  &	$ 0.208$ \scriptsize $\pm  0.010$ \\
			\cmidrule{1-6}
			Random & \multirow{2}{*}{{Delete}}                     & \multirow{2}{*}{{$1$-Ego}}     &  \multirow{2}{*}{{--}}     & \multirow{2}{*}{{10}}   &	$ 0.285$ \scriptsize $\pm  0.006$ \\
			\imle &   &    &      &  &	$ 0.260$ \scriptsize $\pm  0.041$ \\
			\cmidrule{1-6}
			
			Random & \multirow{2}{*}{{Select}}                     & \multirow{2}{*}{{$7$-Ego}}     &  \multirow{2}{*}{{--}}     & \multirow{2}{*}{{3}}   &	$ 0.464$ \scriptsize $\pm  0.023$ \\
			\imle &   &    &      &  &	$ 0.257$ \scriptsize $\pm  0.004$ \\
			\bottomrule
		\end{tabular}}
    \label{t_zc}
\end{table}
\begin{table}[!htb]
    \centering
    \caption{Results for the \textsc{Zinc} dataset with PNA model.}
    \resizebox{0.4\textwidth}{!}{ 
\begin{tabular}{@{}l <{\enspace}@{}ccccc@{}}	\toprule
			 \multicolumn{5}{c}{\textbf{Method}} &  \multirow{1}{*}{\textbf{MAE} $\downarrow$}
			 \\
			 \cmidrule{1-6}
			 \multicolumn{5}{c}{PNA ~\citep{Cor+2020}}   &   $0.188$  \scriptsize	$\pm 0.004$ \\
			 \multicolumn{5}{c}{GIN ~\citep{Xu+2018b}}   &   $0.252$  \scriptsize	$\pm 0.017$ \\
			 \multicolumn{5}{c}{DGN ~\citep{beaini2020directional}}   &   $0.168$  \scriptsize	$\pm 0.003$ \\
			\cmidrule{1-6}
			\multicolumn{5}{c}{Baseline}   &   $0.174$  \scriptsize	$\pm 0.003$ \\
			\cmidrule{1-6}
			& \textsc{Operat.} & \textsc{Type} & {\textsc{\#}}         &  {\textsc{\# Subg.}}     &    \\	\toprule
       
			Random & \multirow{2}{*}{{Delete}}                     & \multirow{2}{*}{{Vertex}}     &  \multirow{2}{*}{{1}}     & \multirow{2}{*}{{3}}   &	$ 0.260$ \scriptsize $\pm  0.001$ \\
			\imle &   &    &      &  &	$ 0.168$ \scriptsize $\pm  0.005$ \\
        	\cmidrule{1-6}
        	Random & \multirow{2}{*}{{Delete}}                     & \multirow{2}{*}{{Vertex}}     &  \multirow{2}{*}{{1}}     & \multirow{2}{*}{{10}}   &	$ 0.227$ \scriptsize $\pm  0.004$ \\
			\imle &   &    &      &  &	$ 0.154$ \scriptsize $\pm  0.008$ \\
        	\cmidrule{1-6}
        	Random & \multirow{2}{*}{{Delete}}                     & \multirow{2}{*}{{Vertex}}     &  \multirow{2}{*}{{3}}     & \multirow{2}{*}{{3}}   &	$ 0.226$ \scriptsize $\pm  0.007$ \\
			\imle &   &    &      &  &	$ 0.172$ \scriptsize $\pm  0.001$ \\
        	\cmidrule{1-6}
        	Random & \multirow{2}{*}{{Delete}}                     & \multirow{2}{*}{{Vertex}}     &  \multirow{2}{*}{{3}}     & \multirow{2}{*}{{10}}   &	$ 0.255$ \scriptsize $\pm  0.004$ \\
			\imle &   &    &      &  &	$ 0.164$ \scriptsize $\pm  0.001$ \\
        	\cmidrule{1-6}
        	
        	Random & \multirow{2}{*}{{Delete}}                     & \multirow{2}{*}{{Edge}}     &  \multirow{2}{*}{{3}}     & \multirow{2}{*}{{3}}   &	$ 0.180$ \scriptsize $\pm  0.007$ \\
			\imle &   &    &      &  &	$ 0.159$ \scriptsize $\pm  0.008$ \\
        	\cmidrule{1-6}
        	Random & \multirow{2}{*}{{Delete}}                     & \multirow{2}{*}{{Edge}}     &  \multirow{2}{*}{{10}}     & \multirow{2}{*}{{3}}   &	$ 0.174$ \scriptsize $\pm  0.009$ \\
			\imle &   &    &      &  &	$ 0.161$ \scriptsize $\pm  0.003$ \\
        	\cmidrule{1-6}
			
			Random & \multirow{2}{*}{{Delete}}                     & \multirow{2}{*}{{$1$-Ego}}     &  \multirow{2}{*}{{--}}     & \multirow{2}{*}{{3}}   &	$ 0.325$ \scriptsize $\pm  0.001$ \\
			\imle &   &    &      &  &	$ 0.167$ \scriptsize $\pm  0.005$ \\
			\bottomrule

		\end{tabular}}
    \label{tab:zinc_pna}
\end{table}
See \cref{tab:molbace} for results on \textsc{ogbg-molbace}.
\begin{table}[!htb]
    \centering
    \caption{Results for the \textsc{ogbg-molbace} dataset.}
    
	\resizebox{0.4\textwidth}{!}{ 
\begin{tabular}{@{}l <{\enspace}@{}ccccc@{}}	\toprule
			 \multicolumn{5}{c}{\textbf{Method}} &  \multirow{1}{*}{\textbf{ROCAUC} $\uparrow$}
			 \\
			 \cmidrule{1-6}
			 \multicolumn{5}{c}{Baseline}   &   $0.714$  \scriptsize	$\pm 0.058$ \\
			\cmidrule{1-6}
			& \textsc{Operat.} & \textsc{Type} & {\textsc{\#}}         &  {\textsc{\# Subg.}}     &    \\	\toprule

			Random & \multirow{2}{*}{{Delete}}                     & \multirow{2}{*}{{Vertex}}     &  \multirow{2}{*}{{1}}     & \multirow{2}{*}{{10}}   &	$ 0.719$ \scriptsize $\pm  0.039$ \\
			\imle &   &    &      &  &	$ 0.723$ \scriptsize $\pm  0.066$ \\
        	\cmidrule{1-6}
        	Random & \multirow{2}{*}{{Delete}}                     & \multirow{2}{*}{{Vertex}}     &  \multirow{2}{*}{{3}}     & \multirow{2}{*}{{3}}   &	$ 0.742$ \scriptsize $\pm  0.025$ \\
			\imle &   &    &      &  &	$ 0.771$ \scriptsize $\pm  0.038$ \\
        	\cmidrule{1-6}
        	Random & \multirow{2}{*}{{Delete}}                     & \multirow{2}{*}{{Vertex}}     &  \multirow{2}{*}{{3}}     & \multirow{2}{*}{{5}}   &	$ 0.730$ \scriptsize $\pm  0.026$ \\
			\imle &   &    &      &  &	$ 0.763$ \scriptsize $\pm  0.030$ \\
        	\cmidrule{1-6}
        	Random & \multirow{2}{*}{{Delete}}                     & \multirow{2}{*}{{Vertex}}     &  \multirow{2}{*}{{3}}     & \multirow{2}{*}{{10}}   &	$ 0.716$ \scriptsize $\pm  0.032$ \\
			\imle &   &    &      &  &	$ 0.757$ \scriptsize $\pm  0.019$ \\
        	\cmidrule{1-6}
        	Random & \multirow{2}{*}{{Delete}}                     & \multirow{2}{*}{{Vertex}}     &  \multirow{2}{*}{{10}}     & \multirow{2}{*}{{3}}   &	$ 0.761$ \scriptsize $\pm  0.026$ \\
			\imle &   &    &      &  &	$ 0.791$ \scriptsize $\pm  0.008$ \\
        	\cmidrule{1-6}
        	
        	Random & \multirow{2}{*}{{Delete}}                     & \multirow{2}{*}{{Edge}}     &  \multirow{2}{*}{{1}}     & \multirow{2}{*}{{3}}   &	$ 0.724$ \scriptsize $\pm  0.056$ \\
			\imle &   &    &      &  &	$ 0.735$ \scriptsize $\pm  0.046$ \\
        	\cmidrule{1-6}
        	Random & \multirow{2}{*}{{Delete}}                     & \multirow{2}{*}{{Edge}}     &  \multirow{2}{*}{{5}}     & \multirow{2}{*}{{3}}   &	$ 0.732$ \scriptsize $\pm  0.026$ \\
			\imle &   &    &      &  &	$ 0.756$ \scriptsize $\pm  0.041$ \\
        	\cmidrule{1-6}
        	Random & \multirow{2}{*}{{Delete}}                     & \multirow{2}{*}{{Edge}}     &  \multirow{2}{*}{{10}}     & \multirow{2}{*}{{3}}   &	$ 0.772$ \scriptsize $\pm  0.028$ \\
			\imle &   &    &      &  &	$ 0.777$ \scriptsize $\pm  0.024$ \\
        	\cmidrule{1-6}
        	Random & \multirow{2}{*}{{Delete}}                     & \multirow{2}{*}{{Edge}}     &  \multirow{2}{*}{{10}}     & \multirow{2}{*}{{10}}   &	$ 0.754$ \scriptsize $\pm  0.018$ \\
			\imle &   &    &      &  &	$ 0.784$ \scriptsize $\pm  0.022$ \\
        	\cmidrule{1-6}
        	
			Random & \multirow{2}{*}{{Delete}}                     & \multirow{2}{*}{{$1$-Ego}}     &  \multirow{2}{*}{{--}}     & \multirow{2}{*}{{3}}   &	$ 0.709$ \scriptsize $\pm  0.023$ \\
			\imle &   &    &      &  &	$ 0.757$ \scriptsize $\pm  0.023$ \\
			\cmidrule{1-6}
			Random & \multirow{2}{*}{{Select}}                     & \multirow{2}{*}{{$5$-Ego}}     &  \multirow{2}{*}{{--}}     & \multirow{2}{*}{{3}}   &	$ 0.768$ \scriptsize $\pm  0.039$ \\
			\imle &   &    &      &  &	$ 0.777$ \scriptsize $\pm  0.027$ \\
			\bottomrule
			
		\end{tabular}}
    \label{tab:molbace}
\end{table}
For ease of implementation, we use unordered subgraph aggregation by default. Here, we show results for additional experimental results comparing between ordered and unordered aggregation methods. \cref{tab:molesol_order} and \cref{tab:molbace_order} show results for \textsc{ogbg} datasets, while \cref{tab:zinc_order} and \cref{tab:zinc_pna_order} show results for 
\textsc{Zinc}.
\begin{table}[!htb]
    \centering
    \caption{Results for the \textsc{ogbg-molesol} dataset using ordered and unordered subgraphs.}
    
	\resizebox{0.4\textwidth}{!}{ 
\begin{tabular}{@{}l <{\enspace}@{}ccccc@{}}	\toprule
			 \multicolumn{5}{c}{\textbf{Method}} &  \multirow{1}{*}{\textbf{RSMSE} $\downarrow$}
			 \\
			 \cmidrule{1-6}
			 \multicolumn{5}{c}{Baseline}   &   $1.193$  \scriptsize	$\pm 0.083$ \\
			\cmidrule{1-6}
			& \textsc{Operat.} & \textsc{Type} & {\textsc{\#}}         &  {\textsc{\# Subg.}}     &    \\	\toprule

			Random & \multirow{3}{*}{{Delete}}                     & \multirow{3}{*}{{Vertex}}     &  \multirow{3}{*}{{1}}     & \multirow{3}{*}{{3}}   &	$ 1.215$ \scriptsize $\pm  0.095$ \\
			I-MLE unordered &   &    &      &  &	$ 1.053$ \scriptsize $\pm  0.080$ \\
			I-MLE ordered & & & &   &  $ 0.835$ \scriptsize $\pm  0.079$ \\
        	\cmidrule{1-6}
        	Random & \multirow{3}{*}{{Delete}}                     & \multirow{3}{*}{{Vertex}}     &  \multirow{3}{*}{{2}}     & \multirow{3}{*}{{3}}   &	$ 1.132$ \scriptsize $\pm  0.020$ \\
			I-MLE unordered &   &    &      &  &	$ 1.081$ \scriptsize $\pm  0.021$ \\
			I-MLE ordered & & & &   &  $ 0.850$ \scriptsize $\pm  0.106$ \\
        	\cmidrule{1-6}
        	Random & \multirow{3}{*}{{Delete}}                     & \multirow{3}{*}{{Vertex}}     &  \multirow{3}{*}{{5}}     & \multirow{3}{*}{{3}}   &	$ 0.992$ \scriptsize $\pm  0.115$ \\
			I-MLE unordered &   &    &      &  &	$ 1.115$ \scriptsize $\pm  0.076$ \\
			I-MLE ordered & & & &   &  $ 0.853$ \scriptsize $\pm  0.043$ \\
        	\bottomrule
			
		\end{tabular}}
    \label{tab:molesol_order}
\end{table}
\begin{table}[!htb]
    \centering
    \caption{Results for the \textsc{ogbg-molbace} dataset using ordered and unordered subgraphs.}
    
	\resizebox{0.4\textwidth}{!}{ 
\begin{tabular}{@{}l <{\enspace}@{}ccccc@{}}	\toprule
			 \multicolumn{5}{c}{\textbf{Method}} &  \multirow{1}{*}{\textbf{AUCROC} $\uparrow$}
			 \\
			 \cmidrule{1-6}
			 \multicolumn{5}{c}{Baseline}   &   $0.714$  \scriptsize	$\pm 0.058$ \\
			\cmidrule{1-6}
			& \textsc{Operat.} & \textsc{Type} & {\textsc{\#}}         &  {\textsc{\# Subg.}}     &    \\	\toprule

			Random & \multirow{3}{*}{{Delete}}                     & \multirow{3}{*}{{Vertex}}     &  \multirow{3}{*}{{3}}     & \multirow{3}{*}{{3}}   &	$ 0.742$ \scriptsize $\pm  0.025$ \\
			I-MLE unordered &   &    &      &  &	$ 0.771$ \scriptsize $\pm  0.038$ \\
			I-MLE ordered & & & &   &  $ 0.761$ \scriptsize $\pm  0.011$ \\
        	\cmidrule{1-6}
        	Random & \multirow{3}{*}{{Delete}}                     & \multirow{3}{*}{{Vertex}}     &  \multirow{3}{*}{{3}}     & \multirow{3}{*}{{10}}   &	$ 0.716$ \scriptsize $\pm  0.032$ \\
			I-MLE unordered &   &    &      &  &	$ 0.757$ \scriptsize $\pm  0.019$ \\
			I-MLE ordered & & & &   &  $ 0.776$ \scriptsize $\pm  0.032$ \\
        	\bottomrule
			
		\end{tabular}}
    \label{tab:molbace_order}
\end{table}
\begin{table}[!htb]
    \centering
    \caption{Results for the \textsc{Zinc} dataset using ordered and unordered subgraphs.}
    	\resizebox{0.4\textwidth}{!}{ 
\begin{tabular}{@{}l <{\enspace}@{}ccccc@{}}	\toprule
			 \multicolumn{5}{c}{\textbf{Method}} &  \multirow{1}{*}{\textbf{MAE} $\downarrow$}
			 \\
			 \cmidrule{1-6}
			 \multicolumn{5}{c}{Baseline}   &   $0.207$  \scriptsize	$\pm 0.006$ \\
			\cmidrule{1-6}
			& \textsc{Operat.} & \textsc{Type} & {\textsc{\#}}         &  {\textsc{\# Subg.}}     &    \\	\toprule
       
			Random & \multirow{3}{*}{{Delete}}                     & \multirow{3}{*}{{Vertex}}     &  \multirow{3}{*}{{1}}     & \multirow{3}{*}{{3}}   &	$ 0.283$ \scriptsize $\pm  0.003$ \\
			I-MLE unordered &   &    &      &  &	$ 0.194$ \scriptsize $\pm  0.007$ \\
			I-MLE ordered &   &    &      &  &	$ 0.187$ \scriptsize $\pm  0.004$ \\
        	\bottomrule
		\end{tabular}}
    \label{tab:zinc_order}
\end{table}
\begin{table}[!htb]
    \centering
    \caption{Results for the \textsc{Zinc} dataset using ordered and unordered subgraphs with PNA model.}
    	\resizebox{0.4\textwidth}{!}{ 
\begin{tabular}{@{}l <{\enspace}@{}ccccc@{}}	\toprule
			 \multicolumn{5}{c}{\textbf{Method}} &  \multirow{1}{*}{\textbf{MAE} $\downarrow$}
			 \\
			 \cmidrule{1-6}
			& \textsc{Operat.} & \textsc{Type} & {\textsc{\#}}         &  {\textsc{\# Subg.}}     &    \\	\toprule
       
			Random & \multirow{3}{*}{{Delete}}                     & \multirow{3}{*}{{Vertex}}     &  \multirow{3}{*}{{1}}     & \multirow{3}{*}{{3}}   &	$ 0.260$ \scriptsize $\pm  0.001$ \\
			I-MLE unordered &   &    &      &  &	$ 0.168$ \scriptsize $\pm  0.005$ \\
			I-MLE ordered &   &    &      &  &	$ 0.182$ \scriptsize $\pm  0.005$ \\
        	\cmidrule{1-6}
        	Random & \multirow{3}{*}{{Delete}}                     & \multirow{3}{*}{{Vertex}}     &  \multirow{3}{*}{{1}}     & \multirow{3}{*}{{10}}   &	$ 0.227$ \scriptsize $\pm  0.004$ \\
			I-MLE unordered &   &    &      &  &	$ 0.154$ \scriptsize $\pm  0.008$ \\
			I-MLE ordered &   &    &      &  &	$ 0.181$ \scriptsize $\pm  0.010$ \\
			\cmidrule{1-6}
        	Random & \multirow{3}{*}{{Delete}}                     & \multirow{3}{*}{{Vertex}}     &  \multirow{3}{*}{{3}}     & \multirow{3}{*}{{3}}   &	$ 0.226$ \scriptsize $\pm  0.007$ \\
			I-MLE unordered &   &    &      &  &	$ 0.172$ \scriptsize $\pm  0.008$ \\
			I-MLE ordered &   &    &      &  &	$ 0.186$ \scriptsize $\pm  0.003$ \\
			\cmidrule{1-6}
        	Random & \multirow{3}{*}{{Delete}}                     & \multirow{3}{*}{{Vertex}}     &  \multirow{3}{*}{{3}}     & \multirow{3}{*}{{10}}   &	$ 0.255$ \scriptsize $\pm  0.004$ \\
			I-MLE unordered &   &    &      &  &	$ 0.164$ \scriptsize $\pm  0.001$ \\
			I-MLE ordered &   &    &      &  &	$ 0.175$ \scriptsize $\pm  0.008$ \\
        	\bottomrule
		\end{tabular}}
    \label{tab:zinc_pna_order}
\end{table}
To show the auxiliary loss described in \cref{par: add_upstream} makes a difference, we carry out ablations studies using no auxiliary loss; see \autoref{tab:molesol_aux}, \autoref{tab:zinc_aux} for results.

\begin{table}[!htb]
    \centering
    \caption{Results for the \textsc{ogbg-molesol} dataset, auxiliary loss ablation.}
    
	\resizebox{0.4\textwidth}{!}{ 
\begin{tabular}{@{}l <{\enspace}@{}ccccc@{}}	\toprule
			 \multicolumn{5}{c}{\textbf{Method}} &  \multirow{1}{*}{\textbf{RSMSE} $\downarrow$}
			 \\
			 \cmidrule{1-6}
			 \multicolumn{5}{c}{Baseline}   &   $1.193$  \scriptsize	$\pm 0.083$ \\
			\cmidrule{1-6}
			& \textsc{Operat.} & \textsc{Type} & {\textsc{\#}}         &  {\textsc{\# Subg.}}     &    \\	\toprule

			Random & \multirow{3}{*}{{Delete}}                     & \multirow{3}{*}{{Vertex}}     &  \multirow{3}{*}{{1}}     & \multirow{3}{*}{{3}}   &	$ 1.215$ \scriptsize $\pm  0.095$ \\
			\imle &   &    &      &  &	$ 1.053$ \scriptsize $\pm  0.080$ \\
			I-MLE ablation & & & &   &  $ 1.120$ \scriptsize $\pm  0.092$ \\
        	\cmidrule{1-6}
        	Random & \multirow{3}{*}{{Delete}}                     & \multirow{3}{*}{{Vertex}}     &  \multirow{3}{*}{{2}}     & \multirow{3}{*}{{3}}   &	$ 1.132$ \scriptsize $\pm  0.020$ \\
			\imle &   &    &      &  &	$ 1.081$ \scriptsize $\pm  0.021$ \\
			I-MLE ablation & & & &   &  $ 1.137$ \scriptsize $\pm  0.146$ \\
        	\cmidrule{1-6}
        	Random & \multirow{3}{*}{{Delete}}                     & \multirow{3}{*}{{Vertex}}     &  \multirow{3}{*}{{5}}     & \multirow{3}{*}{{3}}   &	$ 0.992$ \scriptsize $\pm  0.115$ \\
			\imle &   &    &      &  &	$ 1.115$ \scriptsize $\pm  0.076$ \\
			I-MLE ablation & & & &   &  $ 1.247$ \scriptsize $\pm  0.126$ \\
        	\bottomrule
			
		\end{tabular}}
    \label{tab:molesol_aux}
\end{table}

\begin{table}[!htb]
    \centering
    \caption{Results for the \textsc{Zinc} dataset, auxiliary loss ablation.}
    	\resizebox{0.4\textwidth}{!}{ 
\begin{tabular}{@{}l <{\enspace}@{}ccccc@{}}	\toprule
			 \multicolumn{5}{c}{\textbf{Method}} &  \multirow{1}{*}{\textbf{MAE} $\downarrow$}
			 \\
			 \cmidrule{1-6}
			 \multicolumn{5}{c}{Baseline}   &   $0.207$  \scriptsize	$\pm 0.006$ \\
			\cmidrule{1-6}
			& \textsc{Operat.} & \textsc{Type} & {\textsc{\#}}         &  {\textsc{\# Subg.}}     &    \\	\toprule
       
			Random & \multirow{3}{*}{{Delete}}                     & \multirow{3}{*}{{Vertex}}     &  \multirow{3}{*}{{1}}     & \multirow{3}{*}{{3}}   &	$ 0.283$ \scriptsize $\pm  0.003$ \\
			\imle &   &    &      &  &	$ 0.194$ \scriptsize $\pm  0.007$ \\
			I-MLE ablation & & & &   &  $ 0.194$ \scriptsize $\pm  0.004$ \\
        	\cmidrule{1-6}
        	Random & \multirow{3}{*}{{Delete}}                     & \multirow{3}{*}{{Vertex}}     &  \multirow{3}{*}{{3}}     & \multirow{3}{*}{{3}}   &	$ 0.265$ \scriptsize $\pm  0.003$ \\
			\imle &   &    &      &  &	$ 0.184$ \scriptsize $\pm  0.006$ \\
			I-MLE ablation & & & &   &  $ 0.184$ \scriptsize $\pm  0.004$ \\
        	\cmidrule{1-6}

        	Random & \multirow{3}{*}{{Delete}}                     & \multirow{3}{*}{{Edge}}     &  \multirow{3}{*}{{3}}     & \multirow{3}{*}{{3}}   &	$ 0.192$ \scriptsize $\pm  0.002$ \\
			\imle &   &    &      &  &	$ 0.176$ \scriptsize $\pm  0.006$ \\
			I-MLE ablation & & & &   &  $ 0.178$ \scriptsize $\pm  0.008$ \\
        	\cmidrule{1-6}
        	Random & \multirow{3}{*}{{Delete}}                     & \multirow{3}{*}{{Edge}}     &  \multirow{3}{*}{{10}}     & \multirow{3}{*}{{10}}   &	$ 0.169$ \scriptsize $\pm  0.013$ \\
			\imle &   &    &      &  &	$ 0.155$ \scriptsize $\pm  0.004$ \\
			I-MLE ablation & & & &   &  $ 0.162$ \scriptsize $\pm  0.001$ \\
        	\bottomrule
        	
		\end{tabular}}
    \label{tab:zinc_aux}
\end{table}

Finally, we designed more sophisticated subgraph selection methods.
For vertices, we solve an Integer Linear Programming problem (ILP). The objective goal is to select subgraphs, maximizing the sum of the corresponding weights, while the constraints are that each vertex in the original graph much be selected at least once. We compare this combinatorial-optimization-based selection method with unordered \imle. For edges, we grow a Maximum Spanning Tree (MST) on each graph. We repeat this several times to get different subgraph instances. We compare this method to MST-based selection strategy using uniformly sampled  edge weights; see \Cref{tab:molesol_method} and \autoref{tab:molbace_method} for results.

\begin{table}[!htb]
    \centering
    \caption{Results for the \textsc{ogbg-molesol} dataset using  different selection methods.}
    
	\resizebox{0.4\textwidth}{!}{ 
\begin{tabular}{@{}l <{\enspace}@{}ccccc@{}}	\toprule
			 \multicolumn{5}{c}{\textbf{Method}} &  \multirow{1}{*}{\textbf{RSMSE} $\downarrow$}
			 \\
			 \cmidrule{1-6}
			 \multicolumn{5}{c}{Baseline}   &   $1.193$  \scriptsize	$\pm 0.083$ \\
			\cmidrule{1-6}
			& \textsc{Operat.} & \textsc{Type} & {\textsc{\#}}         &  {\textsc{\# Subg.}}     &    \\	\toprule

			Random & \multirow{3}{*}{{Delete}}                     & \multirow{3}{*}{{Vertex}}     &  \multirow{3}{*}{{1}}     & \multirow{3}{*}{{3}}   &	$ 1.215$ \scriptsize $\pm  0.095$ \\
			I-MLE unordered &   &    &      &  &	$ 1.053$ \scriptsize $\pm  0.080$ \\
			I-MLE covered & & & &   &  $ 1.074$ \scriptsize $\pm  0.115$ \\
        	\cmidrule{1-6}
        	Random & \multirow{3}{*}{{Delete}}                     & \multirow{3}{*}{{Vertex}}     &  \multirow{3}{*}{{2}}     & \multirow{3}{*}{{3}}   &	$ 1.132$ \scriptsize $\pm  0.020$ \\
			I-MLE unordered &   &    &      &  &	$ 1.081$ \scriptsize $\pm  0.021$ \\
			I-MLE covered & & & &   &  $ 1.081$ \scriptsize $\pm  0.068$ \\
        	\cmidrule{1-6}
        	Random & \multirow{3}{*}{{Delete}}                     & \multirow{3}{*}{{Vertex}}     &  \multirow{3}{*}{{5}}     & \multirow{3}{*}{{3}}   &	$ 0.992$ \scriptsize $\pm  0.115$ \\
			I-MLE unordered &   &    &      &  &	$ 1.115$ \scriptsize $\pm  0.076$ \\
			I-MLE covered & & & &   &  $ 0.946$ \scriptsize $\pm  0.058$ \\
			\cmidrule{1-6}
			Random & \multirow{2}{*}{{MST}}                     & \multirow{2}{*}{{Edge}}     &  \multirow{2}{*}{{--}}     & \multirow{2}{*}{{3}}   &	$ 1.095$ \scriptsize $\pm  0.021$ \\
			\imle &   &    &      &  &	$ 1.070$ \scriptsize $\pm  0.005$ \\
			\bottomrule
			
		\end{tabular}}
    \label{tab:molesol_method}
\end{table}

\begin{table}[!htb]
    \centering
    \caption{Results for the \textsc{ogbg-molbace} dataset using  different selection methods.}
    
	\resizebox{0.4\textwidth}{!}{ 
\begin{tabular}{@{}l <{\enspace}@{}ccccc@{}}	\toprule
			 \multicolumn{5}{c}{\textbf{Method}} &  \multirow{1}{*}{\textbf{AUCROC} $\uparrow$}
			 \\
			 \cmidrule{1-6}
			 \multicolumn{5}{c}{Baseline}   &   $0.714$  \scriptsize	$\pm 0.058$ \\
			\cmidrule{1-6}
			& \textsc{Operat.} & \textsc{Type} & {\textsc{\#}}         &  {\textsc{\# Subg.}}     &    \\	\toprule

			Random & \multirow{3}{*}{{Delete}}                     & \multirow{3}{*}{{Vertex}}     &  \multirow{3}{*}{{3}}     & \multirow{3}{*}{{3}}   &	$ 0.742$ \scriptsize $\pm  0.025$ \\
			I-MLE unordered &   &    &      &  &	$ 0.771$ \scriptsize $\pm  0.038$ \\
			I-MLE covered & & & &   &  $ 0.765$ \scriptsize $\pm  0.032$ \\
        	\cmidrule{1-6}
        	Random & \multirow{2}{*}{{MST}}                     & \multirow{2}{*}{{Edge}}     &  \multirow{2}{*}{{--}}     & \multirow{2}{*}{{3}}   &	$ 0.740$ \scriptsize $\pm  0.034$ \\
			\imle &   &    &      &  &	$ 0.758$ \scriptsize $\pm  0.025$ \\
			\cmidrule{1-6}
			Random & \multirow{2}{*}{{MST}}                     & \multirow{2}{*}{{Edge}}     &  \multirow{2}{*}{{--}}     & \multirow{2}{*}{{10}}   &	$ 0.741$ \scriptsize $\pm  0.025$ \\
			\imle &   &    &      &  &	$ 0.763$ \scriptsize $\pm  0.027$ \\
			\bottomrule
			
		\end{tabular}}
    \label{tab:molbace_method}
\end{table}

\begin{table}[t!]
	\begin{center}
		\caption{Dataset statistics and properties for graph-level prediction tasks, $^\dagger$---Continuous vertex labels following~\cite{Gil+2017}, the last three components encode 3D coordinates.}
		\resizebox{.8\textwidth}{!}{ 	\renewcommand{\arraystretch}{1.0}
			\begin{tabular}{@{}lcccccc@{}}\toprule
				\multirow{3}{*}{\vspace*{4pt}\textbf{Dataset}}&\multicolumn{6}{c}{\textbf{Properties}}\\
				\cmidrule{2-7}
				                         & Number of  graphs & Number of classes/targets & $\varnothing$ Number of vertices & $\varnothing$ Number of edges & Vertex labels              & Edge labels \\ \midrule
				$\textsc{Alchemy}$       & 202\,579          & 12                        & 10.1                          & 10.4                          & \cmark                   & \cmark      \\
				$\textsc{Qm9}$           & 129\,433          & 12                        & 18.0                          & 18.6                          & \cmark (13+3D)$^\dagger$ & \cmark (4)  \\
			
					$\textsc{Zinc}$       &  249\,456 &1	&23.1 &	24.9     & \cmark  & \cmark           \\
					$\textsc{Exp}$       &   1\,200 &	2 & 44.5& 55.2 &  \cmark & \xmark           \\
					$\textsc{ogbg-molesol}$       &   1\,128 &	1 & 13.3& 13.7 &  \cmark & \cmark           \\
					$\textsc{ogbg-molbace}$       &    1\,513 & 2 & 34.1& 36.9 &  \cmark & \cmark           \\
					$\textsc{Proteins}$       &   1\,113 &	2 & 39.1 & 72.8 &  \cmark & \xmark            \\

				\bottomrule
			\end{tabular}}
		\label{ds}
	\end{center}
\end{table}

\section{$\PWLk{k}$ and \PMPNN{k}: Omitted proofs}
In the following, we outline the proofs from the main paper.

\subsection{Equivalence $\PWLk{k}$ and \PMPNN{k}}

Let $\mathcal{G}$ be the set of all vertex-labeled graphs and $F$ be a set of permutation-invariant functions over $\mathcal{G}$, e.g., the functions expressible by some GNN architecture. Then, following~\cite{Azi+2020}, we define an equivalence relation $\rho$ where for graphs $G$ and $H$ in $\mathcal{G}$
\begin{align*}
	(G,H) \in \rho(F) \iff \text{for all } f \in F, f(G) = f(H)
\end{align*}
holds. When $F$ is replaced by an architecture's name, we mean the set of function expressible with that architecture.

\begin{proposition}[Proposition 1 in the main text] For all $k \geq 1$, it holds that 
	\begin{align*}
		\rho(\text{\PMPNNs{k}}) =  \rho(\PWLk{k}).
	\end{align*}
\end{proposition}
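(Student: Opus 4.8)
The plan is to prove the two inclusions $\rho(\PWLk{k}) \subseteq \rho(\PMPNNs{k})$ and $\rho(\PMPNNs{k}) \subseteq \rho(\PWLk{k})$ separately, following the standard template by which message-passing GNNs are shown equivalent to color refinement and its refinements. The first inclusion is the statement that no \PMPNN{k} distinguishes more graphs than $\PWLk{k}$; the second is the construction of one \PMPNN{k} instance that reaches the full power of $\PWLk{k}$.

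For $\rho(\PWLk{k}) \subseteq \rho(\PMPNNs{k})$ I would fix an arbitrary \PMPNN{k} instance and argue by induction on the number $i$ of message-passing layers that the partition of $\bigsqcup_G (V(G)\times G_k)$ induced by the $\PWLk{k}$ coloring $C_i$ refines the one induced by the features $\hb^\tup{i}_{v,\pb}$: if $C_i(v,\pb)=C_i(w,\qb)$ (for $v,\pb$ in $G$ and $w,\qb$ in $H$) then $\hb^\tup{i}_{v,\pb}=\hb^\tup{i}_{w,\qb}$, and likewise $C_0(v,\pb)=C_0(w,\qb)$ forces $\pmb\pi_{v,\pb}=\pmb\pi_{w,\qb}$ since $\pmb\pi$ depends only on $\mathsf{atp}(v,\st(\pb))=C_0(v,\pb)$. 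The base case is immediate from $\hb^\tup{0}_{v,\pb}=\UPD(\mathsf{atp}(v,\st(\pb)))$. For the inductive step, injectivity of $\REL$ implies that $C_{i+1}(v,\pb)=C_{i+1}(w,\qb)$ gives both $C_i(v,\pb)=C_i(w,\qb)$ and equality of the multisets $\oms C_i(u,\pb)\mid u\in\square\cms$; the induction hypothesis then yields equality of the centre features and of the corresponding multisets of $\hb^\tup{i}$-features (equal multisets of colors give equal multisets of features, as the features are a function of the colors), hence $\hb^\tup{i+1}_{v,\pb}=\hb^\tup{i+1}_{w,\qb}$. It then remains to pass through the subgraph aggregation \cref{vpmpnn} and the $\RO$ layer: the admissible set $\{\pb\in G_k : \pmb\pi_{v,\pb}\neq\mathbf{0}\}$ is a union of $C_0$-classes, hence of $C_\infty$-classes (injectivity of $\REL$ makes $C_\infty$ refine $C_0$), so the multiset $\oms\hb^\tup{T}_{v,\pb}\mid \pb\in G_k,\ \pmb\pi_{v,\pb}\neq\mathbf{0}\cms$ is a function of $\oms C_\infty(v,\pb)\mid\pb\in G_k\cms$, hence of the aggregated color $C(v)$ from \cref{vp}. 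Thus $C(v)=C(w)$ implies $\hb^\tup{T}_v=\hb^\tup{T}_w$, and applying $\RO$ to the vertex multisets shows that if $\PWLk{k}$ does not distinguish $G$ and $H$ then no \PMPNN{k} does.

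For $\rho(\PMPNNs{k}) \subseteq \rho(\PWLk{k})$ I would take any pair $G,H$ distinguished by $\PWLk{k}$ and build a \PMPNN{k} instance distinguishing it. Choose $T \geq \max\{|V(G)|,|V(H)|\}$ layers, so that $\WLk{1}$ on each marked graph has stabilised and hence $C_T = C_\infty$; all colors and multisets arising on $G$ and $H$ then lie in finite sets, so by the standard injective-encoding argument (sum aggregation composed with a sufficiently expressive MLP is injective on finite multisets over a countable domain) one may realise $\UPD$, $\AGG^\tup{i}$, $\UPD^\tup{i}$, $\pAGG$ and $\RO$ by injective maps, and take $\UPD_{\pmb\pi}$ to be a constant nonzero map so that the aggregation in \cref{vpmpnn} ranges over all of $G_k$, exactly as \cref{vp} does. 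A straightforward induction dual to the one above then shows that $\hb^\tup{i}_{v,\pb}$ determines and is determined by $C_i(v,\pb)$, whence $\hb^\tup{T}_v$ determines $C(v)$ and $\hb_G$ determines $\oms C(v)\mid v\in V(G)\cms$; this instance therefore distinguishes $G$ and $H$, so $(G,H)\notin\rho(\PMPNNs{k})$, which establishes the inclusion and, together with the first part, the equality.

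I expect the bookkeeping around the subgraph-selection mask $\pmb\pi$ to be the only genuinely delicate point: one must check that the admissible-subgraph set is stable under $\PWLk{k}$ refinement so that restricting \cref{vpmpnn} to it does not break the correspondence with \cref{vp}, and (for the construction) that taking the number of layers large enough relative to the pair at hand is legitimate, which it is since a \PMPNN{k} instance is allowed an arbitrary fixed depth $T$. The remaining content is the routine refinement/injectivity argument familiar from the $\WLk{1}$–GNN equivalence.
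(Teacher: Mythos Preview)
Your proposal is correct and follows essentially the same approach as the paper: the paper also splits into the two inclusions, proves the first via an induction on layers showing $C_i(v,\pb)=C_i(w,\pb')\Rightarrow \hb_{v,\pb}^\tup{i}=\hb_{w,\pb'}^\tup{i}$ and $\pmb\pi_{v,\pb}=\pmb\pi_{w,\pb'}$ (their Claim~1), then uses the resulting bijection between $\oms C_\infty(v,\pb)\mid\pb\in G_k\cms$ and $\oms C_\infty(w,\pb)\mid\pb\in G_k\cms$ to conclude equality after $\pAGG$; for the second inclusion it invokes injective $\UPD^\tup{i}$, $\AGG^\tup{i}$, $\pAGG$ (citing the proof of Theorem~2 in Morris et al.) together with a constant nonzero $\pmb\pi$, exactly as you do. Your treatment of the mask $\pmb\pi$ as determining a union of $C_0$-classes (hence of $C_\infty$-classes) is a slightly more explicit phrasing of the same observation the paper uses in its bijection step.
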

To show the above result, we show the inclusions 
$\rho(\text{\PMPNNs{k}})\subseteq\rho(\PWLk{k})$ and $\rho(\PWLk{k})\subseteq \rho(\text{\PMPNNs{k}})$ in \cref{lem:pmpnn_pwl} and \cref{wlinmp}, respectively.
\begin{lemma}\label{lem:pmpnn_pwl}
For all $k \geq 1$, it holds that  
\begin{linenomath}
 	\postdisplaypenalty=0
\begin{align*}
	 \rho(\PWLk{k}) \subseteq  \rho(\text{\PMPNNs{k}}).
\end{align*}
 \end{linenomath}
\end{lemma}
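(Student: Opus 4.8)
The plan is to establish the contrapositive: if some \PMPNN{k} instance distinguishes two graphs $G$ and $H$, then the $\PWLk{k}$ already distinguishes them. I would run the $\PWLk{k}$ ``in parallel'' on the disjoint union $G \uplus H$ and show that, throughout any \PMPNN{k} computation, the feature carried by a pair (vertex, ordered $k$-vertex subgraph) is a function of the $\PWLk{k}$ color of that pair. This is the ``GNN $\le \WLk{1}$'' argument of \citet{Mor+2019,Xu+2018b} lifted to subgraph-annotated graphs.

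The core is an induction on the layer index $i$ with hypothesis: for all pairs $(v,\pb)$ and $(w,\qb)$ in $G \uplus H$, if $C_i(v,\pb) = C_i(w,\qb)$ then $\hb_{v,\pb}^{\tup{i}} = \hb_{w,\qb}^{\tup{i}}$ and $\pmb\pi_{v,\pb} = \pmb\pi_{w,\qb}$. For $i = 0$ this is immediate, since $C_0(v,\pb) = \mathsf{atp}(v,\st(\pb))$ while both the initial feature and the mask $\pmb\pi_{v,\pb}$ are obtained by applying a fixed function ($\UPD$, resp.\ $\UPD_{\pmb\pi}$) to this atomic type --- any extra vertex features concatenated at initialization being assumed consistent with $l$ and hence already determined by $\mathsf{atp}$. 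For the step, I would use that $\REL$ is injective and retains the previous color: from $C_{i+1}(v,\pb) = C_{i+1}(w,\qb)$ one gets $C_i(v,\pb) = C_i(w,\qb)$ together with equality of the neighbor-color multisets $\oms C_i(u,\pb) \mid u \in \square \cms = \oms C_i(u',\qb) \mid u' \in \square' \cms$, where $\square$ is $N_G(v)$ or $V(G)$ according to the fixed variant. Transporting the induction hypothesis along a color-preserving bijection of these neighborhoods yields equality of the feature multisets, and since $\UPD^{\tup{i+1}}$ and $\AGG^{\tup{i+1}}$ are fixed functions, $\hb_{v,\pb}^{\tup{i+1}} = \hb_{w,\qb}^{\tup{i+1}}$; the $\pmb\pi$-equality is inherited from the base case because $C_{i+1}$ refines $C_0$ (again by injectivity of $\REL$).

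Since the refinement chain $C_0, C_1, \dots$ stabilizes, the implication persists with $C_\infty$ in place of $C_i$ and the after-$T$-layers feature $\hb^{\tup{T}}$ in place of $\hb^{\tup{i}}$. I would then push this through the subgraph-aggregation step \eqref{vpmpnn} and the readout. If $C(v) = C(v')$, then injectivity of $\REL$ in \eqref{vp} supplies a bijection $\beta$ between the ordered $k$-vertex subgraphs of the two graphs with $C_\infty(v,\pb) = C_\infty(v',\beta(\pb))$; by the claim $\beta$ preserves both $\hb^{\tup{T}}$ and the masks, so it restricts to a bijection of the \emph{selected} subgraphs (those with $\pmb\pi \neq \mathbf{0}$) and $\pAGG$ returns the same value, i.e.\ $\hb_v^{\tup{T}} = \hb_{v'}^{\tup{T}}$. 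Finally, if $\PWLk{k}$ does not distinguish $G$ and $H$, the multisets of final vertex colors coincide, hence so do $\oms \hb_v^{\tup{T}} \mid v \in V(G) \cms$ and $\oms \hb_v^{\tup{T}} \mid v \in V(H) \cms$, and $\RO$ outputs the same graph representation; thus no \PMPNN{k} distinguishes $G$ from $H$.

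I expect the main obstacle to be bookkeeping rather than genuine difficulty: one has to make sure the single parallel run of $\PWLk{k}$ on $G \uplus H$ supplies the required \emph{cross-graph} bijections at three levels (neighborhoods within each layer, subgraphs in the aggregation step, and vertices in the readout), that the ``$\square$'' choice is fixed consistently in the $\PWLk{k}$ and in the \PMPNN{k}, and that the selection masks $\pmb\pi$ depend only on atomic types so they never separate elements of a single color class. The base case additionally needs to be phrased so that arbitrary initial vertex attributes, being consistent with $l$, are subsumed by the atomic type.
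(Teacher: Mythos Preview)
Your proposal is correct and follows essentially the same route as the paper: an induction on layers establishing $C_i(v,\pb)=C_i(w,\qb)\Rightarrow \hb_{v,\pb}^{\tup{i}}=\hb_{w,\qb}^{\tup{i}}$ and $\pmb\pi_{v,\pb}=\pmb\pi_{w,\qb}$ (the paper packages this as a Claim), then transporting the resulting bijection through $\pAGG$ and $\RO$. If anything, your framing is slightly more careful than the paper's, which phrases the argument for two vertices in a single graph $G$ rather than across $G$ and $H$; your use of the disjoint union to obtain the cross-graph bijections at all three levels makes the connection to $\rho$ more explicit.
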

\begin{proof}
We show that if $\PWLk{k}$ does not distinguish two vertices $v$ and $w$ in a graph $G$, then any \PMPNN{k} will also not distinguish them. That is, any \PMPNN{k} will compute the same feature for the two vertices, which implies the result.

Let us make precise what we will show. 
Let $v$ and $w \in V(G)$ such that $C(v)=C(w)$. We recall that 
$C(v) \coloneqq  \REL \bigl(\oms C_{\infty}(v,\pb) \mid \pb\in G_k \cms \bigr)$
and 
$C(w) \coloneqq  \REL \bigl(\oms C_{\infty}(w,\pb) \mid \pb\in G_k \cms \bigr)$. For $C(v)=C(w)$ to hold, we therefore need that 
 \begin{linenomath}
 	\postdisplaypenalty=0
\begin{align}\label{equalcolor}
	C_{i}(v)\coloneqq\oms C_{i}(v,\pb) \mid \pb\in G_k  \cms=	\oms C_{i}(w,\pb) \mid \pb\in G_k)  \cms =:C_{i}(w)
\end{align}
 \end{linenomath}
for all iterations $i$ of the $\PWLk{k}$.

We next turn to \PMPNNs{k}. Let us denote by $\hb_{v,\pb}^\tup{i}$ the vertex feature of $v$ for $\pb \in G_k$ computed in layer $i$ of a \PMPNN{k}. We define
 \begin{linenomath}
 	\postdisplaypenalty=0
\begin{align*}
	\hb_{v}^\tup{i}\coloneqq  \pAGG\bigl(\oms\hb_{v,\pb}^\tup{i}\mid \pb\in G_k \text{ s.t. } \pmb\pi_{v,\pb}\neq\mathbf{0}\cms\bigr).
\end{align*}
 \end{linenomath}
We now show
 \begin{linenomath}
 	\postdisplaypenalty=0
\begin{align}\label{assum}
	C_{i}(v)=C_{i}(w)\Longrightarrow \hb_{v}^\tup{i}=\hb_{w}^\tup{i},
\end{align}
 \end{linenomath}
 for $i \geq 0$. To do so, we first show the following result. 
 \begin{claim} It holds that\label{claim_equal}
	\begin{equation}\label{imply_ind}
		C_i(v,\pb) = C_i(w,\pb') \Longrightarrow \hb_{v,\pb}^\tup{i}=\hb_{w,\pb'}^\tup{i} \text{ and } \pmb\pi_{v,\pb} =  \pmb\pi_{w,\pb'}
	\end{equation} 
	for all $\pb$ and $\pb' \in G_k$ and $i \geq 0$.
\end{claim}
 \begin{proof}
 We proof the result by induction on the number of iterations or layers $i$.
 The base case, $i = 0$, follows by definition of the initial coloring of the $\PWLk{k}$ and the initial features of \PMPNNs{k}, that is, both are dictated solely by the atomic type. The same holds for $\pmb\pi_{v,\pb}$ and  $\pmb\pi_{w,\pb'}$, which remain unchanged for layers $i > 0$.
 
Assume~\cref{imply_ind} holds for the first $i$ iteration and further assume $C_{i+1}(v,\pb) = C_{i+1}(w,\pb')$ holds. Hence, $C_i(v,\pb) = C_i(w,\pb')$ and $\oms C_{i}(u,\pb)\mid u \in \square  \cms = \oms C_{i}(u,\pb')\mid u \in \square  \cms$. We now define the multi-sets
 \begin{linenomath}
 	\postdisplaypenalty=0
	\begin{align*}
	M_{v,\pb}^\tup{i+1} \coloneqq \oms \hb_{u,\pb}^\tup{i} \mid u \in \square  \cms \quad \text{ and } \quad M_{w,\pb'}^\tup{i+1} \coloneqq \oms \hb_{u,\pb'}^\tup{i} \mid u\in \square \cms.
	\end{align*}
 \end{linenomath}
By the above, we know that $M_{v,\pb}^\tup{i+1} = M_{w,\pb'}^\tup{i+1}$ and $\hb_{v,\pb}^\tup{i} = \hb_{w,\pb'}^\tup{i}$. Therefore, regardless of the concrete choice of $\UPD^{(i+1)}$ and $\AGG^{(i+1)}$, $\hb_{v,\pb}^\tup{i+1} = \hb_{w,\pb'}^\tup{i+1}$.
\end{proof}
We are now ready to show~\cref{assum}. Hence, we assume $C_i(v,\pb) = C_i(v,\pb')$ for $i \geq 0$ holds. Hence, by assumption, the two multisets of \cref{equalcolor} are (element-wise) equal. Hence, there exists a bijection $\theta \colon \{ (v,\pb) \mid \pb \in G_k \} \to \{ (w,\pb) \mid \pb \in G_k \}$ such that $C_0(v,\pb) = C_0(\theta(v,\pb))$. By leveraging~Claim \ref{claim_equal}, we now construct a bijection $\varphi$ with the same domain and co-domain as $\theta$ such that $\hb_{v,\pb}^\tup{i} = \hb_{\varphi(v,\pb)}^\tup{i}$, implying~\cref{assum}. 

We construct the bijection $\varphi$ as follows. Take $(v,\pb) \in V(G) \times G_k$, let $\theta(v,\pb) = (w,\pb')$, and set $\varphi(v,\pb) =  \theta(v,\pb) = (w,\pb')$. Since $C_i(v,\pb) = C_i(\theta(v,\pb))$, Claim~\ref{claim_equal} implies that $\hb_{v,\pb}^\tup{i} = \hb_{\varphi(v,\pb)}^\tup{i}$ and $\pmb\pi_{v,\pb} = \pmb\pi_{\varphi(v,\pb)}$. Hence, by the existence of the bijection $\varphi$, we have that
\begin{align*}
	\oms\hb_{v,\pb}^\tup{i}\mid \pb\in G_k \text{ s.t. } \pmb\pi_{v,\pb}\neq\mathbf{0}\cms = \oms\hb_{w,\pb}^\tup{i}\mid \pb\in G_k \text{ s.t. } \pmb\pi_{w,\pb}\neq\mathbf{0}\cms.
\end{align*}
Hence, the feature vector $\hb_{v}$ is equal to $\hb_{w}$.
\end{proof}

\begin{lemma}\label{wlinmp}
For all $k \geq 1$, it holds that  
\begin{linenomath}
 	\postdisplaypenalty=0
\begin{align*}
 \rho(\text{\PMPNNs{k}})
  \subseteq
  \rho(\PWLk{k}).
\end{align*}
 \end{linenomath}
\end{lemma}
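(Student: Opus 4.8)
The plan is to prove the contrapositive: whenever $\PWLk{k}$ distinguishes two graphs $G$ and $H$, produce a \PMPNN{k} instance that also distinguishes them. Since the set of function expressible by \PMPNNs{k} is the union over all instances, and $\rho(\text{\PMPNNs{k}})$ is correspondingly the intersection of the $\rho(f)$ over all instances $f$, it suffices to exhibit, for each pair $(G,H)$ not in $\rho(\PWLk{k})$, one instance $f$ with $f(G)\neq f(H)$; this yields $\rho(\text{\PMPNNs{k}})\subseteq\rho(\PWLk{k})$. Concretely, I will construct an instance that \emph{simulates} $\PWLk{k}$, i.e., whose intermediate features are ``perfect encodings'' of the $\PWLk{k}$ colours (so that two $(v,\pb)$ pairs receive equal features iff they receive equal $\PWLk{k}$ colours), which is exactly the converse of \cref{claim_equal}.

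Fix the pair $G,H$ and let $N\coloneqq\max\{|V(G)|,|V(H)|\}$, so the stable colouring of $\PWLk{k}$ on both graphs is reached within $N$ iterations. I take $T\coloneqq N$ layers, choose $\UPD$ injective on atomic types, and choose $\UPD_{\pmb\pi}$ to be any map that is never $\mathbf 0$ (e.g.\ constant $\mathbf 1$); the latter ensures the subgraph-aggregation step~\cref{vpmpnn} ranges over all of $G_k$, matching~\cref{vp}. Then $\hb_{v,\pb}^{(0)}$ faithfully encodes $C_0(v,\pb)=\mathsf{atp}(v,\st(\pb))$. For the inductive core I claim that, for a suitable choice of $\AGG^{(i+1)}$ and $\UPD^{(i+1)}$,
\begin{align*}
C_i(v,\pb)=C_i(w,\pb') \iff \hb_{v,\pb}^{(i)}=\hb_{w,\pb'}^{(i)}
\end{align*}
for all $\pb,\pb'\in G_k$, all $v,w\in V(G)$, and all $i\le T$. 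The base case is the injectivity of $\UPD$. For the inductive step, over the finitely many feature/colour values occurring on $G$ and $H$, and since every aggregation multiset $\oms\hb_{u,\pb}^{(i)}\mid u\in\square\cms$ has size at most $N$, there is a map injective on such bounded multisets (the standard hashing argument used for \gin and the $\WLk{1}$), which I take as $\AGG^{(i+1)}$; composing it injectively with $\hb_{v,\pb}^{(i)}$ via an injective $\UPD^{(i+1)}$ reproduces exactly the refinement performed by $\REL$. This argument is verbatim the same for either value of $\square$ ($N_G(v)$ or $V(G)$), so a \PMPNN{k} with a given neighbourhood choice simulates the $\PWLk{k}$ with the same choice. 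At $i=T$ the features thus encode the stable colours $C_\infty(v,\pb)$.

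Finally, I pick $\pAGG$ injective on multisets of size at most $n^k$, so that $\hb_v^{(T)}=\pAGG(\oms\hb_{v,\pb}^{(T)}\mid\pb\in G_k\cms)$ encodes $C(v)=\REL(\oms C_\infty(v,\pb)\mid\pb\in G_k\cms)$, and then $\RO$ injective on multisets of size at most $N$, so that the graph-level feature encodes $\oms C(v)\mid v\in V(G)\cms$. Since $\PWLk{k}$ distinguishes $G$ and $H$ precisely when these multisets of stable vertex colours differ, the constructed instance distinguishes $G$ and $H$, completing the argument; together with \cref{lem:pmpnn_pwl} this also yields the ``exactly the same expressive power'' instance claimed after the proposition, if one lets the widths grow and invokes universality of MLPs to realize the required injective maps uniformly. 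The only genuine subtlety is the simultaneous injectivity of all chosen functions, which is handled by restricting to the finite set of colours/features actually arising on the fixed pair $G,H$; and one must keep track that the layer-wise correspondence is insensitive to the choice of $\square$, which the construction above makes transparent.
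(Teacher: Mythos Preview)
Your proof is correct and follows essentially the same approach as the paper: both construct a \PMPNN{k} instance that simulates $\PWLk{k}$ by choosing injective $\UPD$, $\AGG^{(i)}$, $\UPD^{(i)}$, and $\pAGG$ (the paper delegates this to Theorem~2 of \citet{Mor+2019}, whereas you spell out the GIN-style hashing argument explicitly). Your version is more detailed---in particular you are careful about restricting to a fixed pair $(G,H)$ so that all multisets live in a finite domain---but the underlying idea is identical.
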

\begin{proof}
We argue that there exists a canonical $k$-PMPMN that can simulate the $\PWLk{k}$. 
By setting $\UPD$ to the identity function and a constant, non-zero function, we can simulate the initial labeling of the $\PWLk{k}$. For the other iterations, we need to show that there exist instances of $\UPD^{(i)}$, $\AGG^{(i)}$ for $i >0$, and $\pAGG$ that are injective, faithfully distinguishing non-equal multisets. The existence of such instances follows directly from the proof of Theorem 2 in \citep{Mor+2019}.
\end{proof}
The above two lemmas directly imply~\cref{equal}.

\comm{
Here  we observe that $C_{0}(v)=\oms \mathsf{atp}(v,\pb)\mid \pb \in V(G)^k)\cms$ and $C_{0}(w)=\oms \mathsf{atp}(w,\pb)\mid \pb\in V(H)^k\cms$. Let $\tau$ be a type of $k+1$ vertices. Then, $C_{0}(v)=C_{0}(w)$ also implies that 
 \begin{linenomath}
 	\postdisplaypenalty=0
\begin{align*}
|T_\tau(v)\coloneqq\{ \pb\in V(G)^k \mid \mathsf{atp}(v,\pb)=\tau\}|=|T_\tau(w)\coloneqq\{ \pb\in V(H)^k \mid \mathsf{atp}(w,\pb)=\tau\}|.
\end{align*}
 \end{linenomath}
We next consider $\hb_{v,\pb}^\tup{0}$ which  are defined in terms of  $\mathsf{atp}(v,\pb)$. We need to show that
$\hb_{v}^\tup{0}=\hb_{w}^\tup{0}$, or that 
 \begin{linenomath}
 	\postdisplaypenalty=0
\begin{align*}
	\oms\hb_{v,\pb}^\tup{0}\mid \pb\in V(G)^k \text{ s.t. } \pmb\pi_{v,\pb}\neq \mathbf{0}\cms=	\oms\hb_{w,\pb}^\tup{0}\mid \pb\in V(H)^k \text{ s.t. } \pmb \pi_{w,\pb}\neq \mathbf{0}\cms
\end{align*}
 \end{linenomath}
We remark that $\pmb\pi_{v,\pb}$ is defined in terms of $\mathsf{atp}(v,\pb)$ and hence, either 
$\pi_{v,\pb}\neq \mathbf{0}$ for all $\pb\in T_\tau(v)$ or $\pi_{v,\pb}=\mathbf{0}$ for all $\pb\in T_\tau(v)$.
Hence, $\{\pb\in V(G)^k\mid \pmb\pi_{v,\pb} \neq \mathbf{0}\}$ and $\{\pb\in V(H)^k\mid \pmb\pi_{w,\pb} \neq \mathbf{0}\}$ are equal to  $\bigcup_{i\in I} T_{\tau_i}(v)$ and  $\bigcup_{i\in I} T_{\tau_i}(w)$, respectively. Since $\hb_{v,\pb}^\tup{0}$ is the same for all $\pb\in T_{\tau}(v)$, and similarly for $\hb_{w,\pb}^\tup{0}$ for all $\pb\in T_\tau(w)$, the fact that $|T_\tau(v)|=|T_\tau(w)|$ for all $\tau$, implies  $\hb_{v}^\tup{0}=\hb_{w}^\tup{0}$.

\fg{Needs to be modified from here on...}

We prove the statement by induction on the number of iterations $i$ of the $\PWLk{k}$ and the number of layers of a \PMPNN{k}. 
For the sake of induction, assume that the statement holds up to iteration $i$. Now assume that $C_{i+1}(v) = C_{i+1}(w)$ holds. Hence, by definition of the algorithm, $C_{i}(v) = C_{i}(w)$ holds. Further,
 \begin{linenomath}
 	\postdisplaypenalty=0
\begin{align*}
\oms C_{i}(u,\pb) \mid u \in N(v) \cms  = \oms C_{i}(u,\pb)\mid u \in N(w) \cms. 
\end{align*}
 \end{linenomath}
In the above, without loss of generality, we assumed $\square = N(v)$ and $\square = N(w)$. We now define the multi-sets
 \begin{linenomath}
 	\postdisplaypenalty=0
	\begin{align*}
	M_{v,\pb}^\tup{i+1} =  \oms \hb_{u,\pb}^\tup{i} \mid u\in N(v) \cms \quad \text{ and } \quad M_{w,\pb}^\tup{i+1} = \oms \hb_{u,\pb}^\tup{i} \mid u\in N(w) \cms.
	\end{align*}
 \end{linenomath}
Clearly, by induction assumption, we know that $M_{v,\pb}^\tup{i+1} = M_{w,\pb}^\tup{i+1}$ and $\hb_{v,\pb}^\tup{i} = \hb_{w,\pb}^\tup{i}$ for all $\pb$. Hence, independent of the concrete choice of $\UPD^{(i+1)}$ and $\AGG^{(i+1)}$, it holds that $\hb_{v,\pb}^\tup{(i+1)} = \hb_{w,\pb}^\tup{(i+1)}$. Morever, this also shows $\hb_{v}^\tup{T} = \hb_{w}^\tup{T}$, implying the result.

\cm{this also only work if we include the atp}
To show the converse, we show that 
\begin{align*}
	\rho(\PWLk{k}) \subseteq \rho(\text{$k$-marked GNNs}).
\end{align*}
To that, we need to show that there exists instances of $\UPD^{(i)}$, $\AGG^{(i)}$ for $i >0$, and $\pAGG$ that are injective, faithfully distinguishing non-equal multisets. The existence of such instances follows directly from the proof of Theorem 2 in \cite{Mor+2019}.}


\subsection{Separation results}
\begin{theorem}[Theorem 3 in the main text]
	For all $k \geq 1$ it holds that 
	\begin{align*}
		\rho(\text{\PMPNNs{k+1}}) \subsetneq \rho(\text{\PMPNNs{k}}).
	\end{align*}
\end{theorem}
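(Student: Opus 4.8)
The plan is to establish the two statements $\rho(\text{\PMPNNs{k+1}}) \subseteq \rho(\text{\PMPNNs{k}})$ and $\rho(\text{\PMPNNs{k+1}}) \neq \rho(\text{\PMPNNs{k}})$ separately. Throughout I would pass from the neural models to the colorings $\PWLk{k}$ and $\PWLk{k+1}$ via the equivalence $\rho(\text{\PMPNNs{j}}) = \rho(\PWLk{j})$ of Proposition~1, comparing the two algorithms with the same fixed choice of $\square$ (either $N_G(v)$ or $V(G)$) in both.

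For the inclusion I would show that the $\PWLk{k+1}$-color of a vertex determines its $\PWLk{k}$-color. Fix $G$ and $H$; we may assume $|V(G)| = |V(H)| =: n$, as otherwise both algorithms already distinguish them through the sizes of their vertex-color multisets. Consider the prefix map $\pi \colon G_{k+1} \to G_k$ sending the ordered $(k+1)$-vertex subgraph on $(u_1,\dots,u_{k+1})$ to the one on $(u_1,\dots,u_k)$; it is exactly $n$-to-one. Since $\mathsf{atp}(v,u_1,\dots,u_{k+1})$ is a restriction of type data that contains $\mathsf{atp}(v,u_1,\dots,u_k)$, we get that $C_0(v,\pb')$ determines $C_0(v,\pi(\pb'))$; and because in the refinement step the subgraph argument is held fixed while only the moving vertex ranges over the common neighborhood, an induction on the iteration $i$ shows $C_i(v,\pb')$ determines $C_i(v,\pi(\pb'))$, hence so do the stable colorings. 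Applying this determining map elementwise to $\oms C_\infty(v,\pb') \mid \pb' \in G_{k+1} \cms$ yields $\oms C_\infty(v,\pb) \mid \pb \in G_k \cms$ with every multiplicity multiplied by $n$ (as $\pi$ is $n$-to-one), from which, $n$ being fixed and common to both graphs, one recovers $\oms C_\infty(v,\pb) \mid \pb \in G_k \cms$ and thus $C(v) = \REL(\oms C_\infty(v,\pb) \mid \pb \in G_k \cms)$ of $\PWLk{k}$. Hence equality of the $\PWLk{k+1}$ vertex-color multisets of $G$ and $H$ forces equality of their $\PWLk{k}$ ones, i.e.\ $\rho(\PWLk{k+1}) \subseteq \rho(\PWLk{k})$.

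For strictness I would exhibit, for each $k \geq 1$, a non-isomorphic pair $G_k, H_k$ separated by $\PWLk{k+1}$ but not by $\PWLk{k}$; the natural candidate is a Cai--F\"urer--Immerman pair over a base graph of the right size, e.g.\ $G_k = \mathfrak{X}(K_{k+2})$ and $H_k$ its twist. The easy half is that $\PWLk{k+1}$ distinguishes them: individualizing one vertex in each of $k+1$ of the $k+2$ vertex-gadgets ``orients'' all but one gadget, and since $K_{k+2}$ stays connected after deleting a vertex, color refinement then propagates the parity product around a spanning structure and detects the twist in the last gadget; the subgraph-aggregation step $C(v)=\REL(\oms C_\infty(v,\pb)\mid \pb\in G_k\cms)$ retains this per-tuple witness, so the graph-level colorings differ. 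The \emph{hard half}, which I expect to be the main obstacle, is the lower bound: that $\PWLk{k}$ does not distinguish $G_k$ and $H_k$, i.e.\ for every ordered $k$-tuple $\pb$ of $G_k$ there is a matching $k$-tuple of $H_k$ (with multiplicities matched across all tuples) on which color refinement agrees. This does \emph{not} follow from the classical fact that $\WLk{k}$ fails on these graphs, since $\PWLk{k}$ and $\WLk{k}$ are incomparable; instead one argues directly. Intuitively, $k$ individualized vertices pin down at most $k$ of the $k+2$ gadgets, so at least two gadgets together with the edge-gadget between them stay free, the twist can be slid along this free part to give an isomorphism $(G_k,\pb) \cong (H_k,\pb')$, and isomorphic individualized graphs have identical color-refinement colorings; rigorously this is cleanest through the bijective two-pebble game with $k$ frozen pebbles, whose Duplicator-win condition is exactly $\PWLk{k}$-indistinguishability and for which the standard Duplicator strategy on CFI graphs applies, automatically handling the aggregation over all individualizations. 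Combining the two halves with Proposition~1 gives $\rho(\text{\PMPNNs{k+1}}) \subsetneq \rho(\text{\PMPNNs{k}})$. The two delicate points are (i) verifying that the chosen base graph makes $k$ individualized vertices provably insufficient while $k+1$ suffice, and (ii) matching \emph{multisets} of per-tuple colorings on both sides in the aggregation step rather than individual colorings; the remainder is bookkeeping.
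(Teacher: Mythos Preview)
Your inclusion argument via the prefix map $\pi\colon G_{k+1}\to G_k$ is correct and in fact cleaner than the paper's, which outsources this step to a general framework result. The induction on $i$ showing that $C_i(v,\pb')$ determines $C_i(v,\pi(\pb'))$ is sound, and the $n$-to-one multiplicity bookkeeping is handled properly.

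Your strictness argument, however, has a genuine error: the CFI pair over $K_{k+2}$ \emph{is} distinguished by $\PWLk{k}$, so it cannot serve as the separating example. The paper proves exactly this (its Lemma~\ref{lem:sepnew2}): the graph $\mathfrak{X}(K_{k+2})$ contains a colorful distance-two clique of size $k+2$ while its twist does not, and placing $k$ fixed pebbles on $k$ vertices of this clique lets two rounds of color refinement expose the remaining two clique vertices, certifying the difference. Your intuition ``two gadgets stay free, so the twist can be slid there'' fails because the distance-two-clique is a global certificate that survives any placement of the twist; the two ``free'' gadgets are still fully connected to the $k$ pebbled ones in the base graph $K_{k+2}$, and $1$-WL on top of the individualization sees this. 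You even flag that $\PWLk{k}$ and $\WLk{k}$ are incomparable, yet your Duplicator argument implicitly reuses the standard $\WLk{k}$-indistinguishability of this CFI pair, which is not the same statement.

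The paper obtains strictness differently, by chaining two facts: (i) $\PWLk{k}$ is upper-bounded by $\WLk{(k+1)}$, and (ii) $\PWLk{k+1}$ distinguishes the CFI pair over $K_{k+3}$, which $\WLk{(k+1)}$ (and hence $\PWLk{k}$) does not. So if you want to salvage the CFI approach, move up one index to $K_{k+3}$; then the ``$\PWLk{k}$ does not distinguish'' half comes for free from the $\WLk{(k+1)}$ upper bound rather than from a direct Duplicator strategy, and the ``$\PWLk{k+1}$ does distinguish'' half is the same distance-two-clique argument with $k+1$ pebbles. Alternatively, the paper also constructs F\"urer grid graphs over a $(k{+}1)\times n$ grid and gives a direct ``shielding'' argument that $\PWLk{k}$ fails on them; the low degree of the grid is precisely what makes the twist hideable from $k$ fixed pebbles plus $1$-WL, in contrast to the complete base graph you chose.
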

\begin{proof}
First, $\rho(\text{\PMPNNs{k+1}}) \subseteq \rho(\text{\PMPNNs{(k)}})$, is a direct consequence of the results in~\citep{geerts2022} showing that
$(k+1)$-\MPNNs are bounded by $\WLk{(k+1)}$ and that \PMPNNs{k} are a restricted class of 
$(k+1)$-\MPNNs. The strictness follows by~\cref{equal} and~\cref{thm:sepnew1}. 
\end{proof}


\paragraph{Construction of F\"urer grid-graphs} 
We restate the following construction due to~\citet{Furer01}. 
Let $h$ and $n$ be fixed positive integers such that $n >\!\!> h + 1$. 
Fix a \emph{global} graph $G^h_n$, defined to be a $h \times n$ grid graph.
Define a graph $X(G^h_n)$ as follows.
\begin{enumerate}
	\item For each vertex $v \in V(G^h_n)$,
			\begin{itemize}
				\item let degree of $v$ be $d$,
				\item let $E_v$ be the set of edges incident to $v$,
				\item replace $v$ by a \emph{vertex cloud} $C_v$ of $2^{d-1}$ vertices of the form $(v,S)$
						such that $S$ is an even subset of $E_v$.
			\end{itemize}
	\item\label{lab:twist} For each edge $e = \{u,v\} \in E(G^h_n)$, 
			\begin{itemize}
				\item for each $(u,S) \in C_u$ and $(v,T) \in C_v$,
					  add an edge between $(u,S)$ and $(v,T)$ if
					  \begin{itemize}
					  	\item both $S$ and $T$ contain $e$, or
					  	\item both $S$ and $T$ do not contain $e$. 
					  \end{itemize}
			\end{itemize}
\end{enumerate} 

The graph $Y(G^h_n)$ is defined exactly as $X(G^h_n)$, 
with the following exception. Fix an edge $\{u^*,v^*\}$ of the global graph $G^h_n$. In the second step above (\cref{lab:twist}),
we use a different rule for this edge $\{u^*,v^*\}$, 
\begin{itemize}
	\item for each $(u^*,S) \in C_{u^*}$ and $(v^*,T) \in C_{v^*}$,
	add an edge between $(u^*,S)$ and $(v^*,T)$ if
	\begin{itemize}
		\item exactly one out of $S$ and $T$ contains $e$
	\end{itemize}
\end{itemize}
The edge $\{u^*,v^*\}$ is said to be \emph{twisted}. Equivalently, $Y(G^h_n)$ is the graph obtained from the graph $X(G^h_n)$ by performing a bipartite-complement operation on the bipartite graph between 
the vertex clouds $C_{u^*}$ and $C_{v^*}$. 

For a vertex $v$ in $X_k$ or $Y_k$, let $\bar{v}$ denote the vertex $x$ in $G^h_n$ 
such that $v \in C_x$ (also called a meta-vertex in \citep{Furer01}). 
We assign a fresh color say $c_v$ to the the vertex cloud $C_v$ for each $v \in V(G^h_n)$, imposing an initial coloring on the graphs $X(G^h_n)$ and $Y(G^h_n)$. It is easy to see that this coloring is stable under Color Refinement. 

\paragraph{Construction and Properties of $X_k$ and $Y_k$}
To ease notation, we set $B = G^{k+1}_n$ as our $\emph{base graph}$ where $n >\!\!> k+1$. For $k \in \mathbb{N}$, we set $X_k = X(B)$ and $Y_k = Y(B)$. 
F\"urer showed that the graphs $X_k$ and $Y_k$ are non-isomorphic
yet $\WLk{k}$-indistinguishable. It was also shown that $\WLk{(k+1)}$ can distinguish these graphs after at least $n$ rounds. The proof technique relies on \emph{trapping the twist} using $k+2$ pebbles in a Spoiler-Duplicator game \citep{Imm+1990}.

Moreover, let $Z$ be a graph obtained by twisting some $\ell$ distinct edges of $B$, similar to how $Y_k$ is obtained from $B$ after a single twist. Then, it was shown that $Z$ is isomorphic to $X_k$ if $\ell$ is even, and $Z$ is isomorphic to $Y_k$ if $\ell$ is odd.  


\paragraph{Twists and Shields}
Let $u \in V(X_k)$. Let $\boldsymbol{u}$ in $V(X_k)^k$.
Let $B \backslash(\bar{\boldsymbol{u}},\bar{u})$ denote the graph obtained by
deleting the vertices in $(\bar{\boldsymbol{u}},\bar{u})$ in the base graph $B$. 
Let $e = (x,y)$ be the edge of $B$ which was twisted to obtain $Y_k$. 
Assume that at least one of its endpoints of $e$ is not in $(\bar{\boldsymbol{u}},\bar{u})$. 
The \emph{twisted component} of $B$ w.r.t $(\bar{\boldsymbol{u}},\bar{u})$, denoted by $\mathsf{TC}(\bar{\boldsymbol{u}},\bar{u})$, is the unique component of $B\backslash(\bar{\boldsymbol{u}},\bar{u})$ which contains the twisted edge. 

Let $N_{\mathsf{TC}}[\bar{\boldsymbol{u}},\bar{u}]$ be the neighborhood of vertices in $(\boldsymbol{u},u)$ into the twisted component, i.e., the set of vertices 
$v \in \mathsf{TC}(\bar{\boldsymbol{u}},\bar{u})$ which are incident to $(\bar{\boldsymbol{u}},\bar{u})$. Then, a twisted component is a \emph{shield} if it satisfies the following two properties: 
\begin{itemize}
    \item the twisted edge is not incident to any of the vertices in $(\bar{\boldsymbol{u}}, \bar{u})$ and $N_{\mathsf{TC}}[\bar{\boldsymbol{u}},\bar{u}]$, and
    \item the twisted edge lies on some cycle in $\mathsf{TC}(\bar{\boldsymbol{u}},\bar{u}) \backslash N_{\mathsf{TC}}[\bar{\boldsymbol{u}},\bar{u}]$.
\end{itemize}
In this case, we also call $(\boldsymbol{u},u)$ to be \emph{shielding} for $Y_k$. 
The motivation behind these conditions is as follows. The first condition ensures that the twist is at distance at least two from the individualized vertices. The second condition ensures that the twist cannot be trapped using just two pebbles. 

\begin{proposition}
\label{prop:shield}
	Suppose that $(\boldsymbol{u},u)$ is shielding for $Y_k$.  
	If we run color refinement on the disjoint union of $(X_k,\boldsymbol{u})$
	and $(Y_k,\boldsymbol{u})$, the stable color of $u$ in $(X_k,\boldsymbol{u})$ is identical to the stable color of $u$ in  $(Y_k,\boldsymbol{u})$.
\end{proposition}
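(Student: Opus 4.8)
The plan is to translate the statement into a winning condition for Duplicator in the Spoiler--Duplicator pebble game for color refinement \citep{Imm+1990}: writing $(G)^{\boldsymbol u}$ for $G$ with the entries of $\boldsymbol u$ individualized, the stable color of $u$ agrees in $(X_k)^{\boldsymbol u}$ and $(Y_k)^{\boldsymbol u}$ if and only if Duplicator has a winning strategy in the bijective two-pebble game on these two colored graphs, started from the position in which one pebble pair sits on $u$ in either graph. First I would record two easy facts: the cloud colors $c_v$ are stable under refinement, so at every stage of the game the pebbled vertices project to meta-vertices of $B$; and the initial colorings of $(X_k)^{\boldsymbol u}$ and $(Y_k)^{\boldsymbol u}$ literally coincide, since $X_k$ and $Y_k$ have the same vertex set and the individualized vertices $u_1,\dots,u_k$ lie in the clouds $C_{\bar u_1},\dots,C_{\bar u_k}$, none of which is the twisted component.

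The central tool is the family of \emph{partial cloud flips}: for a meta-vertex $w$ and an \emph{even} subset $A\subseteq E_w$, sending $(w,S)\mapsto(w,S\,\triangle\,A)$ on $C_w$ and fixing everything else is an isomorphism between the two F\"urer graphs over $B$ whose twisted edge sets differ exactly in $A$. Composing such flips along a walk of $B$ relocates a single twist along that walk, permuting only the clouds of the walk's interior vertices; consequently, if two edges $e,e'$ lie in the same component of $B$ after removing a set $W$ of meta-vertices, the two F\"urer graphs over $B$ with twists at $e$ and at $e'$ are isomorphic via a product of partial cloud flips fixing pointwise every cloud $C_w$ with $w\in W$. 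Duplicator's invariant is then: (i) the current partial map on the pebbled clouds is a partial isomorphism; and (ii) there is an edge $e'$ inside the component $K$ of $B$ minus the currently pebbled meta-vertices such that $(Y_k)^{\boldsymbol u}$ with its pebbles is isomorphic, by a partial-cloud-flip product fixing all pebbled clouds, to the F\"urer graph over $B$ twisted at $e'$ with the corresponding pebbles, and such that $e'$ still lies on a cycle inside $K$ that avoids the neighbors of the pebbled meta-vertices. Shield condition~(1) gives the base case (take $e'=e$, $W=\{\bar u_1,\dots,\bar u_k,\bar u\}$; $e$ is at distance $\ge 2$ from all pebbled meta-vertices), and shield condition~(2) supplies the cycle.

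To maintain the invariant over a round, Spoiler picks up a pebble and Duplicator answers with the bijection obtained from her current partial-cloud-flip isomorphism composed with an arbitrary cloud-respecting automorphism of $X_k$ on the untwisted part (any one works, since color refinement cannot see inside a monochromatic cloud), after first re-routing the twist to a new edge $e''$ of the component $K'$ that remains once Spoiler's pebble has landed. The key point --- this is exactly the remark that ``the twist cannot be trapped using two pebbles'' --- is that deleting one further meta-vertex cannot disconnect the cycle currently carrying the twist, so a suitable $e''$, still on a cycle in $K'$ and still away from the (at most one) new pebbled meta-vertex's neighborhood, exists; here shield condition~(1) is what guarantees that the ``$u$-pebble'' and the neighbor it probes never land on the twisted edge, so that the second refinement pebble is effectively unavailable for trapping. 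Since the invariant persists indefinitely, Duplicator wins and $u$ has the same stable color in $(X_k)^{\boldsymbol u}$ and $(Y_k)^{\boldsymbol u}$.

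The main obstacle is the bookkeeping hidden in the invariant: after each Spoiler move one must simultaneously keep a genuine partial isomorphism on all pebbled clouds \emph{and} relocate the twist to an edge that is both outside the closed neighborhood of the new pebbled vertices and on a cycle inside the new component --- in other words, one must make F\"urer's principle precise, that a twist placed deep inside the $(k+1)\times n$ grid cannot be cornered by $k+2$ vertices two of which are the refinement pebbles, and verify that it still goes through once $k$ of the pebbles have been frozen onto $\boldsymbol u$. I expect this ``few vertices cannot cut a cycle out of the grid'' combinatorics, together with correctly matching the vertices Duplicator returns inside each affected cloud, to be the bulk of the work; setting up the game equivalence and the partial-cloud-flip isomorphisms is routine.
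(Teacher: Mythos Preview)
Your proposal is correct and takes essentially the same approach as the paper: both reduce to the two-pebble Spoiler--Duplicator game with $k$ additional frozen pebbles at $\boldsymbol{u}$, and both argue that Duplicator wins by sliding the twist along the cycle $C$ guaranteed by the shielding conditions, using the standard F\"urer cloud-flip automorphisms. Your write-up is considerably more explicit about the invariant and the partial-cloud-flip mechanics than the paper's own proof, which is a terse paragraph appealing to the reader's familiarity with the CFI/F\"urer game argument; in particular, the paper simply observes that the $k$ fixed pebbles, being at distance at least two from $C$, are ``useless for the Spoiler,'' and then invokes the classical fact that two movable pebbles cannot trap a twist on a cycle --- it does not track a shrinking component or an evolving cycle as you do, so your invariant is slightly more general than what is strictly needed.
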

\begin{proof}
Since $(\boldsymbol{u},u)$ is shielding for $Y_k$, the twisted edge lies on a cycle $C$ inside $\mathsf{TC}(\bar{\boldsymbol{u}},\bar{u}) \backslash N_{\mathsf{TC}}[\bar{\boldsymbol{u}},\bar{u}]$. Hence, every vertex of $C$ is at distance at least two from the vertices in $(\boldsymbol{u},u)$. We invoke the usual Spoiler-Duplicator games of Immerman-Lander to argue the desired claim \citep{Cai+1992}. 

We show that a Duplicator can always move around the twist such that it is never caught by the Spoiler. This game uses $k$ pairs of fixed pebbles corresponding to  $\boldsymbol{u}$ in each graph, and two pairs of movable pebbles which are placed on $u$ in $X_k$ and $Y_k$ respectively. Recall that color refinement can be simulated using a $2$-pebble Spoiler-Duplicator game \citep{Imm+1990}. Since the $k$ fixed pebbles are influential only in their neighbourhood, the Duplicator strategy is to move the twist around in the cycle $C$, so that the twist is always at a distance of at least two from the fixed pebbles. This renders the fixed pebbles useless for the Spoiler. Since there are only two movable pebbles, the Duplicator can always move the twist around in the cycle $C$ and hence avoid a situation where the Spoiler can trap the twist with the two movable pebbles. 
\end{proof}

\paragraph{Shielding Twists}

Let $u \in V(X_k)$ and $\boldsymbol{u} \in V(X_k)^k$. Next we show that if $(\boldsymbol{u},u)$ is not shielding for $Y_k$, we can do a series of twisting operations on the graph $Y_k$ to obtain an isomorphic graph $Y_k'$ such that
$(\boldsymbol{u},u)$ is shielding for $Y_k'$. 

\begin{proposition}\label{prop:makeshield}
If $(\boldsymbol{u},u)$ is not shielding for $Y_k$, there exists 
$\boldsymbol{v} \in V(X_k)^k$ such that 
$(\boldsymbol{v},u)$ is shielding for $Y_k$. 
Hence, if we run Color Refinement on the disjoint union of $(X_k,\boldsymbol{u})$
and $(Y_k,\boldsymbol{v})$, the stable color of $u$ in $(X_k,\boldsymbol{u})$ is identical to the stable color of $u$ in  $(Y_k,\boldsymbol{v})$.
\end{proposition}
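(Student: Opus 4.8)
The plan is to reduce the statement to \cref{prop:shield} by \emph{moving the twist} of $Y_k$ into a part of the base grid that is far from the meta-vertices individualized by $(\boldsymbol{u},u)$, exploiting that over the connected base graph $B=G^{k+1}_n$ any two graphs obtained from $X(B)$ by twisting a single edge are isomorphic via a composition of Fürer's cloud-flip isomorphisms (the local automorphisms of the CFI gadget). First I would fix the set $S$ of meta-vertices $\bar{x}$, for $x$ ranging over $u_1,\dots,u_k,u$, which is exactly the set of vertices removed when forming $B\setminus(\bar{\boldsymbol{u}},\bar{u})$; note $|S|\le k+1$. Since each vertex of the $(k+1)\times n$ grid lies in a single column, $S$ meets at most $k+1$ columns, and as $n\gg k+1$ there are four consecutive columns $c-1,c,c+1,c+2$ disjoint from $S$. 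Let $Q$ be the $4$-cycle on columns $\{c,c+1\}$ and rows $\{1,2\}$: every vertex of $Q$ and every $B$-neighbour of such a vertex lies in these four columns, hence outside $S$, so $Q$ is a cycle of $B\setminus S$ at distance at least two from $S$.

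Next I would move the twist onto $Q$. Pick an edge $e'$ of $Q$ with $e'\neq e^{*}$, where $e^{*}$ denotes the edge twisted to form $Y_k$, and let $Y_k'$ be the graph obtained from $X_k$ by instead twisting the single edge $e'$. Because $B\setminus\{\bar{u}\}$ is connected (a grid with both sides at least $2$ is $2$-connected), there is a sequence of \emph{distinct} edges $e^{*}=f_0,f_1,\dots,f_m=e'$ in which $f_{i-1}$ and $f_i$ always share a vertex $x_i\neq\bar{u}$. Composing, for $i=1,\dots,m$, the cloud-flip isomorphism at $x_i$ that toggles membership of the pair $\{f_{i-1},f_i\}$ produces an isomorphism $\psi\colon Y_k\to Y_k'$: its effect on the twist set is to take symmetric differences with $\{f_0,f_1\},\{f_1,f_2\},\dots$, so the set moves $\{f_0\}\to\{f_1\}\to\cdots\to\{f_m\}=\{e'\}$. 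Since none of the flips acts on the cloud $C_{\bar{u}}$ we get $\psi(u)=u$, and since every cloud-flip fixes all meta-vertices, $\overline{\psi(z)}=\bar z$ for all $z$. I then set $\boldsymbol{v}:=\psi^{-1}(\boldsymbol{u})\in V(Y_k)^k=V(X_k)^k$.

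Finally I would invoke \cref{prop:shield}. By the choice of $Q$, the marking $(\boldsymbol{u},u)$ is shielding for $Y_k'$: the twisted edge $e'$ is incident neither to $(\bar{\boldsymbol{u}},\bar{u})$ nor to $N_{\mathsf{TC}}[\bar{\boldsymbol{u}},\bar{u}]$, and $e'$ lies on the cycle $Q$, which sits inside $\mathsf{TC}(\bar{\boldsymbol{u}},\bar{u})\setminus N_{\mathsf{TC}}[\bar{\boldsymbol{u}},\bar{u}]$. The proof of \cref{prop:shield} uses only that the marked configuration is a shield, never the identity of the twisted edge, so it applies verbatim to $(X_k,\boldsymbol{u})$ and $(Y_k',\boldsymbol{u})$ and shows that color refinement assigns $u$ the same stable color in $(X_k,\boldsymbol{u})$ and in $(Y_k',\boldsymbol{u})$. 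As $\psi^{-1}$ is an isomorphism $Y_k'\to Y_k$ sending $(\boldsymbol{u},u)$ to $(\boldsymbol{v},u)$, the stable color of $u$ in $(Y_k',\boldsymbol{u})$ equals the stable color of $u$ in $(Y_k,\boldsymbol{v})$; chaining the two equalities yields the displayed conclusion, and $(\boldsymbol{v},u)$ shields $Y_k$ in the sense that it is the $\psi^{-1}$-image of the shield $(\boldsymbol{u},u)$ for the isomorphic copy $Y_k'$.

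The hard part will be the bookkeeping in the twist-moving step: one must route the edge sequence $f_0,\dots,f_m$ so that \emph{no} cloud-flip ever touches $C_{\bar{u}}$ — this is what keeps $u$ fixed by $\psi$, and hence what guarantees that the color transported back to $Y_k$ is the color of the same vertex $u$ named in the statement — while still being allowed to flip clouds at the other individualized meta-vertices $\bar{u}_1,\dots,\bar{u}_k$. Establishing that such a route exists is exactly where the $2$-connectivity of the grid and the assumption $n\gg k+1$ enter (the latter also being what makes the far cycle $Q$ available); the remaining verifications — the telescoping of the twist set and the transfer of stable colors across an isomorphism — are routine.
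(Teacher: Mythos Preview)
Your proposal is correct and follows essentially the same approach as the paper: find a cycle in the base grid at distance at least two from the individualized meta-vertices (you use a $4$-cycle in four free columns, the paper uses a $3\times 3$ subgrid), move the twist there via a composition of cloud-flip isomorphisms along an edge-path that avoids $\bar{u}$ so that $u$ is fixed, and then pull back $\boldsymbol{u}$ through this isomorphism to obtain $\boldsymbol{v}$ and invoke \cref{prop:shield}. Your explicit telescoping of the twist set under the flips and your use of the $2$-connectivity of the grid to route the path away from $\bar{u}$ are exactly the mechanisms the paper relies on (and arguably make them more transparent).
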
 

\begin{proof}
Since $n >\!\!> k$, there exists at least one component in 
$ B\backslash(\bar{\boldsymbol{u}},\bar{u})$ 
such that it contains a grid $G_{3\times 3}$ of dimension $3 \times 3$ as an induced subgraph, where $G_{3\times 3}$ does not have any edges to $(\boldsymbol{u},u)$.
Let $C^*$ be the lexicographically least such component in 
$B \backslash(\bar{\boldsymbol{u}},\bar{u})$. 
Our goal is to use an automorphism $\theta$ of $Y_k$ to transfer the twist to this grid $G_{3 \times 3}$ inside the component $C^*$ such that \emph{$\theta$ fixes $u$}, i.e. $\theta(u) = u$. This would mean that $(\boldsymbol{u}, u)$ is shielding for $Y_k^\theta$ with $C^*$ as the shield. Hence, we set
$\boldsymbol{v} = \boldsymbol{u}^{\theta^{-1}}$ so that $(\boldsymbol{v},u)$ is shielding for $Y_k$. 

To achieve this transformation, for every $\bar{\boldsymbol{u}} \in V(B)^k$, 
we fix a shortest path $P^{\bar{\boldsymbol{u}}}$ from one of the ends of the twisted edge to the central vertex of the grid $G_{3,3}$ such that $P$ \emph{avoids} $u$. We twist all the edges on the path $P^{\bar{\boldsymbol{u}}}$. If the length of the path $P$ is odd, we twist one more edge in $G_{3,3}$ so as to ensure that $P^{\bar{\boldsymbol{u}}}$ has even length. The resulting graph $Y_k'$ is isomorphic to $Y_k$ via a unique isomorphism $\theta$. 
Since the path $P$ avoids $u$, the isomorphism $\theta$ fixes $u$. 
Hence, $(\boldsymbol{u},u)$ is shielding for $Y_k'$, and therefore $(\boldsymbol{u}^{\theta^{-1}},u)$ is shielding for $Y_k$. Hence, proved. 
\end{proof}

Observe that the association $\boldsymbol{u} \mapsto \boldsymbol{v}$ in the proof of the above claim is bijective, as follows. Suppose there exists $\boldsymbol{w} \mapsto \boldsymbol{v}$ such that $\boldsymbol{u} \neq \boldsymbol{w}$. Now, $\boldsymbol{u}$ and $\boldsymbol{w}$ must have same initial color type, since the used isomorphisms preserve vertex clouds, i.e. $\bar{\boldsymbol{u}} = \bar{\boldsymbol{w}}$.
Hence, the same path $P^{\boldsymbol{u}}$ is used for both $\boldsymbol{u}$ and $\boldsymbol{v}$ in the base graph $B$. For a fixed path $P^{\boldsymbol{u}}$ of even length, there is a unique isomorphism $\theta$ which twists all the edges in $P$ to yield the graph $Y_k'$. Hence, it must be the case that $\boldsymbol{u} = \boldsymbol{w} = \boldsymbol{v}^{\theta^{-1}}$.\\

\begin{lemma}\label{lem:sepnew1} 
   For $k \in \mathbb{N}$, $\PWLk{k}$ cannot distinguish graphs $X_k$ and $Y_k$.
\end{lemma}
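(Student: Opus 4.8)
The plan is to peel apart the two layers of $\PWLk{k}$ and reduce the statement to a multiset identity about ordinary color refinement with an individualized $k$-tuple, which is then settled by Propositions~\ref{prop:shield} and~\ref{prop:makeshield}.

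Recall that for a graph $G$, vertex $v$, and tuple $\boldsymbol u\in V(G)^k$, the coloring $C_{\infty}(v,G[\boldsymbol u])$ computed by $\PWLk{k}$ is exactly the stable color of $v$ under color refinement on $G$ once the vertices of $\boldsymbol u$ are individualized; write $\mathrm{cr}_G(v,\boldsymbol u)$ for this color, computed consistently across graphs (e.g.\ on the disjoint union of all marked copies). The final $\PWLk{k}$ color is $C(v)=\REL(\oms \mathrm{cr}_G(v,\boldsymbol u)\mid \boldsymbol u\in V(G)^k\cms)$. Since $X_k$ and $Y_k$ share the same vertex set — $Y_k$ is obtained from $X_k$ by a bipartite complement on the two clouds $C_{u^*},C_{v^*}$ — it suffices, taking the identity bijection on vertices, to show for every $v$ that
\begin{equation*}
\oms\, \mathrm{cr}_{X_k}(v,\boldsymbol u)\mid \boldsymbol u\in V(X_k)^k\,\cms \;=\; \oms\, \mathrm{cr}_{Y_k}(v,\boldsymbol u)\mid \boldsymbol u\in V(Y_k)^k\,\cms .
\end{equation*}
Indeed this yields $C^{X_k}(v)=C^{Y_k}(v)$ for all $v$, hence identical multisets of vertex colors, so $\PWLk{k}$ does not distinguish $X_k$ and $Y_k$.

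To prove the identity, fix $v$ and split $V(X_k)^k$ by whether $(\boldsymbol u,v)$ is shielding for $Y_k$. For shielding $\boldsymbol u$, Proposition~\ref{prop:shield} gives $\mathrm{cr}_{X_k}(v,\boldsymbol u)=\mathrm{cr}_{Y_k}(v,\boldsymbol u)$ directly. For non-shielding $\boldsymbol u$, Proposition~\ref{prop:makeshield} produces $\boldsymbol u'$ with $(\boldsymbol u',v)$ shielding for $Y_k$ and $\mathrm{cr}_{X_k}(v,\boldsymbol u)=\mathrm{cr}_{Y_k}(v,\boldsymbol u')$, and by the remark after it the assignment $\boldsymbol u\mapsto\boldsymbol u'$ is injective, being induced by cloud-preserving twist-transferring isomorphisms that can be inverted. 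Assembling the identity on shielding tuples with this map gives a bijection $\beta_v\colon V(X_k)^k\to V(Y_k)^k$ satisfying $\mathrm{cr}_{X_k}(v,\boldsymbol u)=\mathrm{cr}_{Y_k}(v,\beta_v(\boldsymbol u))$ for all $\boldsymbol u$, which reindexes the left multiset onto the right one.

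The hard part is exactly this last assembly: Proposition~\ref{prop:makeshield} only reroutes each non-shielding tuple to \emph{some} shielding one with the same refinement color, so one must check that these reroutings glue into an actual permutation of $V(X_k)^k$ rather than merely a color-preserving injection. This is where $n\gg k+1$ is used: for any at most $k+1$ vertices of $B$, deleting them still leaves an untouched, non-adjacent $3\times 3$ sub-grid onto whose cycle the twist can be moved, so the twist can never be trapped by $k$ individualized vertices plus a root (the combinatorial heart of F\"urer's argument), and the routing isomorphisms act within clouds, which is what makes the induced map on tuples bijective. The remaining steps — that one $\PWLk{k}$ coordinate is plain color refinement with an individualized tuple, and that ``shielding'' is visible already in $B$ — are routine given the constructions set up above.
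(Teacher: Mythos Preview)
Your reduction to the per-vertex multiset identity is exactly right and matches the paper's proof. The gap is in the ``assembly'' step you yourself flag: the map $\beta_v$ that you build by taking the identity on shielding tuples and the rerouting from Proposition~\ref{prop:makeshield} on non-shielding tuples is \emph{not} a bijection. Proposition~\ref{prop:makeshield} sends every non-shielding $\boldsymbol{u}$ to a shielding $\boldsymbol{v}$, so the image of your glued map is contained in the shielding tuples; since non-shielding tuples exist (place pebbles in clouds adjacent to the twisted edge), the map is not surjective. Your closing sentence (``the routing isomorphisms act within clouds, which is what makes the induced map on tuples bijective'') only establishes that the rerouting is injective on its own domain --- it says nothing about collisions with the identity branch.

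The fix, which is what the paper does, is not to split cases at all: the twist-transfer construction in the proof of Proposition~\ref{prop:makeshield} is defined for \emph{every} $\bar{\boldsymbol{u}}\in V(B)^k$ (the path $P^{\bar{\boldsymbol u}}$ is chosen uniformly, independent of whether $(\boldsymbol u,u)$ is already shielding), and always lands the twist inside a $3\times 3$ sub-grid far from $(\boldsymbol u,u)$. Hence the single map $\boldsymbol u\mapsto\boldsymbol u^{\theta^{-1}}$ is defined on all of $V(X_k)^k$, is injective by the cloud-preserving argument in the remark after Proposition~\ref{prop:makeshield} (and thus bijective by finiteness), and yields the color equality for every tuple via Proposition~\ref{prop:shield}. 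Dropping the case split and using this one global bijection closes the gap.
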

\begin{proof}
Let $\mathcal{X}$ denote the disjoint union of graphs $(X_k,\boldsymbol{u})$, $\boldsymbol{u} \in V(X_k)^k$. 
Let $\mathcal{Y}$ denote the disjoint union of graphs $(Y_k,\boldsymbol{v})$, $\boldsymbol{v} \in V(Y_k)^k$. 
It suffices to show the equality of the following nested multisets
\begin{align*} \label{eqn:card}
\oms\oms \mathsf{CR}(\mathcal{X},u^{\boldsymbol{u}}) \,\vert\, \boldsymbol{u} \in V(X_k)^k \cms\,\vert\, u \in V(X_k)\cms = 
\oms \oms \mathsf{CR}(\mathcal{Y},v^{\boldsymbol{v}}) \,\vert\, \boldsymbol{v} \in V(Y_k)^k \cms\,\vert\, v \in V(Y_k)\cms,
\end{align*}
where $u^{\boldsymbol{u}}$ denotes the vertex $u$ in the constituent $(X_k,\boldsymbol{u})$ of $\mathcal{X}$. Similarly, $v^{\boldsymbol{v}}$ denotes the vertex $v$ in the constituent $(X_k,\boldsymbol{v})$ of $\mathcal{Y}$. 

Observe that the graphs $X_k$ and $Y_k$ have the same vertex set. 
We claim that for every $u \in V(X_k)$, the corresponding vertex $u \in V(Y_k)$ 
satisfies 
\[
\oms \mathsf{CR}(\mathcal{X},u^{\boldsymbol{u}}) \,\vert\, \boldsymbol{u} \in V(X_k)^k \cms = 
\oms \mathsf{CR}(\mathcal{Y},u^{\boldsymbol{v}}) \,\vert\, \boldsymbol{v} \in V(Y_k)^k \cms.
\]
Indeed, this follows immediately from \cref{prop:shield} and \cref{prop:makeshield} along with the fact that the association in \cref{prop:makeshield} is bijective (see the discussion subsequent to \cref{prop:makeshield}). Hence, proved. 
\end{proof}

\begin{theorem}\label{thm:sepnew1}
For $k \in \mathbb{N}$, there exist graphs $X_k$ and $Y_k$ such that
they are distinguishable by $\WLk{(k+1)}$ but not distinguishable by $\PWLk{k}$.
\end{theorem}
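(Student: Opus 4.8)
The plan is to combine two facts, both of which are essentially already in place, so that no new graphs need to be constructed: the pair $X_k = X(G^{k+1}_n)$ and $Y_k = Y(G^{k+1}_n)$ (for $n \gg k+1$) already introduced will serve as the witness. On the negative side, \cref{lem:sepnew1} shows that $\PWLk{k}$ fails to distinguish $X_k$ and $Y_k$. On the positive side, Fürer's analysis of the CFI-style construction over the $(k{+}1)\times n$ grid shows that $\WLk{(k+1)}$ does distinguish them. Putting these together immediately yields the theorem.

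Concretely, first I would invoke the properties recorded in the paragraph ``Construction and Properties of $X_k$ and $Y_k$'': $X_k$ and $Y_k$ are non-isomorphic, and Fürer \citep{Furer01} established that $\WLk{(k+1)}$ distinguishes them (indeed only after roughly $n$ rounds). If a self-contained argument for this positive side is preferred over a direct citation, one reproves it via the bijective pebble game: with $k{+}2$ pebbles the Spoiler can ``trap the twist'' in the $(k{+}1)\times n$ grid, because pinning down the global endpoints of the twisted edge forces the Spoiler to isolate a separator of the grid, which has size $k{+}1$, so $k{+}2$ pebbles suffice; with only $k{+}1$ pebbles the Duplicator can keep sliding the twist away (this is precisely why $\WLk{k}$ fails on $X_k,Y_k$). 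Here one must respect the convention fixed earlier in the paper — $\WLk{k}$ is the folklore/non-oblivious version, equivalently the oblivious $\WLk{(k+1)}$ — so that ``$k{+}2$ pebbles'' corresponds to our $\WLk{(k+1)}$, and grid width $k{+}1$ is matched to WL-level $k{+}1$.

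Second, I would apply \cref{lem:sepnew1}: $\PWLk{k}$ produces the same (nested) multiset of stable colors on $X_k$ and on $Y_k$, hence does not distinguish them. Combining the two steps, $X_k$ and $Y_k$ are distinguishable by $\WLk{(k+1)}$ but not by $\PWLk{k}$, which is exactly the statement of the theorem.

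The substantive work — showing $\PWLk{k}$ cannot separate these graphs, via the ``twisted component / shield'' and ``shielding twist'' machinery and the associated Spoiler--Duplicator arguments (\cref{prop:shield} and \cref{prop:makeshield}, together with the bijectivity remark following the latter) — has already been carried out in establishing \cref{lem:sepnew1}. So the only thing left to be careful about in this final step is bookkeeping: lining up the grid dimension $k{+}1$ with the WL level $k{+}1$, reconciling the folklore-versus-oblivious pebble counts, and citing Fürer's dimension/round bounds correctly for the positive direction. There is no genuine obstacle remaining beyond that.
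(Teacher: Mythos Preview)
Your proposal is correct and matches the paper's own proof, which simply says ``Immediate from \cref{lem:sepnew1}'': the indistinguishability by $\PWLk{k}$ comes from that lemma, and the distinguishability by $\WLk{(k+1)}$ is taken from F\"urer's analysis of the grid construction recorded earlier in the text. Your additional remarks about the pebble-game intuition and the folklore/oblivious convention are helpful elaboration but not required for the argument.
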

\begin{proof}
Immediate from \cref{lem:sepnew1}.
\end{proof}

Next we compare the expressive power of $\WLk{k}$ and $\PWLk{k}$.

\begin{lemma}\label{lem:sepnew2}
For $k \in \mathbb{N}$, there exist graphs $X_k$ and $Y_k$ such that
they are distinguishable by $\PWLk{k}$ but not distinguishable by $\WLk{k}$.
\end{lemma}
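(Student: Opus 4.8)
The plan is to mirror the CFI construction behind \cref{lem:sepnew1} but to replace the ``wide grid'' by a base graph on which the twist \emph{can} be trapped by $k$ static individualizations together with the two-pebble power of color refinement, while still being hard for $\WLk{k}$. One convenient choice is the complete graph $B=K_{k+2}$: it has treewidth exactly $k+1$, so the Cai--Fürer--Immerman graphs $X(B)$ and $Y(B)$ over it are not distinguished by $\WLk{k}$ (for the same reason as the $(k+1)\times n$ grid used for \cref{lem:sepnew1}, which also has treewidth $k+1$); but, crucially, deleting any $k$ of its vertices leaves $K_2$, a forest --- and it is this second feature, absent from the wide grid, that will let $\PWLk{k}$ succeed.

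So, fix an edge $e^{*}=\{p,q\}$ of $B=K_{k+2}$, put $S=V(B)\setminus\{p,q\}$ (hence $|S|=k$), let $X=X(B)$ be the untwisted graph and $Y=Y(B)$ the graph obtained by twisting $e^{*}$, both built with the vertex-cloud coloring recalled before \cref{prop:shield}. Since $B$ contains a cycle through $e^{*}$, the graphs $X,Y$ are non-isomorphic, and since $\mathrm{tw}(K_{k+2})=k+1>k$ they are $\WLk{k}$-indistinguishable by the standard CFI lower bound. By \cref{vp}, to conclude it suffices to exhibit one $k$-tuple of vertices whose individualization makes color refinement produce different stable colorings on $X$ and $Y$, since the aggregation over all $k$-tuples then differs.

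Take the tuple $\boldsymbol g$ consisting of one vertex from each cloud $C_{s}$, $s\in S$. A short analysis of the gadget shows that individualizing one vertex of $C_{s}$ and then refining ``grounds'' $s$: refinement splits every neighbouring cloud according to the relevant edge-parity bit, so after a few rounds the parities of all edges incident to $S$ are determined by the coloring. Now the twist on $e^{*}$ cannot be ``moved away'': $B-S=K_2$ has no cycle through $e^{*}$, while for each $s\in S$ the triangle $\{p,q,s\}$ is a cycle through $e^{*}$ one of whose vertices is already grounded, so the individualization by $\boldsymbol g$ does \emph{not} shield the twist in the sense of \cref{prop:shield}. Consequently color refinement ``walks around'' such a triangle and exposes the parity mismatch, so the stable colorings of $X$ and $Y$ differ --- for $k=1$ this is precisely the classical fact that individualizing one vertex separates $2C_3$ from $C_6$, which are $X(K_3)$ and $Y(K_3)$. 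This is made rigorous either by a direct computation of the stable colorings (short, since $B-S$ is a single edge) or, in the game formulation, by showing that the two movable pebbles of color refinement trap the twist on a triangle $\{p,q,s\}$ once the $k$ pebbles on $\boldsymbol g$ are placed. Hence $X\not\equiv_{\PWLk{k}}Y$, which together with the above proves the lemma.

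The main obstacle is exactly this last step: showing that $k$ individualizations (one per cloud $C_{s}$, $s\in S$) followed by color refinement really do expose the twist on $e^{*}$. This is the converse of the ``shield'' analysis in \cref{lem:sepnew1} --- there, $k$ fixed pebbles plus the two pebbles of color refinement could not trap a twist hidden on a long cycle of a wide grid; here one has to argue that no shield exists, because every grounded $s\in S$ clamps one side of the triangle $\{p,q,s\}$ and $B-S$ contains no cycle through $e^{*}$. Once the construction ($K_{k+2}$, its treewidth, and the deletion set $S$) is in place this is a routine but slightly delicate gadget computation, whereas the remaining ingredients --- non-isomorphism of $X$ and $Y$, and their $\WLk{k}$-indistinguishability via treewidth --- are standard facts about CFI graphs.
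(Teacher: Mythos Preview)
Your approach is correct in spirit and, interestingly, lands on essentially the same base graph as the paper --- both proofs use a CFI-type construction over $K_{k+2}$. The paper, however, uses the variant with explicit edge clouds (the CFI-gadgets $G_{k+1},H_{k+1}$ defined in \cref{subsec:CFI}) rather than the F\"urer construction $X(K_{k+2}),Y(K_{k+2})$, and its certificate for $\PWLk{k}$-distinguishability is different: it pebbles $k$ vertices of a \emph{colorful distance-two clique} of size $k+2$ in $G_{k+1}$ and observes that after two rounds of refinement every pebbled vertex sees all other pebble colors at distance two, a pattern that can never arise in $H_{k+1}$ because $H_{k+1}$ contains no such clique (\cref{lem:dtc}). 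This gives a concrete, locally checkable structural witness, so the argument does not need to reason about ``trapping the twist'' at all. Your route --- dual to the shield analysis of \cref{lem:sepnew1} --- is more conceptual and ties the two directions together nicely, but the paper's route is shorter and avoids the ``routine but slightly delicate gadget computation'' you flag.

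One point to tighten: your reduction ``it suffices to exhibit one $k$-tuple whose individualization makes color refinement produce different stable colorings on $X$ and $Y$'' is not quite right as stated. Showing that $(X,\boldsymbol g)$ and $(Y,\boldsymbol g)$ differ for the \emph{same} tuple $\boldsymbol g$ is not enough; the aggregation in \cref{vp} is over all $k$-tuples, so you must argue that the particular color produced in $(X,\boldsymbol g)$ is not matched by $(Y,\boldsymbol g')$ for \emph{any} $\boldsymbol g'\in V(Y)^k$. In your setting this does hold --- the color encodes ``these $k$ individualized vertices extend to a mutually consistent choice of one vertex per cloud'', which is impossible in $Y$ because of the odd twist --- but you should say so explicitly. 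The paper makes exactly this quantifier switch explicit (``there does not exist any $x'\in V(Y_k)$ and $\boldsymbol v\in V(Y_k)^k$ such that\ldots'').
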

\begin{proof}
We set $X_k$ and $Y_k$ to be CFI-gadgets $G_{k+1}$ and $H_{k+1}$ which are 
known to be indistinguishable by $\WLk{k}$ (see Section~\ref{subsec:CFI} for the definition of these gadgets). It remains to show that 
they can be distinguished by $\PWLk{k}$. Recall that $X_k$ contains a colorful distance-two-clique $Q$ of size $k+2$ while $Y_k$ does not contain such an object. We place $k$ fixed pebbles on some $k$ vertices of $Q$, and let $x,y$ be the remaining two vertices in $Q$. It is clear that upon two rounds of color refinement, the vertices $x$ and $y$ see all individualized colors corresponding to the fixed pebbles. Moreover, the individualized pebbles also see all the individualized colors of other pebbles. 

On the other hand, doing such an operation on $Y_k$ will never yield such colors, since this would otherwise ensure a colorful distance-two-clique in $Y_k$. Hence, there does not exist any $x' \in V(Y_k)$ and $\boldsymbol{v} \in V(Y_k)^k$ such that color refinement on the disjoint union of $(X_k, \boldsymbol{u})$ and $(Y_k,\boldsymbol{v}$) yields the same colors for $x$ and $x'$. Therefore
for any choice of $x' \in V(Y_k)$ it holds that 
the following multisets for vertices $x \in V(X_k)$ and $x'$, obtained by aggregation over all ordered subgraphs, satisfy
\[
\oms \mathsf{CR}(\mathcal{X},x^{\boldsymbol{u}}) \,\vert\, \boldsymbol{u} \in V(X_k)^k \cms \neq 
\oms \mathsf{CR}(\mathcal{Y},{(x')}^{\boldsymbol{v}}) \,\vert\, \boldsymbol{v} \in V(Y_k)^k \cms.
\]
which implies that the aggregated multisets over all vertices
\begin{align*} \label{eqn:card}
\oms\oms \mathsf{CR}(\mathcal{X},u^{\boldsymbol{u}}) \,\vert\, \boldsymbol{u} \in V(X_k)^k \cms\,\vert\, u \in V(X_k)\cms \neq
\oms \oms \mathsf{CR}(\mathcal{Y},v^{\boldsymbol{v}}) \,\vert\, \boldsymbol{v} \in V(Y_k)^k \cms\,\vert\, v \in V(Y_k)\cms.
\end{align*}
Hence, $\PWLk{k}$ distinguishes $X_k$ and $Y_k$. 
\end{proof}


The following theorem shows that the algorithms $\PWLk{k}$, $k \in \mathbb{N}$, form a hierarchy of increasingly powerful isomorphism tests. 

\begin{theorem}\label{strictly}
For $k \in \mathbb{N}$, $\PWLk{k}$ has strictly less expressive power than 
$\PWLk{(k+1)}$.
\end{theorem}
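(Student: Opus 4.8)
The plan is to prove the two halves of ``strictly less expressive'' separately: first the inclusion $\rho(\PWLk{(k+1)}) \subseteq \rho(\PWLk{k})$, and then the existence of a pair of graphs witnessing that this inclusion is proper.

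For the inclusion I would show that $\PWLk{(k+1)}$ simulates $\PWLk{k}$, so that the final color $C(v)$ assigned by $\PWLk{(k+1)}$ determines, through one graph-independent decoding, the final color assigned by $\PWLk{k}$; equality of color multisets on two graphs then transfers from $\PWLk{(k+1)}$ to $\PWLk{k}$. The key observation is that, for an ordered $k$-vertex subgraph with underlying tuple $\pb=(v_1,\dots,v_k)$, the ordered $(k+1)$-vertex subgraph with underlying tuple $(v_1,\dots,v_k,v_k)$ induces exactly the same vertex partition of $G$ through its marking as $\pb$ does, and its atomic type is interdefinable with $\mathsf{atp}(v,\pb)$; hence running $\WLk{1}$ on $G$ marked by $(v_1,\dots,v_k,v_k)$ produces the same stable partition as on $G$ marked by $\pb$. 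Since $\REL$ is injective along the refinement chain, the initial atomic type is recoverable from every stable color, so within the multiset $\oms C_\infty(v,\pb') \mid \pb'\in G_{k+1}\cms$ one can single out the entries coming from the ``diagonal'' tuples, i.e.\ those whose last two coordinates coincide; these are in color-preserving bijection with $\oms C_\infty(v,\pb)\mid\pb\in G_k\cms$, which is precisely the information aggregated by $\PWLk{k}$. As an alternative, this inclusion also follows by combining \cref{equal} at levels $k$ and $k+1$ with the analogous inclusion for \PMPNNs{k}.

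For strictness I would invoke \cref{lem:sepnew2} instantiated at parameter $k+1$: it yields non-isomorphic graphs $A$ and $B$ that $\PWLk{(k+1)}$ distinguishes but $\WLk{(k+1)}$ does not (concretely, CFI-type gadgets, one containing and one missing a colorful distance-two-clique of size $k+3$). Because $\WLk{(k+1)}$ is at least as expressive as $\PWLk{k}$---this is \cref{prop:upperbound} together with \cref{equal}---the pair $A,B$ is indistinguishable by $\PWLk{k}$ as well. Hence $(A,B)\in\rho(\PWLk{k})\setminus\rho(\PWLk{(k+1)})$, and together with the inclusion above this yields $\rho(\PWLk{(k+1)})\subsetneq\rho(\PWLk{k})$, i.e.\ $\PWLk{(k+1)}$ is strictly more expressive than $\PWLk{k}$.

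I expect the only genuine obstacle to be the bookkeeping in the inclusion step: one must check carefully that padding a $k$-tuple with a repeated last coordinate changes neither the initial atomic type (up to a fixed recoding) nor the $\WLk{1}$-stable partition, and that the ``diagonal'' part of the $(k+1)$-subgraph multiset can be extracted from the colors alone---routine, but it has to be carried out uniformly across all graphs for the multiset argument to close. The strictness half is short, since it merely assembles the already-established separation \cref{lem:sepnew2} with the upper bound \cref{prop:upperbound}.
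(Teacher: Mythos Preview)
Your proposal is correct and the strictness half is exactly the paper's argument: the paper's proof reads ``follows immediately from \cref{thm:sepnew1} and \cref{lem:sepnew2}'', and unpacked this is precisely your second paragraph---take the CFI gadgets from \cref{lem:sepnew2} at level $k+1$ (distinguished by $\PWLk{(k+1)}$, not by $\WLk{(k+1)}$), then use the upper bound $\PWLk{k}\leq\WLk{(k+1)}$ (which is the content behind \cref{thm:sepnew1}/\cref{prop:upperbound} via \cref{equal}) to conclude $\PWLk{k}$ fails on them too.

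The one difference is how you handle the inclusion $\rho(\PWLk{(k+1)})\subseteq\rho(\PWLk{k})$. The paper does not give a direct argument here; it inherits monotonicity from the \PMPNN{k} side (Theorem~3), whose proof in turn appeals to the higher-order message-passing framework of \citet{geerts2022} to show \PMPNNs{k} sit inside $(k{+}1)$-\MPNNs and hence below $\WLk{(k+1)}$, and then transfers back via \cref{equal}. Your padding argument---embed $G_k$ into $G_{k+1}$ by $(v_1,\dots,v_k)\mapsto(v_1,\dots,v_k,v_k)$, observe that the initial atomic-type partition and therefore the $\WLk{1}$-stable partition coincide, and recover the diagonal submultiset from the stable colors---is a genuinely more elementary and self-contained route that avoids the external reference. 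The bookkeeping you flag (uniform recoding of atomic types, extractability of the diagonal from the color alone) is straightforward since $\mathsf{atp}(v,(v_1,\dots,v_k,v_k))$ and $\mathsf{atp}(v,(v_1,\dots,v_k))$ are interdefinable by a graph-independent map and $\REL$ is injective, so the initial type is retained in every $C_\infty$.
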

\begin{proof}
The proof follows immediately from \cref{thm:sepnew1} and \cref{lem:sepnew2}; see below.
\end{proof}


\section{Vertex-subgraph $\PWLk{k}$ and \PMPNN{k}: Omitted Proofs}\label{sec:vertexpebble}

In this section we consider a variant of $\PWLk{k}$, denoted vertex-subgraph $\PWLk{k}$, in which the construction of the multi-sets used to define the color of graph is defined differently. As before, we define $C_i(v,\pb)$ and $C_\infty(v,\pb)$ for $v\in V(G)$ and $\pb\in G_k$. Then, instead of computing a single color for a vertex $v$, we compute a single color for $\pb\in G_k$. We do this by aggregating over all vertex in $G$, i.e, we compute
\begin{align*}
	C(\pb) \coloneqq  \REL \bigl(\oms C_{\infty}(v,\pb) \mid v\in V(G) \cms \bigr).
\end{align*}
Finally, we use 
\begin{align*}
	 \REL\bigl(\oms C(\pb)\mid \pb\in G_k\cms\bigr)
\end{align*}
to obtain the color $C(G)$ of $G$. The neural counterpart, vertex-subgraph \PMPNNs{k}, are defined in a similar way. That is, $\hb_{v,\pb}^\tup{i}$ is defined as for \PMPNNs{k} but we now define
\allowdisplaybreaks
\begin{linenomath}
	\postdisplaypenalty=0
	\begin{align*}
		\hb_{\pb}^\tup{T}&\coloneqq\AGG\bigl(\oms \hb_{v,\pb}^\tup{T}\mid v\in V(G)\cms\bigr)\\
		\hb_G&\coloneqq\pAGG\bigl(\oms\hb_{\pb}^\tup{T}\mid \pb\in V(G)^k, v\in V(G)\, \pmb\pi_{v,\pb}\neq\mathbf{0}\cms\bigr).
	\end{align*}	
\end{linenomath}
Again, $\AGG$ and $\pAGG$ are differentiable, parameterized functions, e.g., neural networks.

\subsection{Equivalence of vertex-subgraph $\PWLk{k}$ and vertex-subgraph \PMPNN{k}}

\begin{proposition} For all $k \geq 1$, vertex-subgraph \PMPNNs{k} and vertex-subgraph $\PWLk{k}$ have the same distinguishing power. 
\end{proposition}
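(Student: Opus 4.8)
The plan is to follow the proof of \cref{equal} almost verbatim, establishing the two directions separately, exactly as in \cref{lem:pmpnn_pwl} and \cref{wlinmp}: (i) if vertex-subgraph $\PWLk{k}$ does not distinguish two graphs $G$ and $H$, then no vertex-subgraph \PMPNN{k} does, and (ii) some canonical vertex-subgraph \PMPNN{k} simulates vertex-subgraph $\PWLk{k}$. Throughout I would work on the disjoint union $G \sqcup H$, so that colors and features of vertices in $G$ and in $H$ are comparable. The only change from the non-vertex-subgraph case is that both the algorithm and the network first aggregate over the vertices $v \in V(G)$ to obtain a color $C(\pb)$, respectively a feature $\hb_\pb^{(T)}$, for each ordered $k$-vertex subgraph $\pb$, and only then aggregate over subgraphs to obtain $C(G)$, respectively $\hb_G$; the per-pair update rules defining $C_{i+1}(v,\pb)$ and $\hb_{v,\pb}^{(i+1)}$ are unchanged.

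For direction (i) I would reuse Claim~\ref{claim_equal} verbatim: by induction on the layer index, $C_i(v,\pb) = C_i(w,\pb')$ implies $\hb_{v,\pb}^{(i)} = \hb_{w,\pb'}^{(i)}$ and $\pmb\pi_{v,\pb} = \pmb\pi_{w,\pb'}$, the base case holding because all three quantities are functions of $\mathsf{atp}(v,\st(\pb)) = C_0(v,\pb)$, and the inductive step being identical to the one in Claim~\ref{claim_equal}. Now assume vertex-subgraph $\PWLk{k}$ does not distinguish $G$ and $H$, i.e.\ $C(G) = C(H)$. Unwinding the two applications of $\REL$ gives the multiset equality $\oms C(\pb) \mid \pb \in G_k \cms = \oms C(\pb') \mid \pb' \in H_k \cms$, where $C(\pb) = \REL(\oms C_\infty(v,\pb) \mid v \in V(G) \cms)$. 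Hence there is a bijection between $G_k$ and $H_k$ pairing subgraphs of equal color, and for each paired $(\pb,\pb')$ a bijection between $V(G)$ and $V(H)$ pairing vertices with $C_\infty(v,\pb) = C_\infty(v',\pb')$. Applying Claim~\ref{claim_equal} along these bijections yields $\oms \hb_{v,\pb}^{(T)} \mid v \in V(G) \cms = \oms \hb_{v',\pb'}^{(T)} \mid v' \in V(H) \cms$, and since the selection predicate $\pmb\pi_{v,\pb} \neq \mathbf{0}$ is preserved along the vertex-level bijections, $\hb_\pb^{(T)} = \hb_{\pb'}^{(T)}$ for paired admissible subgraphs; combining over subgraphs gives $\hb_G = \hb_H$ for every vertex-subgraph \PMPNN{k}.

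For direction (ii) I would exhibit a canonical vertex-subgraph \PMPNN{k} simulating vertex-subgraph $\PWLk{k}$: set $\UPD$ to the identity on atomic types, $\UPD_{\pmb\pi}$ to a constant non-zero function, and choose $\UPD^{(i)}$, $\AGG^{(i)}$, the vertex-level $\AGG$, and the subgraph-level $\pAGG$ to be injective on the countably many multisets that actually arise, which is possible by the argument in the proof of Theorem~2 in \citep{Mor+2019}. The features of this instance then refine the colors of vertex-subgraph $\PWLk{k}$ at each level, so whenever the latter distinguishes two graphs, so does the former. Together with direction (i), this proves the claim. I expect the only point that needs care is the two-level bijection in direction (i) --- first matching subgraphs across $G$ and $H$, then, pair by pair, matching their vertices --- together with the observation that the admissibility predicate $\pmb\pi$ is respected at both levels; since $\pmb\pi_{v,\pb}$ depends only on $C_0(v,\pb)$, this is immediate from Claim~\ref{claim_equal} and introduces no new difficulty.
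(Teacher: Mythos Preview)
Your proposal is correct and follows essentially the same approach as the paper: both directions are handled exactly as in \cref{lem:pmpnn_pwl} and \cref{wlinmp}, reusing Claim~\ref{claim_equal} at the per-pair level and then lifting via the two-level bijection (first matching $\pb\in G_k$ with $\qb\in H_k$ of equal color, then matching vertices across $V(G)$ and $V(H)$ for each such pair), with the reverse inclusion again by the canonical injective-aggregation instance. The paper's own proof (\cref{lem:pmpnn_pwl_new} and \cref{lem:wlinmp_new}) is organized identically; your additional remarks about working on the disjoint union and about $\pmb\pi$ being determined by the atomic type are correct and only make explicit what the paper leaves implicit.
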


The proof consists in showing that (i) vertex-subgraph \PMPNNs{k} cannot distinguish more graphs than vertex-subgraph $\PWLk{k}$ (\cref{lem:pmpnn_pwl_new}); and (ii) vertex-subgraph $\PWLk{k}$ cannot distinguish more graphs than vertex-subgraph \PMPNNs{k} (\cref{lem:wlinmp_new}).
\begin{lemma}\label{lem:pmpnn_pwl_new}
For all $k \geq 1$, it holds that  
$\rho(\PWLk{k})\subseteq \rho(\text{\PMPNNs{k}}) $.
\end{lemma}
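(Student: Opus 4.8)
The plan is to replay the proof of \cref{lem:pmpnn_pwl}, changing only the final aggregation step to the graph-level, vertex-first order used by the vertex-subgraph variants. It suffices to show that whenever vertex-subgraph $\PWLk{k}$ does not distinguish two graphs $G$ and $H$, every vertex-subgraph \PMPNN{k} satisfies $\hb_G = \hb_H$. Treating the $\PWLk{k}$ colorings of $G$ and $H$ as drawn from a common codomain (as usual), the hypothesis unwinds to the statement that, at every iteration $i$, the nested multisets
\[
\oms \oms C_i(v,\pb) \mid v \in V(G) \cms \mid \pb \in G_k \cms
=
\oms \oms C_i(v,\pb) \mid v \in V(H) \cms \mid \pb \in H_k \cms .
\]

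The key technical ingredient is \cref{claim_equal} itself, now applied across the two graphs: for all $i \geq 0$, $C_i(v,\pb) = C_i(w,\pb')$ implies $\hb_{v,\pb}^\tup{i} = \hb_{w,\pb'}^\tup{i}$ and $\pmb\pi_{v,\pb} = \pmb\pi_{w,\pb'}$, for $v\in V(G),\pb\in G_k$ and $w\in V(H),\pb'\in H_k$. Its proof carries over verbatim: the per-pair features $\hb_{v,\pb}^\tup{i}$ and the selection vectors $\pmb\pi_{v,\pb}$ are updated in exactly the same way as before, looking only at $v$, at $\pb$, and at the relevant neighborhood $\square$ within the same graph; only the final pooling differs between the original and the vertex-subgraph architecture. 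So I would simply restate the induction on layers for pairs ranging over $G$ and $H$.

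From the equality of the nested multisets above I would extract, by a standard two-level matching argument, a bijection $\beta\colon G_k \to H_k$ together with, for each $\pb\in G_k$, a bijection $\gamma_\pb\colon V(G)\to V(H)$ such that $C_\infty(v,\pb) = C_\infty(\gamma_\pb(v),\beta(\pb))$ for all $v \in V(G)$; this is possible precisely because the outer multiset is a multiset of inner multisets that agree elementwise. Applying \cref{claim_equal} to the matched pairs gives $\hb_{v,\pb}^\tup{T} = \hb_{\gamma_\pb(v),\beta(\pb)}^\tup{T}$ and $\pmb\pi_{v,\pb} = \pmb\pi_{\gamma_\pb(v),\beta(\pb)}$. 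Hence for each $\pb$ we get $\hb_\pb^\tup{T} = \AGG\bigl(\oms \hb_{v,\pb}^\tup{T} \mid v\in V(G)\cms\bigr) = \AGG\bigl(\oms \hb_{w,\beta(\pb)}^\tup{T} \mid w\in V(H)\cms\bigr) = \hb_{\beta(\pb)}^\tup{T}$, and $\pb$ contributes to the outer pooling (some $\pmb\pi_{v,\pb}\neq\mathbf{0}$) iff $\beta(\pb)$ does; so the multisets fed to $\pAGG$ coincide and $\hb_G = \hb_H$.

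The only real work — everything else being a transcription of the earlier proof — is the bookkeeping in the two-level matching: one must choose the bijection between the sets of ordered $k$-vertex subgraphs compatibly with the inner vertex-level bijections, and argue that compatibility is automatic from pointwise equality of the inner multisets. The cross-graph reduction (so that the hypothesis becomes a genuine multiset equality rather than merely matching color-count profiles) is routine and already implicit in the definition of $\rho$.
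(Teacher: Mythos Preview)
Your proposal is correct and follows essentially the same approach as the paper: reduce the hypothesis to equality of (nested) multisets of colorings, invoke Claim~\ref{claim_equal} across the two graphs, and extract bijections to conclude $\hb_G=\hb_H$. The paper's version is slightly more modular---it first shows $C(\pb)=C(\qb)\Rightarrow\hb_{\pb}^\tup{i}=\hb_{\qb}^\tup{i}$ for a single pair of subgraphs and then lifts via the outer multiset equality---while you do the two-level matching in one step and are a bit more explicit about the $\pmb\pi$ selection bookkeeping, which the paper leaves implicit.
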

\begin{proof}
Consider graphs $G$ and $H$ in $\rho(\PWLk{k})$. By definition, this implies that
\begin{equation}
\REL\bigl(\oms C(\pb)\mid \pb\in G_k\cms\bigr)=
\REL\bigl(\oms C(\qb)\mid \qb\in H_k\cms\bigr)\label{eq:pebbl}
\end{equation}
holds.
We next show 
$C(\pb)=C(\qb)$ for $\pb\in G_k$ and $\qb\in H_k$ implies that any  \PMPNN{k} computes the same features for $k$-ordered subgraphs $\pb$ and $\qb$. Combined with 
\cref{eq:pebbl} this implies that any \PMPNN{k} assigns the same feature to $G$ and $H$.
Assume $C(\pb)=C(\qb)$, hence
$C(\pb):=\REL\bigl(\oms C_\infty(v,\pb)\mid v\in V(G)\cms\bigr)$ and $C(\qb):=\REL\bigl(\oms C_\infty(w,\qb)\mid w\in V(H)\cms\bigr)$. Hence, for $C(\pb)=C(\qb)$ to hold, we need that
\begin{equation}
C_i(\pb):=\oms C_i(v,\pb)\mid v\in V(G)\cms=\oms C_i(w,\qb)\mid w\in V(H)\cms=:C_i(\qb) \label{eq:mspebbles}
\end{equation}
for all iterations $i$ of the $\PWLk{k}$. On the \PMPNN{k} side we define
\[
\hb_{\pb}^\tup{i}:=\AGG\bigl(\oms \hb_{v,\pb}^\tup{i}\mid v\in V(G)\bigr)
\]
and similarly for $\hb_{\qb}^\tup{i}$. We now show
\begin{equation}
C_i(\pb)=C_i(\qb)\Longrightarrow \hb_{\pb}^\tup{i}=\hb_{\qb}^\tup{i}.
\end{equation}
Indeed, $C_i(\pb)=C_i(\qb)$ and \cref{eq:mspebbles} imply that there exists a bijection
$\theta:V(G)\to V(H)$ such that $C_i(v,\pb)=C_i(\theta(v),\qb)$. Claim \ref{claim_equal}
implies that $\hb_{v,\pb}^\tup{i}=\hb_{\theta(w),\qb}^\tup{i}$ and thus $\theta$ can be used to define a bijection between the multisets defining  $\hb_{\pb}^\tup{i}$ and $\hb_{\qb}^\tup{i}$. Hence, $\hb_{\pb}^\tup{i}=\hb_{\qb}^\tup{i}$ as desired.
\end{proof}

\begin{lemma}\label{lem:wlinmp_new}
For all $k \geq 1$, it holds that $ 
\rho(\text{\PMPNNs{k}})\subseteq   \rho(\PWLk{k})$.
\end{lemma}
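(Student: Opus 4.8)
\textbf{Proof plan for \cref{lem:wlinmp_new}.} The approach mirrors that of \cref{wlinmp}: I will exhibit a single canonical vertex-subgraph \PMPNN{k} instance that \emph{simulates} the vertex-subgraph $\PWLk{k}$ on every graph of a fixed size, in the sense that the graph-level feature $\hb_G$ it produces encodes the color $C(G)$ faithfully (equal features iff equal $\PWLk{k}$ colors). Granting such an instance, the inclusion follows immediately: if $(G,H)\in\rho(\text{vertex-subgraph }\PMPNNs{k})$, then in particular this canonical instance assigns $G$ and $H$ the same graph feature, hence vertex-subgraph $\PWLk{k}$ assigns them the same color $C(G)=C(H)$, i.e.\ $(G,H)\in\rho(\text{vertex-subgraph }\PWLk{k})$.

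The construction proceeds layer by layer. First, set $\UPD$ to (an encoding of) the identity on atomic types and $\UPD_{\pmb\pi}$ to a constant, non-zero function, so that $\hb_{v,\pb}^\tup{0}$ carries exactly $\mathsf{atp}(v,\st(\pb))=C_0(v,\pb)$ and every ordered $k$-vertex subgraph is admissible (all $\pmb\pi_{v,\pb}\neq\mathbf 0$). For each iteration $i>0$, choose $\AGG^\tup{i}$ and $\UPD^\tup{i}$ to be injective on the relevant (finite) multiset/pair domains, so that $\hb_{v,\pb}^\tup{i}$ is in bijection with $C_i(v,\pb)$; this is exactly the step used in \cref{wlinmp} and follows from the proof of Theorem~2 in \citep{Mor+2019} (equivalently, from the existence of injective multiset functions realizable by MLPs, cf.\ \citep{Xu+2018b}). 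The only new ingredient relative to \cref{wlinmp} is the two-stage readout: pick $\AGG$ (the inner aggregation over vertices within a subgraph) injective on multisets of stable features, so that $\hb_{\pb}^\tup{T}$ encodes $C(\pb)=\REL(\oms C_\infty(v,\pb)\mid v\in V(G)\cms)$; then pick $\pAGG$ injective on multisets of subgraph features, so that $\hb_G$ encodes $\REL(\oms C(\pb)\mid \pb\in G_k\cms)=C(G)$. Composing, $\hb_G$ determines and is determined by $C(G)$.

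The potential subtlety — and the step to treat carefully — is ensuring that these injective choices compose without collisions across the \emph{two} levels of aggregation (vertices-in-subgraph, then subgraphs), so that distinct stable $\PWLk{k}$ patterns never get identified by the network. This is handled, as in \citep{Mor+2019,Azi+2020}, by restricting attention to graphs of bounded size: then only finitely many colors, and hence finitely many multisets, ever arise, so injective encodings into $\RR^d$ exist and can be realized (or approximated to arbitrary precision) by the parameterized $\AGG$, $\UPD$, $\AGG$, $\pAGG$ maps. With that observation the argument is routine, and combining this lemma with \cref{lem:pmpnn_pwl_new} yields the proposition that vertex-subgraph \PMPNNs{k} and vertex-subgraph $\PWLk{k}$ have the same distinguishing power.
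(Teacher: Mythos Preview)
Your proposal is correct and follows essentially the same approach as the paper: the paper's proof of \cref{lem:wlinmp_new} is literally one line, ``This is shown in precisely the same way as \cref{wlinmp},'' and your argument is exactly that simulation-by-injective-aggregations construction (identity/constant initialization, then injective $\UPD^{(i)}$, $\AGG^{(i)}$, and readouts via Theorem~2 of \citep{Mor+2019}). Your explicit handling of the two-stage readout ($\AGG$ over vertices, then $\pAGG$ over subgraphs) is the one detail the paper leaves implicit, and you treat it correctly.
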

This is shown in precisely the same way as \cref{wlinmp}.

\subsection{CFI-Gadgets}\label{subsec:CFI}
The comparison with $\WLk{k}$ and separation results are derived from a graph  construction, also outlined in \citet[Appendix C.1.1]{Morris2020b}. They provide an infinite family of graphs $(G_k, H_k)$, $k \in \mathbb{N}$, such that (a) $\WLk{(k-1)}$ does not distinguish $G_k$ and $H_k$ but (b) $\WLk{k}$ distinguishes $G_k$ and $H_k$. In the following, we recall some relevant results from their paper. 

\paragraph{Construction of $G_k$ and $H_k$.} Let $K_{k+1}$ denote the complete graph on $k+1$ nodes
(there are no self-loops). We index the nodes of $K_{k+1}$ from $0$ to $k$. Let $E(v)$ denote the set of edges incident to $v$ in $K_{k+1}$. Clearly, $|E(v)| = k$ for all $v \in V(K_{k+1})$. 
We define the graph $G_k$ as follows.
\begin{enumerate}
	\item For the node set $V(G_k)$, we add   
	\begin{enumerate}
		\item[(a)]\label{vc} $(v,S)$ for each $v$ in $V(K_{k+1})$ and for each \emph{even} subset $S$ of $E(v)$.
		\item[(b)]\label{ec} two nodes $e^1$ and $e^0$ for each edge $e$ in $E(K_{k+1})$.  
	\end{enumerate} 
	\item For the edge set $E(G_k)$, we add 
	\begin{enumerate}
		\item[(a)] an edge $(e^0,e^1)$ for each $e$ in $ E(K_{k+1})$, 
		\item[(b)] an edge between $(v,S)$ and $e^1$ if $v$ in $e$ and $e$ in $S$,  
		\item[(c)] an edge between $(v,S)$ and $e^0$ if $v$ in $ e$ and $e$ not in $S$.
	\end{enumerate} 
\end{enumerate} 

For $v \in V(K_{k+1})$, the set of vertices of the form $(v,S)$ defined in \cref{vc} are assigned a common color $C_v$. They form what we call a \emph{vertex-cloud} corresponding to the vertex $v$. Similarly, for $e \in E(K_{k+1})$, the two vertices $e^0$ defined in \cref{ec} are assigned a common color $C_e$. They form what we call an \emph{edge-cloud} corresponding to the edge $e$. A \emph{vertex-cloud vertex} is a vertex of the form $(v,S)$ as defined above. An \emph{edge-cloud vertex} is a vertex of the form $e^0$ or $e^1$ as defined above. 

We define the graph $H_k$, in a similar manner to $G_k$, with the following exception. In step 1(a), for the node $0$ in $V(K_{k+1})$, we choose all \emph{odd} subsets of $E(0)$. Clearly, both graphs have $(k)\cdot 2^{k} + \binom{k+2}{2} \cdot 2$ nodes. 
The above construction of graphs $(G_k,H_k)$ is essentially the application of the classic Cai-F{\"u}rer-Immerman construction to a $(k+1)$-clique: we refer to these graphs as \emph{CFI-gadgets} henceforth.

\paragraph{Distance-two cliques.}
We say that a set $S$ of nodes form a \emph{distance-two-clique} if the distance between any two nodes in $S$ is exactly two. 
A distance-two-clique $S$ is \emph{colorful} if (a) every vertex of $S$ is of vertex-cloud kind, and (b) no two vertices in $S$ belong to the same vertex cloud. Clearly, each vertex in a colorful distance-two-clique has a unique initial color. The following lemma is a mild strengthening of a lemma from \citet{Morris2020b}: the proof is a straightforward derivation from the proof of their lemma.

\begin{lemma}[\citep{Morris2020b}]\label{lem:dtc}
	The following holds for the graphs $G_k$ and $H_k$ defined above. 
	\begin{itemize}
		\item There exists a set of $k+1$ vertex-cloud vertices in $G_k$ such that they form a colorful distance-two-clique of size $(k+1)$.
		\item There does not exist a set of $k+1$ vertex-cloud vertices in $H_k$ such that they form a colorful distance-two-clique of size $(k+1)$.
	\end{itemize}
	Hence, $G_k$ and $H_k$ are non-isomorphic. 
\end{lemma}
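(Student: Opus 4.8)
The plan is to reduce the geometric property ``colorful distance-two-clique of size $k+1$'' to a purely combinatorial parity condition over the edges of $K_{k+1}$, and then settle both bullets of the lemma by a single counting argument modulo $2$.

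First I would characterize when two vertex-cloud vertices lie at distance exactly two. Since the edges of $G_k$ (and $H_k$) only join a vertex-cloud vertex to an edge-cloud vertex, or join $e^0$ to $e^1$, no two vertex-cloud vertices are ever adjacent, so any two of them are at distance at least two. Moreover, a vertex-cloud vertex $(v,S)$ is adjacent, for each edge $e \in E(v)$, to exactly one of $e^0,e^1$: to $e^1$ if $e \in S$ and to $e^0$ otherwise. Hence two vertices $(u,S)$ and $(v,T)$ from distinct clouds ($u \neq v$) admit a common neighbor if and only if the unique edge $e=\{u,v\}$ of $K_{k+1}$ satisfies $e \in S \Leftrightarrow e \in T$; in that case they are at distance exactly two, and otherwise at distance at least three (via the path through $e^1$ and $e^0$). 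This is the key step, and I expect it to be the main technical point, since one must verify that $e=\{u,v\}$ is the only candidate common neighbor (it is, as $K_{k+1}$ is simple) and that mismatched membership genuinely pushes the distance beyond two.

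With this characterization, a colorful clique of size $k+1$ must select exactly one representative $(v,S_v)$ from each of the $k+1$ clouds, and the distance-two requirement becomes: for every pair $u \neq v$, writing $e=\{u,v\}$, the bit $[e \in S_u]$ equals the bit $[e \in S_v]$. I would then introduce, for each edge $e$ of $K_{k+1}$, the common value $b_e := [e \in S_u] = [e \in S_v] \in \{0,1\}$, so the whole selection is encoded by a map $b \colon E(K_{k+1}) \to \{0,1\}$. The cloud-membership constraint translates into a parity requirement on the quantities $\sum_{e \in E(v)} b_e$: this sum must be even for every vertex $v$ in $G_k$, and even for every $v \neq 0$ but odd for $v=0$ in $H_k$.

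Finally I would resolve both halves. For $G_k$, the all-zero assignment $b \equiv 0$ (equivalently $S_v = \emptyset$ for all $v$) meets every parity constraint, yielding the colorful distance-two-clique $\{(v,\emptyset) : 0 \le v \le k\}$ of size $k+1$. For $H_k$, I would sum the $k+1$ vertex constraints over $\mathbb{F}_2$: each $b_e$ occurs in precisely the two constraints of the endpoints of $e$, so the left-hand sides total $\sum_e 2b_e \equiv 0$, while the prescribed parities total $0 + \cdots + 0 + 1 \equiv 1$; the contradiction $0 \equiv 1 \pmod 2$ shows no valid $b$ exists, hence no such clique in $H_k$. Non-isomorphism is then immediate, because containing a colorful distance-two-clique of size $k+1$ is an isomorphism-invariant property (the initial coloring, and therefore the distinction between vertex-cloud and edge-cloud vertices, is preserved by any isomorphism).
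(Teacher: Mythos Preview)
Your argument is correct and complete. The paper itself does not give a proof of this lemma; it attributes the result to \citet{Morris2020b} and remarks that the present version is a mild strengthening whose proof ``is a straightforward derivation from the proof of their lemma.'' So there is no in-paper proof to compare against line by line.

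That said, your reduction to the parity system over $E(K_{k+1})$ is exactly the standard way CFI-style constructions are analyzed, and it is almost certainly what the cited reference does as well: encode a putative colorful distance-two-clique by the bits $b_e$, observe that the cloud constraints become $\sum_{e\in E(v)} b_e \equiv 0$ (resp.\ $\equiv 1$ at the twisted vertex) over $\mathbb{F}_2$, and then invoke the handshake identity $\sum_v\sum_{e\in E(v)} b_e = 2\sum_e b_e$ to rule out the $H_k$ case. Your explicit witness $S_v=\emptyset$ for $G_k$ and the double-counting contradiction for $H_k$ are precisely the two directions needed. The one point worth stating a touch more carefully is the uniqueness of the candidate common neighbor: for $u\neq v$ the only edge-cloud vertices adjacent to both $(u,S_u)$ and $(v,S_v)$ must come from an edge in $E(u)\cap E(v)=\{\{u,v\}\}$, which you do note; this is what rules out any ``accidental'' distance-two path through a different edge cloud.
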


Further, they showed the following results regarding the power and limitations of Weisfeiler-Leman vis-a-vis such graphs.  

\begin{lemma}[\citep{Morris2020b}]\label{lem:neurips}
	The $\WLk{(k-1)}$ does not distinguish $G_k$ and $H_k$. 
\end{lemma}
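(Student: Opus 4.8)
The plan is to translate the statement into the language of Spoiler--Duplicator pebble games and then reuse the ``twist-moving'' machinery already developed in \cref{prop:shield} and \cref{prop:makeshield}. First I would invoke the standard correspondence between the Weisfeiler--Leman hierarchy and counting logics: by the convention fixed in the footnotes, $\WLk{(k-1)}$ denotes the folklore $(k-1)$-dimensional algorithm, which has exactly the distinguishing power of the bijective $k$-pebble game, equivalently of the $k$-variable counting logic $C^{k}$ \citep{Cai+1992,Imm+1990}. Hence it suffices to produce a winning strategy for Duplicator in the bijective $k$-pebble game on the disjoint union of $G_k$ and $H_k$, since a Duplicator win certifies $C^{k}$-equivalence and thus $\WLk{(k-1)}$-indistinguishability.

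Next I would record the algebraic structure of the gadget. As $G_k$ and $H_k$ are the CFI gadgets over the base clique $K_{k+1}$, the graph $H_k$ arises from $G_k$ by a single \emph{twist}, and the defining feature of the construction is that performing an even (respectively odd) number of twists yields a graph isomorphic to $G_k$ (respectively $H_k$). Crucially, a twist on one edge of $K_{k+1}$ may be transported to any other edge lying in the same connected component of $K_{k+1}$ after deleting the meta-vertices currently carrying pebbles, by an automorphism that is the identity outside that component---precisely the mechanism of \cref{prop:makeshield}. Duplicator's invariant will be that after each round the pebbled vertices of $G_k$ and $H_k$ carry identical local types, which holds exactly when the twist can be pushed outside the reach of the current pebbles.

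The heart of the argument is a cops-and-robber count on the base graph. In the bijective $m$-pebble game on a CFI pair, Spoiler can trap the twist, and thereby win, if and only if $m$ cops suffice to catch a fugitive on the base graph, that is, if and only if $m \geq \mathrm{tw}(K_{k+1}) + 1 = k+1$. Since the $\WLk{(k-1)}$ game only affords $m = k$ pebbles, Duplicator keeps a fugitive strategy: whenever Spoiler lifts a pebble---leaving at most $k-1$ pebbles on the board and hence at least two pebble-free vertices of the $(k+1)$-clique---Duplicator applies an induced automorphism in the style of \cref{prop:makeshield} to slide the twist into a region of the graph not incident to the pebble Spoiler is about to place, restoring the local-isomorphism invariant. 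Iterating over all rounds shows Duplicator never loses, so $G_k$ and $H_k$ are $C^{k}$-equivalent and therefore not distinguished by $\WLk{(k-1)}$.

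The main obstacle I anticipate is bookkeeping the off-by-one between Weisfeiler--Leman dimension, pebble count, and the treewidth $\mathrm{tw}(K_{k+1}) = k$, together with making the fugitive strategy fully rigorous: one must check that at the moment Spoiler commits a pebble the twist still lies in a component rich enough (containing a cycle) to be relocated freely, rather than on an isolated edge. This cycle-availability requirement is exactly what the treewidth bound controls and what \cref{prop:makeshield} formalizes in the analogous grid-graph setting; since the statement is attributed to \citep{Morris2020b}, I would either cite the cops-and-robber characterization of CFI indistinguishability as a black box or transcribe the twist-relocation argument of \cref{prop:makeshield} to the clique $K_{k+1}$, using that $K_{k+1}$ cannot be separated so as to isolate the twisted edge with fewer than $k+1$ vertices.
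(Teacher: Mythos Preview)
The paper does not supply a proof of this lemma; it is quoted verbatim from \citet{Morris2020b} and used as a black box. There is therefore no ``paper's own proof'' to compare against.

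Your sketch is nonetheless the standard and correct argument for CFI constructions, so a few comments on its execution. First, your appeals to \cref{prop:shield} and \cref{prop:makeshield} are slightly misplaced: those propositions are about the F\"urer grid graphs $X(G^h_n)$ and $Y(G^h_n)$, not the clique-based CFI gadgets $G_k$ and $H_k$ of \cref{subsec:CFI}. The twist-relocation mechanism is indeed analogous, but the statements themselves do not apply to $G_k$ and $H_k$, so you would have to reprove them for the clique base graph rather than invoke them. Second, the ``cycle-availability'' obstacle you flag at the end does not actually arise on $K_{k+1}$: with $k$ pebbles on a $(k{+}1)$-vertex base graph there is always at least one unpebbled meta-vertex, and after Spoiler lifts a pebble there are at least two; since $K_{k+1}$ is complete, those two are adjacent and Duplicator can park the twist on that edge. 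No cycle is required, so the direct argument is shorter than the treewidth/cops-and-robber detour (which is correct but more than you need here). With these adjustments the proof goes through, and since the paper merely cites the result, any valid proof is fine.
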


\begin{lemma}[\citep{Morris2020b}]\label{lem:neurips2}
	The $\WLk{k}$ does distinguish $G_k$ and $H_k$. 
\end{lemma}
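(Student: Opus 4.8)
The plan is to reduce the claim to the structural dichotomy of \cref{lem:dtc}: $G_k$ contains a colorful distance-two-clique on $k+1$ vertices, whereas $H_k$ contains none. I would then show that the folklore $\WLk{k}$ of \cref{kwl}, which colors $k$-tuples, is exactly strong enough to separate graphs along this dichotomy. It is convenient to argue through the bijective pebble-game characterization: $\WLk{k}$ fails to distinguish $G_k$ and $H_k$ if and only if Duplicator wins the bijective $(k+1)$-pebble game on $(G_k, H_k)$, so it suffices to exhibit a winning Spoiler strategy. Note that $k+1$ is precisely the number of pebbles available, matching the size of the clique we wish to detect.

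Concretely, I would fix the colorful distance-two-clique $Q = \{q_0, \dots, q_k\}$ of \cref{lem:dtc}, with $q_i$ lying in the vertex-cloud $C_i$ of $K_{k+1}$. Colorfulness gives the $q_i$ pairwise-distinct cloud colors, so any Duplicator bijection must map each $q_i$ into the corresponding cloud of $H_k$ to preserve initial colors. Spoiler's plan is to occupy the $k+1$ vertices of $Q$, each move constraining Duplicator's bijection further. Since $H_k$ admits no colorful distance-two-clique of size $k+1$, the pairwise distance-two relations that $Q$ satisfies in $G_k$ cannot all be matched in $H_k$: at some configuration Duplicator's images fail to induce a partial isomorphism consistent with these relations, and Spoiler wins. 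Dually, at the level of colors this produces a $k$-tuple whose stable $\WLk{k}$-color occurs in $G_k$ but in no $k$-tuple of $H_k$, so the color-class cardinalities differ and $\WLk{k}$ distinguishes the two graphs.

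The main obstacle is that ``distance exactly two'' is not a relation verified by a single partial-isomorphism test in the game, since two vertex-cloud vertices are non-adjacent and become distance-two only through a shared edge-cloud neighbor. Establishing that Spoiler can force a violation therefore requires tracking how the Cai--F\"urer--Immerman subset-parity data along the edges of $K_{k+1}$ obstructs any global completion in $H_k$; equivalently, that a bounded number of refinement rounds imprints the common-neighbor (distance-two) pattern into the tuple colors on which the game relies, while colorfulness pins each pebbled vertex to a fixed cloud. This parity-obstruction argument crucially exploits the connectivity of the base clique $K_{k+1}$ and is the exact converse of the lower bound in \cref{lem:neurips}; it is carried out in detail by \citet{Morris2020b}, whose construction and proof I would follow.
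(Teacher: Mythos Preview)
The paper does not supply its own proof of this lemma; it is stated purely as a citation of \citet{Morris2020b}. Your sketch is exactly the kind of argument one expects there: exploit the dichotomy of \cref{lem:dtc} (colorful distance-two-clique of size $k{+}1$ present in $G_k$, absent in $H_k$) via the bijective $(k{+}1)$-pebble game equivalent to the folklore $\WLk{k}$ of \cref{kwl}, and let Spoiler play on the clique vertices. Your identification of the only real subtlety—that ``distance exactly two'' is mediated by an intermediate edge-cloud vertex and hence is not an atomic-type check, so Spoiler must reuse a pebble to witness common neighbours while still pinning enough clique vertices—is accurate, and deferring that bookkeeping to \citet{Morris2020b} is consistent with how the present paper treats the result. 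For comparison, when the paper itself needs the analogous fact in nearby proofs (\cref{prop:foc}, \cref{lem:sepnew2}, \cref{lem:pwlmorepowerkwl}) it uses the same distance-two-clique invariant, phrased either through pebbles/individualisation or the equivalent $(k{+}1)$-variable $\mathsf{FOC}$ characterization of \citet{Cai+1992}; your game-based formulation is interchangeable with that.
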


\subsection{Separation results: Comparison of vertex-subgraph $\PWLk{k}$ and $\WLk{k}$.}
We now compare the relative expressive power of the $k$-ordered subgraph Weisfeiler-Leman and the standard Weisfeiler-Leman. 
We remark that, by definition, $\PWLk{0}=\WLk{1}$, so in the remainder of this section we consider $\PWLk{k}$ for $k>0$.

We show that $\PWLk{k}$ is bounded in distinguishing power by $\WLk{k+1}$ (\cref{lem:pwlinkplus1wl}), yet there are are graphs that can be distinguished by $\WLk{k+1}$ but not by $\PWLk{k}$ (\cref{prop:pwlweakerthankplus1wl}). Moreover, $\PWLk{k}$ can distinguish graphs which cannot be distinguished by $\WLk{k}$ (\cref{lem:pwlmorepowerkwl}).
As a consequence,
As a consequence, the algorithms $\PWLk{k}$, $k \in \mathbb{N}$, form a strict hierarchy of vertex-refinement algorithms.  
\begin{lemma}
Let $k \in \mathbb{N}$. Then $\PWLk{k}$ is strictly less expressive than $\PWLk{(k+1)}$. 
\end{lemma}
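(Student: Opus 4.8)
The plan is to establish two facts: first, that $\PWLk{(k+1)}$ is at least as powerful as $\PWLk{k}$ at distinguishing non-isomorphic graphs, i.e. $\rho(\PWLk{(k+1)}) \subseteq \rho(\PWLk{k})$; and second, that this inclusion is strict. Throughout, write $C^{(j)}_i$, $C^{(j)}_\infty$, and $C^{(j)}$ for the iteration-$i$, stable, and final-vertex colourings produced by $\PWLk{j}$.

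For the first fact I would embed every $\PWLk{k}$-instance into a $\PWLk{(k+1)}$-instance by vertex repetition. Given $\pb \in G_k$ with $\st(\pb) = (w_1, \dots, w_k)$, let $\iota(\pb) \in G_{k+1}$ be the ordered $(k+1)$-vertex subgraph with $\st(\iota(\pb)) = (w_1, \dots, w_k, w_k)$. Since the last two coordinates of $\iota(\pb)$ coincide, $\mathsf{atp}(v, \st(\iota(\pb)))$ is a fixed \emph{injective} function of $\mathsf{atp}(v, \st(\pb))$ for every $v \in V(G)$: all adjacencies and equalities involving the repeated vertex are already determined by $\mathsf{atp}(v, \st(\pb))$, while conversely the latter is recovered by restriction. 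As both $\PWLk{k}$ on $\pb$ and $\PWLk{(k+1)}$ on $\iota(\pb)$ simply run $\WLk{1}$ over $V(G)$ with these corresponding initial colourings and the same neighbourhood operator $\square$ (which is independent of the subgraph), an induction on $i$, carried out over \emph{all} instances simultaneously and using the injectivity of $\REL$, shows
\[
C^{(k)}_i(v, \pb) = C^{(k)}_i(v', \pb') \iff C^{(k+1)}_i(v, \iota(\pb)) = C^{(k+1)}_i(v', \iota(\pb'))
\]
for all $v, v' \in V(G)$ and $\pb, \pb' \in G_k$, and likewise for the stable colourings. Moreover, since $\REL$ is injective, $C^{(k+1)}_\infty(v, \qb)$ determines $C^{(k+1)}_0(v,\qb) = \mathsf{atp}(v, \st(\qb))$, hence determines whether $\qb \in \iota(G_k)$, i.e. whether the last two coordinates of $\qb$ agree. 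Therefore the sub-multiset $\oms C^{(k+1)}_\infty(v, \qb) \mid \qb \in \iota(G_k) \cms$ is a function of $\oms C^{(k+1)}_\infty(v, \qb) \mid \qb \in G_{k+1} \cms$, and by the displayed equivalence it is in colour-preserving bijection with $\oms C^{(k)}_\infty(v, \pb) \mid \pb \in G_k \cms$. Hence $C^{(k+1)}(v) = C^{(k+1)}(v')$ implies $C^{(k)}(v) = C^{(k)}(v')$, so the $\PWLk{(k+1)}$ colouring refines the $\PWLk{k}$ colouring and every pair of graphs distinguished by $\PWLk{k}$ is distinguished by $\PWLk{(k+1)}$, giving $\rho(\PWLk{(k+1)}) \subseteq \rho(\PWLk{k})$.

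For strictness, I would invoke \cref{lem:pwlmorepowerkwl} at level $k+1$: it provides non-isomorphic graphs (the CFI-gadgets $G_{k+2}$ and $H_{k+2}$) distinguished by $\PWLk{(k+1)}$ but not by $\WLk{(k+1)}$, the latter by \cref{lem:neurips}. By \cref{lem:pwlinkplus1wl}, $\PWLk{k}$ is bounded in distinguishing power by $\WLk{(k+1)}$, so $\PWLk{k}$ does not distinguish $G_{k+2}$ and $H_{k+2}$ either. Combined with the first fact, this yields $\rho(\PWLk{(k+1)}) \subsetneq \rho(\PWLk{k})$, which is the claim.

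The main obstacle is the first fact: one must check carefully that repeating a vertex adds no information at any refinement round, that the correspondence between the two families of colourings is preserved level by level \emph{across different instances at once} (so that the resulting colour partitions of $V(G)$ agree), and that the degenerate instances $\iota(G_k)$ are identifiable from stable colours so that the relevant sub-multiset can be read off. Strictness is then a routine bookkeeping combination of the already-available comparisons of $\PWLk{k}$ and $\PWLk{(k+1)}$ with the $\WLk{k}$ hierarchy.
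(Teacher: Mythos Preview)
Your proof is correct. For the strictness half you take exactly the paper's route: combine the upper bound of $\PWLk{k}$ by $\WLk{(k+1)}$ (\cref{lem:pwlinkplus1wl}) with the existence of $\WLk{(k+1)}$-indistinguishable graphs that $\PWLk{(k+1)}$ separates (\cref{lem:pwlmorepowerkwl} at level $k+1$). The paper's proof of the lemma consists of precisely these two ingredients and nothing more.

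Where you differ is in the inclusion $\rho(\PWLk{(k+1)})\subseteq\rho(\PWLk{k})$. The paper does not spell this out here; its proof text only records the two separation facts and declares the claim. Your vertex-repetition embedding $\iota\colon G_k\to G_{k+1}$, $(w_1,\ldots,w_k)\mapsto(w_1,\ldots,w_k,w_k)$, is a clean self-contained argument for this monotonicity: the atomic types are in injective correspondence, the per-instance $\WLk{1}$ refinements therefore stay in lockstep across all $\pb$ simultaneously, and your observation that $C^{(k+1)}_\infty(v,\qb)$ determines $\mathsf{atp}(v,\st(\qb))$ and hence the predicate $q_k=q_{k+1}$ is exactly what lets you carve the embedded sub-multiset out of the full one. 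This is more than the paper provides and is the natural way to close the gap.

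One minor mismatch: the lemma sits in the vertex-subgraph section, where the final aggregation forms $C(\pb)$ by aggregating over vertices first and then aggregates over $\pb\in G_k$, whereas your write-up concludes with the per-vertex colour $C(v)$ from \cref{vp}. Your core argument (the bijection on stable colours $C_\infty(v,\pb)\leftrightarrow C_\infty(v,\iota(\pb))$) is agnostic to the nesting order, so the fix is cosmetic: restate the last two sentences of Part~1 in terms of $C(\pb)$ and $C(G)$ to match the section's conventions.
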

\begin{proof}
We have shown that (a) $\PWLk{k}$ is strictly less expressive than $\WLk{(k+1)}$, 
(b) there exist $\WLk{(k+1)}$-indistinguishable graphs which are distinguished by $\PWLk{(k+1)}$.  Hence, we obtain the desired claim.
\end{proof}


\begin{lemma}\label{lem:pwlinkplus1wl}
Let $k\in\mathbb{N}$. Then $\PWLk{k}$ is strictly less expressive than $\WLk{(k+1)}$.
\end{lemma}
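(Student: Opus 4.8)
The goal is to show that $\PWLk{k}$ is bounded in expressive power by $\WLk{(k+1)}$, i.e.\ whenever $\WLk{(k+1)}$ assigns the same color multiset to two graphs $G$ and $H$, then $\PWLk{k}$ does not distinguish them either. Since $\PWLk{k}$ works by fixing an ordered $k$-tuple $\pb$, running color refinement ($\WLk{1}$) on the resulting ``individualized'' graph, and then aggregating over all $\pb \in G_k$ and all vertices, the natural strategy is to show that the whole $\PWLk{k}$ computation on $(G,H)$ can be \emph{simulated} by the $\WLk{(k+1)}$ computation on $(G,H)$. Concretely, I would prove that for every $\pb \in V(G)^k$ and every vertex $v \in V(G)$, the stable color $C_\infty(v, \pb)$ computed by $\PWLk{k}$ is determined by the $\WLk{(k+1)}$-color $C^{k+1}_\infty(\pb, v)$ of the $(k+1)$-tuple $(\pb, v)$ (with the $k$-tuple in the first $k$ coordinates and $v$ in the last). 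Granting this, the nested multiset that $\PWLk{k}$ uses to color $G$ is a function of the multiset $\oms C^{k+1}_\infty(\pb,v) \mid \pb \in V(G)^k, v \in V(G)\cms$, which in turn is refined by the full $\WLk{(k+1)}$ coloring of $G$; hence $\WLk{(k+1)}$-equivalence of $G$ and $H$ implies $\PWLk{k}$-equivalence.

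The key technical step is the simulation claim. I would prove it by induction on the refinement round $i$. Base case: $C_0(v,\pb) = \mathsf{atp}(v,\st(\pb))$, which is exactly the atomic type of the tuple $(\pb, v) \in V(G)^{k+1}$, hence determined by $C^{k+1}_0(\pb, v)$. Inductive step: suppose $C^{k+1}_{i}(\pb, v) = C^{k+1}_i(\qb, w)$ (same color after round $i$ of $\WLk{(k+1)}$) implies $C_i(v, \pb) = C_i(w, \qb)$. I want to deduce the same for round $i+1$ of $\PWLk{k}$. The update rule of $\PWLk{k}$ refines $(v,\pb)$ using the multiset $\oms C_i(u, \pb) \mid u \in \square \cms$ where $\square$ is $N_G(v)$ or $V(G)$. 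On the $\WLk{(k+1)}$ side, the $(k{+}1)$-st-coordinate neighborhood of $(\pb, v)$ gives the multiset $\oms C^{k+1}_i(\pb, u) \mid u \in V(G) \cms$ — this handles the $\square = V(G)$ variant directly; for the $\square = N_G(v)$ variant one uses that $\WLk{(k+1)}$ also sees whether $u$ is adjacent to $v$ (it's encoded in the atomic type / recoverable because replacing coordinate $k{+}1$ preserves the adjacency of the new vertex to the fixed first $k$ coordinates — actually we need to be slightly careful: $N_G(v)$-refinement of $\PWLk{k}$ needs adjacency of $u$ and $v$, which is the adjacency between coordinates $k{+}1$ and itself-before-replacement; this is visible to $\WLk{(k+1)}$ after one extra round using a standard argument that $\WLk{(k+1)}$ can recover ``replace last coordinate by a neighbor'' refinement). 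By the induction hypothesis each $C_i(\pb, u)$ is determined by $C^{k+1}_i(\pb, u)$, so the $\PWLk{k}$-multiset is a function of the $\WLk{(k+1)}$-multiset, and also $C_i(v,\pb)$ is determined by $C^{k+1}_i(\pb,v)$; hence if the round-$(i{+}1)$ $\WLk{(k+1)}$-colors of $(\pb,v)$ and $(\qb,w)$ agree, so do the round-$(i{+}1)$ $\PWLk{k}$-colors. Passing to the stable colorings (both stabilize in finitely many rounds) closes the induction.

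Having established the simulation, the final assembly is routine: $\WLk{(k+1)}$ on $G$ and $H$ producing identical color \emph{multisets} implies the stable $(k{+}1)$-tuple colorings match up to a color-multiset-preserving bijection, hence so do all the derived ordered-$k$-subgraph-plus-vertex colorings $C_\infty(v,\pb)$, hence the nested multiset defining $C(G)$ in $\PWLk{k}$ (or $C(G)$ for the vertex-subgraph variant — the argument is symmetric in which coordinate is aggregated first) equals that defining $C(H)$. This gives $\rho(\WLk{(k+1)}) \subseteq \rho(\PWLk{k})$, i.e.\ $\PWLk{k}$ is no more expressive than $\WLk{(k+1)}$; combined with \cref{thm:sepnew1} (there are graphs $\WLk{(k+1)}$ distinguishes but $\PWLk{k}$ does not), we get strictness.

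\begin{proof}
It suffices to show $\rho(\WLk{(k+1)}) \subseteq \rho(\PWLk{k})$; strictness then follows from \cref{thm:sepnew1}, which exhibits graphs $X_k, Y_k$ distinguished by $\WLk{(k+1)}$ but not by $\PWLk{k}$.

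\emph{Simulation.} Fix a graph $G$. For an ordered $k$-tuple $\pb\in V(G)^k$ and $v\in V(G)$, write $(\pb,v)\in V(G)^{k+1}$ for the $(k{+}1)$-tuple with $\pb$ in its first $k$ coordinates and $v$ in the last. We claim, for all $i\geq 0$ and all $\pb,\qb\in V(G)^k$, $v,w\in V(G)$:
\begin{equation}\label{eq:sim}
 C^{k+1}_i(\pb,v)=C^{k+1}_i(\qb,w)\ \Longrightarrow\ C_i(v,\pb)=C_i(w,\qb),
\end{equation}
where $C_i$ is the round-$i$ coloring of $\PWLk{k}$ and $C^{k+1}_i$ that of $\WLk{(k+1)}$. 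The base case $i=0$ holds because $C_0(v,\pb)=\mathsf{atp}(v,\st(\pb))=\mathsf{atp}(\pb,v)$ is a function of $C^{k+1}_0(\pb,v)=\mathsf{atp}(\pb,v)$. For the inductive step, assume \cref{eq:sim} for $i$ and suppose $C^{k+1}_{i+1}(\pb,v)=C^{k+1}_{i+1}(\qb,w)$. Then $C^{k+1}_i(\pb,v)=C^{k+1}_i(\qb,w)$, and the multiset of round-$i$ colors of the $(k{+}1)$-neighbors obtained by replacing the last coordinate agree:
\[
\oms C^{k+1}_i(\pb,u)\mid u\in V(G)\cms=\oms C^{k+1}_i(\qb,u)\mid u\in V(H)\cms,
\]
and moreover, since the atomic type recorded by $\WLk{(k+1)}$ distinguishes whether the replacing vertex is adjacent to the old one, after one further round $\WLk{(k+1)}$ also matches
\[
\oms C^{k+1}_i(\pb,u)\mid u\in N_G(v)\cms=\oms C^{k+1}_i(\qb,u)\mid u\in N_G(w)\cms.
\]
Applying the induction hypothesis termwise (using that $C^{k+1}_i(\pb,v)=C^{k+1}_i(\qb,w)$ and the multiset equalities above) yields $C_i(v,\pb)=C_i(w,\qb)$ and $\oms C_i(u,\pb)\mid u\in\square\cms=\oms C_i(u,\qb)\mid u\in\square\cms$ for the relevant choice of $\square$. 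By the update rule of $\PWLk{k}$, $C_{i+1}(v,\pb)=C_{i+1}(w,\qb)$. This proves \cref{eq:sim}; taking $i$ large enough that both algorithms have stabilized gives the same implication for $C_\infty$.

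\emph{Assembly.} Suppose $\WLk{(k+1)}$ does not distinguish $G$ and $H$, i.e.\ $\oms C^{k+1}_\infty(\vec{t})\mid \vec{t}\in V(G)^{k+1}\cms=\oms C^{k+1}_\infty(\vec{t})\mid \vec{t}\in V(H)^{k+1}\cms$. Restricting to tuples of the form $(\pb,v)$ and grouping, there is a bijection between $\{(\pb,v)\}$ over $G$ and over $H$ preserving $C^{k+1}_\infty$, hence by \cref{eq:sim} preserving $C_\infty(v,\pb)$. Therefore the nested multiset $\oms\oms C_\infty(v,\pb)\mid v\in V(G)\cms\mid \pb\in V(G)^k\cms$ (equivalently, the multiset defining $C(G)$ for either the vertex- or subgraph-aggregation variant of $\PWLk{k}$) equals the corresponding object for $H$, so $\PWLk{k}$ does not distinguish $G$ and $H$. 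Thus $\rho(\WLk{(k+1)})\subseteq\rho(\PWLk{k})$, and with \cref{thm:sepnew1} the inclusion is strict.
\end{proof}
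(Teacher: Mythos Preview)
Your overall plan—simulate $\PWLk{k}$ inside $\WLk{(k+1)}$ by identifying $(v,\pb)$ with the $(k{+}1)$-tuple $(\pb,v)$, and then invoke a known separation—is sound and is in fact more explicit about the upper bound than the paper, which does not spell this inclusion out in this section. There are, however, two concrete gaps.

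First, the inductive step for the local variant $\square=N_G(v)$: you claim that ``the atomic type recorded by $\WLk{(k+1)}$ distinguishes whether the replacing vertex is adjacent to the old one,'' but this is false for the coordinate you use. The tuple $\phi_{k+1}((\pb,v),u)=(\pb,u)$ no longer contains $v$, so its atomic type says nothing about the edge $\{u,v\}$, and your ``one further round'' remark does not recover the neighbor multiset. The fix is to read the adjacency from a \emph{different} coordinate of the $\WLk{(k+1)}$ update: for any $j\leq k$ the tuple $\phi_j((\pb,v),u)$ still contains $v$, and its atomic type (hence its $C^{k+1}_i$-color) records whether $(u,v)\in E$. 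Since the $\WLk{(k+1)}$ update carries the whole vector $(C^{k+1}_i(\phi_1),\ldots,C^{k+1}_i(\phi_{k+1}))$ for each $u$, one can first filter to $u\in N_G(v)$ using component $j$ and then project to the last component $C^{k+1}_i(\pb,u)$. Your assembly step has a related issue: a bijection on flat $(k{+}1)$-tuples preserving $C^{k+1}_\infty$ does \emph{not} by itself yield equality of the nested multiset $\oms\oms C_\infty(v,\pb)\mid v\cms\mid \pb\cms$; you must additionally use stability of $C^{k+1}_\infty$ (the color of $(\pb,v)$ determines $\oms C^{k+1}_\infty(\pb,u)\mid u\cms$, hence the inner bag).

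Second, and this is where your route diverges from the paper: for strictness you cite \cref{thm:sepnew1}, but that theorem (via the F\"urer grid graphs, \cref{lem:sepnew1}) is proven for the \emph{subgraph-then-vertex} aggregation order; the bijection $\boldsymbol{u}\mapsto\boldsymbol{v}$ constructed there depends on the fixed vertex $u$ and does not transfer verbatim to the \emph{vertex-then-subgraph} order treated in \cref{sec:vertexpebble}, which is the variant the present lemma concerns. The paper establishes strictness here by an entirely separate construction: a $4$-cycle backbone with CFI gadgets $G_k,H_k$ attached in two different cyclic orders, shown in \cref{prop:foc} to be $\WLk{(k+1)}$-distinguishable and in \cref{prop:pwlweakerthankplus1wl} to be indistinguishable by vertex-subgraph $\PWLk{k}$. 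You should either invoke those propositions, or supply a separate argument that the F\"urer grid graphs also fool the vertex-subgraph variant.
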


We start by describing a construction of a new family of graphs ($X_k,Y_k)$, $k\in\mathbb{N}$, based on the CFI gadgets $G_k$ and $H_k$.

\paragraph{Construction of the graphs $X_k$ and $Y_k$}
The graph $X_k$ is defined as follows. Let $C$ be a \emph{backbone} cycle of length four with vertices $u_1,u_2,u_3,u_4$, each colored ``red''. We attach the CFI gadgets $G_{k},H_{k}$ to each of these four vertices as follows. By ``attaching a gadget $F$ to a vertex $u$'', we mean that every vertex of the gadget $F$ is made adjacent to the backbone vertex $u$. Conversely, a backbone vertex $u$ \emph{points} to a gadget $F$ if $F$ is attached to $u$.  
Going back to our construction of $X_k$, we attach a copy of $G_{k}$ each to $u_1$ and $u_3$. 
We also attach a copy of $H_{k}$ each to $u_2$ and $u_4$. All the gadget vertices retain their original colors.   

The graph $Y_k$ is defined similarly to $X_k$ with the following exception. We attach a copy of $G_{k}$ each to consecutive vertices $u_1$ and $u_2$, while we attach a copy of $H_{k}$ each to consecutive vertices $u_3$ and $u_4$. Hence, $X_k$ and $Y_k$ only differ in the cyclic ordering of the attached gadgets. 

Observe that the backbone vertices in $X_k$ and $Y_k$ are colored `red' initially. The gadget vertices inherit their colors from the construction of graphs $G_k$ and $H_k$. These colors are either vertex cloud colors, say $\{C_i : i \in [k+1] \}$,
or the edge clouds colors $\{ C_{ij} : \{i,j\} \in {k+1 \choose 2} \}$. We call these two kinds of colors along with the red color as the \emph{basic} colors. Let the basic color of a vertex $u$ be denoted by $\beta(u)$. 

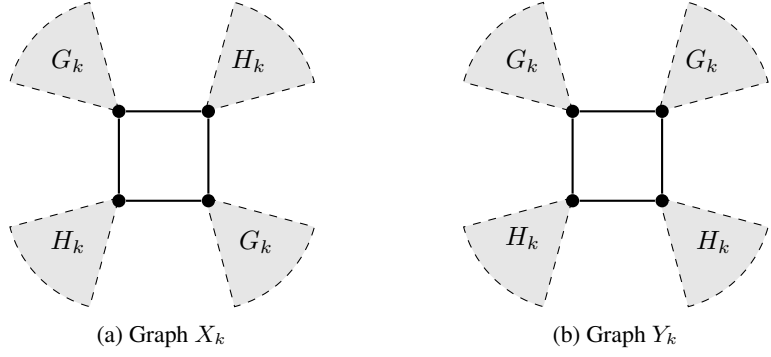
\begin{figure}[t]
\centering
\captionsetup[subfigure]{justification=centering}
\tikzset{
vertex/.style = {fill,circle,inner sep=0pt,minimum size=5pt},
edge/.style = {-,thick},
link/.style = {-, semithick},
lbl/.style={color=lightgray},
node/.style = {inner sep=0.7pt,circle,draw,fill}
}
\begin{subfigure}[t]{0.35 \textwidth}
\centering
\begin{tikzpicture}
\node[vertex] (a) [label = {}] {} ;
\node[vertex] (b) [label = {}] [right = of a] {}; 
\node[vertex] (c) [label = {}] [below = of b] {}; 
\node[vertex] (d) [label = {}] [left = of c] {}; 

\draw[edge] (a) -- (b) {}; 
\draw[edge] (b) -- (c) {}; 
\draw[edge] (c) -- (d) {}; 
\draw[edge] (d) -- (a) {}; 

\begin{scope}[xshift = -0.8cm, yshift=0.45cm]
\node (aa) at (60:0.25cm) {$G_k$};
\end{scope}

\begin{scope}[xshift = 1.60cm, yshift=0.45cm]
\node (aa) at (60:0.25cm) {$H_k$};
\end{scope}

\begin{scope}[xshift = 1.70cm, yshift=-1.95cm]
\node (aa) at (60:0.25cm) {$G_k$};
\end{scope}

\begin{scope}[xshift = -0.8cm, yshift=-1.95cm]
\node (aa) at (60:0.25cm) {$H_k$};
\end{scope}

\begin{scope}[on background layer]
\draw[dashed, thin ,fill = black!10] (0,0) -- (105:15mm)
arc [start angle=105, end angle=165, radius=15mm] -- (0,0); 
\draw[dashed, thin ,fill = black!10, xshift = 1.15cm] (0,0) -- (15:15mm)
arc [start angle=15, end angle=75, radius=15mm] -- (0,0); 
\draw[dashed, thin ,fill = black!10, yshift = -1.15cm] (0,0) -- (195:15mm)
arc [start angle=195, end angle=255, radius=15mm] -- (0,0); 
\draw[dashed, thin ,fill = black!10, xshift = 1.15cm, yshift= -1.15cm] 
(0,0) -- (285:15mm)
arc [start angle=285, end angle=345, radius=15mm] -- (0,0); 
\end{scope}
\end{tikzpicture}
\caption{Graph $X_k$}
\label{fig:a}
\end{subfigure}
\quad\quad\quad
\begin{subfigure}[t]{0.35 \textwidth}
	\centering
	\begin{tikzpicture}
	\node[vertex] (a) [label = {}] {} ;
	\node[vertex] (b) [label = {}] [right = of a] {}; 
	\node[vertex] (c) [label = {}] [below = of b] {}; 
	\node[vertex] (d) [label = {}] [left = of c] {}; 
	
	\draw[edge] (a) -- (b) {}; 
	\draw[edge] (b) -- (c) {}; 
	\draw[edge] (c) -- (d) {}; 
	\draw[edge] (d) -- (a) {}; 
	
	\begin{scope}[xshift = -0.79cm, yshift=0.45cm]
	\node (aa) at (60:0.25cm) {$G_k$};
	\end{scope}
	
	\begin{scope}[xshift = 1.60cm, yshift=0.45cm]
	\node (aa) at (60:0.25cm) {$G_k$};
	\end{scope}
	
	\begin{scope}[xshift = 1.75cm, yshift=-1.95cm]
	\node (aa) at (60:0.25cm) {$H_k$};
	\end{scope}
	
	\begin{scope}[xshift = -0.79cm, yshift=-1.9cm]
	\node (aa) at (60:0.25cm) {$H_k$};
	\end{scope}
	
	\begin{scope}[on background layer]
	\draw[dashed, thin ,fill = black!10] (0,0) -- (105:15mm)
	arc [start angle=105, end angle=165, radius=15mm] -- (0,0); 
	\draw[dashed, thin ,fill = black!10, xshift = 1.15cm] (0,0) -- (15:15mm)
	arc [start angle=15, end angle=75, radius=15mm] -- (0,0); 
	\draw[dashed, thin ,fill = black!10, yshift = -1.15cm] (0,0) -- (195:15mm)
	arc [start angle=195, end angle=255, radius=15mm] -- (0,0); 
	\draw[dashed, thin ,fill = black!10, xshift = 1.15cm, yshift= -1.15cm] 
	(0,0) -- (285:15mm)
	arc [start angle=285, end angle=345, radius=15mm] -- (0,0); 
	\end{scope}
	\end{tikzpicture}
	\caption{Graph $Y_k$}
	\label{fig:a}
\end{subfigure}
\caption{Pair of graphs which are $\WLk{(k+1)}$ distinguishable but $\PWLk{k}$ indistinguishable.
	The graphs $G_k$ and $H_k$ are CFI gadgets. Shaded sector represents uniform adjacency to the backbone vertex.} 
\label{fig:cntex}
\end{figure}

\newcommand{\FOC}{\mathsf{FOC}}

\begin{proposition} \label{prop:foc}
For $k \in \mathbb{N}$, $\WLk{(k+1)}$ distinguishes the graphs $X_k$ and $Y_k$.
\end{proposition}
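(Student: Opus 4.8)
The plan is to distinguish $X_k$ and $Y_k$ using $\WLk{(k+1)}$ by exploiting the combinatorial difference between the two graphs: in $X_k$ the backbone $4$-cycle has the cyclic pattern $(G_k, H_k, G_k, H_k)$ of attached gadgets, whereas in $Y_k$ the pattern is $(G_k, G_k, H_k, H_k)$. Since the backbone vertices all start with the same "red" color, they are indistinguishable by their own initial color, so the distinguishing information must come from the gadgets they point to. The key leverage is \cref{lem:neurips2}: $\WLk{k}$ already distinguishes $G_k$ from $H_k$. I would first argue that $\WLk{(k+1)}$, when run on $X_k$ (resp.\ $Y_k$), is at least as strong as running $\WLk{k}$ internally on each attached gadget while having one extra pebble free to "anchor" a backbone vertex; more precisely, by fixing one coordinate on a backbone vertex $u_i$ and using the remaining $k$ coordinates to simulate $\WLk{k}$ on the gadget attached to $u_i$ (the gadget is an induced-neighborhood-definable substructure, since its vertices are exactly the non-red neighbors of $u_i$ that are mutually close), the algorithm computes, for each backbone vertex, a color that encodes the isomorphism type of its attached gadget — i.e.\ whether it is $G_k$ or $H_k$.

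Next I would turn this into a color on the backbone vertices themselves. After the gadget-type information has propagated, each backbone vertex $u_i$ in $X_k$ acquires a color recording "(red, type of my gadget)", and similarly in $Y_k$. Then one more round of refinement lets each backbone vertex see the multiset of colors of its two backbone neighbors on the $4$-cycle. In $X_k$, every backbone vertex is of one type and both of its cycle-neighbors are of the other type, so the local picture around every backbone vertex is the same: a "$G$-vertex" is adjacent to two "$H$-vertices" and vice versa. In $Y_k$, however, each "$G$-vertex" is adjacent to one "$G$-vertex" and one "$H$-vertex", and likewise for "$H$-vertices". Hence the pair $(\text{own gadget-type}, \oms\text{neighbor gadget-types}\cms)$ takes values $\{(G,\oms H,H\cms),(H,\oms G,G\cms)\}$ in $X_k$ but $\{(G,\oms G,H\cms),(H,\oms G,H\cms)\}$ in $Y_k$. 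These multisets of refined colors over the four backbone vertices differ, so $\WLk{(k+1)}$ produces a different global color histogram on $X_k$ and $Y_k$, which is exactly what it means for $\WLk{(k+1)}$ to distinguish them.

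To make the first step rigorous, I would invoke the standard characterization of $\WLk{(k+1)}$ via the bijective $(k+2)$-pebble Spoiler--Duplicator game (as used, e.g., in \cref{prop:shield}): Spoiler places one pebble pair on a backbone vertex $u_i$ in $X_k$ and the corresponding backbone vertex $u_i'$ in $Y_k$ chosen so that the attached gadgets are of different type (this is possible precisely because the cyclic patterns differ — after matching up as many same-type neighbors as possible around the cycle there must be a mismatch). From that configuration, Spoiler has $k+1$ pebble pairs left, enough to play the winning $\WLk{k}$-strategy of \cref{lem:neurips2} entirely within the two attached gadgets, whose vertex sets are now "pinned down" as the non-red neighbors of the anchored backbone pebble. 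Duplicator cannot respond because any bijection must either misalign a backbone vertex (detected by the cycle structure / the red coloring and degree) or misalign gadget vertices (detected by the internal $\WLk{k}$ game). The main obstacle I anticipate is this bookkeeping: carefully verifying that attaching the gadgets uniformly to a backbone vertex does not accidentally create new short paths or symmetries that let $\WLk{(k+1)}$ conflate $G_k$ and $H_k$, and that one pebble really suffices to isolate a gadget so the remaining $k+1$ faithfully simulate the $\WLk{k}$ lower-bound game on it. Everything else is routine propagation-of-colors reasoning and the combinatorial observation that a $4$-cycle $2$-colored as $GHGH$ is not locally isomorphic to one colored $GGHH$.
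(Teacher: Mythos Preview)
Your proposal is correct and follows essentially the same underlying idea as the paper: first use the extra dimension to ``type'' each backbone vertex by whether its attached gadget is $G_k$ or $H_k$ (leveraging that $\WLk{k}$ separates $G_k$ and $H_k$, i.e., \cref{lem:neurips2}), and then observe that the cyclic pattern $GHGH$ on the backbone $4$-cycle differs from $GGHH$.

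The only difference is presentational. The paper works in the equivalent $\mathsf{FOC}$ formalism: it writes down an explicit $(k+2)$-variable sentence $\varphi$ that says ``some backbone vertex has exactly two backbone neighbors each pointing to a gadget containing a colorful distance-two clique of size $k+1$,'' where the distance-two-clique subformula plays the role of your black-box invocation of \cref{lem:neurips2}. You instead argue via color propagation and the bijective $(k+2)$-pebble game. Both are standard equivalent characterizations of $\WLk{(k+1)}$ (Cai--F\"urer--Immerman), and the ``anchor one pebble on a backbone vertex, then simulate the $\WLk{k}$ game inside its gadget'' step is exactly the game-theoretic dual of the paper's nesting of the $\mathsf{DTC}$ subformula inside $\mathsf{POINT}_{G_k}(x_{k+2})$. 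Your worry that the uniform attachment might interfere with the inner game is easily dispatched: the anchor pebble forces Duplicator to stay in the correct gadget (any escape breaks adjacency to the anchored vertex), and the uniform extra edge to $u_i$ is symmetric across all gadget vertices, so it cannot help Duplicator inside.
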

\begin{proof}
It suffices to define a $(k+2)$-variable sentence $\varphi$ in first-order logic with counting quantifiers 
($\FOC$) 
such that $\varphi(X_k) \neq \varphi(Y_k)$ (Indeed, \citet{Cai+1992} establishes a precice  correspondence between $\WLk{k}$ and $\FOC$ sentences using $(k+1)$ variables). Intuitively, the sentence $\varphi$ expresses that there is a backbone vertex which has two different backbone vertices, each of which pointing to a CFI-gadget containing a colorful distance-two clique of size $(k+1)$.

We first note that a distance-two-clique of size $(k+1)$ over vertex-cloud vertices is definable as a $\FOC$-formula on $(k+2)$ variables. Indeed, let $\mathsf{C}_i$, $i\in[k+1]$ be unary color predicates for vertex clouds, and $\mathsf{C
}_{ij}$, $\{i,j\} \in {k+1 \choose 2}$ unary predicates for edge clouds. Then,
\[
\mathsf{DTC}(x_1,\dots,x_{k+1}) := \bigwedge_{\substack{i,j \in [k+1]\\i\neq j}} \mathsf{C}_i(x_i) \land \mathsf{C}_j(x_j)\land \exists\, x_{k+2} \, (\mathsf{C}_{ij}(x_{k+2}) \land (E(x_{k+2},x_i) \land E(x_{k+2},x_j)))
\]
is a formula that is satisfied by a $(k+1)$-tuple of vertices in a graph when they form a colorful distance-two clique with colors based on vertex and edge clouds.


We proceed to the description of $\varphi$. Let $\exists\,X$ denote the chain of $k+1$ quantifiers $\exists\,x_1 \cdots \exists\,x_{k+1} $. 
Since the backbone vertices in $X_k$ and $Y_k$ receive a distinct color (red), let $\mathsf{BB}(x)$ denote the unary predicate encoding this condition. 
The $(k+2)$-variable formula $\mathsf{POINT}_{G_k}(x)$ encodes whether 
a backbone vertex $x$ points to a $G_k$-gadget, by requiring the existence of a distance-two-clique of the kind stated in \cref{lem:dtc}. 
\begin{align*}
\mathsf{POINT}_{G_k}(x_{k+2}) := \mathsf{BB}(x_{k+2}) \land \exists X \,\,(\,\, \mathsf{DTC}(x_1,\dots,x_{k+1}) \land \bigwedge_{i \in [k+1]} E(x_{k+2},x_i) \,\,)
\end{align*}
Then the desired sentence 
\begin{align*}
\varphi &:= \exists\,x_{k+1} \left( \, \mathsf{BB}(x_{k+1}) \,\land \, \exists^{=2}x_{k+2} \, \left( \mathsf{BB}(x_{k+2}) \land \, E(x_{k+2},x_{k+1}) \land  \mathsf{POINT}_{G_k}(x_{k+2}) \right) \right)
\end{align*}

It is know clear that $X_k$ satisfies $\varphi$: it has a backbone vertex $u_3$ which has exactly two backbone neighbours $u_2$ and $u_4$, each of which point to a $G_k$-gadget. Since $G_k$ contains a distance-two-clique of size $(k+1)$ while $H_k$ does not contain a distance-two-clique of size $(k+1)$, $X_k$ satisfies $\varphi$. On the other hand, $Y_k$ does not satisfy $\varphi$ because it does not have any backbone vertex with two such backbone neighbours. 
\end{proof}

\paragraph{Cyclic Types} Given a vertex tuple $\boldsymbol{z}=(z_1,\dots,z_k) \in V(X_k)^k$, define its \emph{cyclic type} as follows. For $i \in [4]$, let $S_i$ denote the set of all $j \in [k]$ such that the vertex $z_j$ is either equal to or is attached to the backbone vertex $u_i$. Call $|S_i|$ to be the \emph{weight} of the backbone vertex $u_i$. This associates a cyclic sequence $S_{\boldsymbol{z}} = (S_1,S_2,S_3,S_4)$ with $\boldsymbol{z}$: by cyclic sequence, we mean that only the cyclic ordering of the sets matters, e.g., the cyclic sequence $(S_2,S_3,S_4,S_1)$ is equal to the cyclic sequence $(S_1,S_2,S_3,S_4)$. The cyclic type of $\boldsymbol{z}=(z_1,\dots,z_k)$ is then defined by the tuple $(\beta(z_1),\dots,\beta(z_k))$ of basic colors and the cyclic sequence $S_{\boldsymbol{z}}$. The same procedure can be used to define the cyclic type of a vertex tuple $\boldsymbol{z} \in V(Y_k)^k$. Further, a cyclic type is said to be \emph{simple} if the weight of every backbone vertex is at most $k-2$. If the cyclic type is not simple, there is a unique backbone vertex of weight at least $k-1$. We call such a vertex as a \emph{leader}. If the weight of the leader is exactly $k-1$, there exists a \emph{follower} vertex of weight one. A backbone vertex of weight zero is called \emph{weightless}.

Instead of usual color refinement (CR), we define a \emph{skewed} color refinement (SCR) on the graphs $(X_k,\boldsymbol{u})$ and $(Y_k,\boldsymbol{v})$, as follows: in the first stage, we exhaustively and exclusively refine the class of backbone vertices. This refinement uses color information from both backbone and non-backbone vertices. In the second stage, we do the usual Color Refinement on the resulting graph. Using standard arguments, it is easy to show that both CR and SCR both converge to the same stable coloring. 


\begin{proposition}\label{prop:pwlweakerthankplus1wl}
For $k \in \mathbb{N}$, $\PWLk{k}$ cannot distinguish the graphs $X_k$ and $Y_k$.
\end{proposition}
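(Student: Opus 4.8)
The plan is to follow exactly the template of the F\"urer-grid argument of \cref{lem:sepnew1}. Fix a basic-colour-preserving bijection between $V(X_k)$ and $V(Y_k)$ (backbone to backbone, $G_k$-vertices to $G_k$-vertices, $H_k$-vertices to $H_k$-vertices), and let $\mathcal{X}$ (resp.\ $\mathcal{Y}$) be the disjoint union of all $(X_k,\boldsymbol{u})$ over $\boldsymbol{u}\in V(X_k)^k$ (resp.\ of all $(Y_k,\boldsymbol{v})$). It suffices to prove, for every vertex $u$ and its partner $u'$,
\[
\oms \mathsf{CR}(\mathcal{X},u^{\boldsymbol{u}}) \mid \boldsymbol{u}\in V(X_k)^k \cms = \oms \mathsf{CR}(\mathcal{Y},(u')^{\boldsymbol{v}}) \mid \boldsymbol{v}\in V(Y_k)^k \cms,
\]
and then sum over all vertices to get the nested-multiset equality that defines $\PWLk{k}$-indistinguishability. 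Throughout I will reason with the skewed refinement $\mathsf{SCR}$ instead of $\mathsf{CR}$: the two converge to the same stable colouring, but after the first $\mathsf{SCR}$ phase the colour of every backbone vertex already records the cyclic type of the marked tuple, so in the second phase the argument decomposes along cyclic types. I would therefore partition the index set $V(X_k)^k$ (and $V(Y_k)^k$) by cyclic type, match types across the two graphs via the chosen bijection, and argue the contribution of each type separately, splitting into the \emph{simple} and \emph{non-simple} cases.

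For a \textbf{simple} cyclic type — every backbone vertex has weight at most $k-2$ — each of the four attached gadgets carries at most $k-2$ marked vertices. This is too few to trap the CFI twist in any attached copy of $H_k$, so, exactly in the spirit of the shield lemmas \cref{prop:shield} and \cref{prop:makeshield}, one can apply an automorphism to each such $H_k$ that moves its twist away both from the marked vertices and from the neighbourhood of the backbone vertex it is attached to, turning it into a copy that is $\mathsf{CR}$-indistinguishable from $G_k$ (the fact that the backbone vertex is adjacent to \emph{all} vertices of its gadget is harmless, since a vertex adjacent to everything localises no information). Hence for a simple cyclic type the number of marked tuples of that type that assign a prescribed stable colour to a given vertex coincides in $X_k$ and $Y_k$, so these contributions cancel between the two multisets, regardless of the cyclic arrangement of the gadgets.

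For a \textbf{non-simple} cyclic type there is a unique leader backbone vertex of weight $k-1$ or $k$; if it has weight $k$ the other three are weightless, and if it has weight $k-1$ there is in addition exactly one follower of weight one. The leader's gadget may now carry enough marked vertices for $\mathsf{CR}$ to tell $G_k$ from $H_k$ (by \cref{lem:neurips2} and \cref{lem:dtc}), so its type is \emph{visible}; but a gadget carrying at most one marked vertex is still CFI-indistinguishable by \cref{lem:neurips}, so the follower's gadget type is \emph{invisible}, and a weightless gadget contributes the same colours whether it is $G_k$ or $H_k$. The decisive observation is that in both $X_k$ and $Y_k$ exactly two backbone vertices point to $G_k$ and two to $H_k$; running over the four choices of leader (and, in the weight-$(k-1)$ subcase, the further choice of follower and of its adjacency to the leader along the backbone $C_4$), one checks that once the invisible datum — the follower's gadget type — is projected away, the resulting multiset of visible data (leader gadget type, leader/follower adjacency pattern, and placement of the pebbles inside the leader's gadget) is the same for $X_k$ and $Y_k$. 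Matching tuples of equal cyclic type then yields the contribution-wise equality, completing the multiset identity.

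The main obstacle I expect is this last case. One must make rigorous the claim that a gadget carrying at most one marked vertex — even though it is entirely adjacent to a backbone vertex that itself sits within distance two of the heavily pebbled leader gadget — still cannot be separated from its twisted copy by colour refinement, which calls for a shield-type lemma tailored to the attached-gadget construction rather than to F\"urer grids; and one must carry the cyclic-type bookkeeping through uniformly, including the awkward configurations in which a backbone vertex is itself marked and the boundary regime where the leader gadget carries exactly $k-1$ versus $k$ pebbles. (That the bound is sharp, i.e.\ that $\WLk{(k+1)}$ does distinguish $X_k$ and $Y_k$, is already recorded in \cref{prop:foc}, which is a useful sanity check on where the argument above must \emph{fail} if one tried to push it to $k+1$ marked vertices.)
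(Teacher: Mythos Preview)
Your strategy—partition pebble tuples by cyclic type, use skewed colour refinement, split into simple and non-simple cases—is exactly the paper's. Two things need fixing.

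\textbf{The sufficient condition is for the wrong variant.} The equality you aim for (for each vertex $u$, the multiset of its colours over all pebblings agrees with that of its partner $u'$) is the reduction used in \cref{lem:sepnew1} for the \emph{original} $\PWLk{k}$. This proposition sits in the vertex-subgraph section, where aggregation happens in the opposite order: first over vertices for each fixed pebbling $\pb$, then over pebblings. Per-vertex multiset equality does not imply that. The paper's correct reduction is: exhibit partitions $\{P_i\}$ of $V(X_k)^k$ and $\{Q_i\}$ of $V(Y_k)^k$ with $|P_i|=|Q_i|$ such that $(X_k,\boldsymbol{u})$ and $(Y_k,\boldsymbol{v})$ are CR-indistinguishable as coloured graphs whenever $\boldsymbol{u}\in P_i$ and $\boldsymbol{v}\in Q_i$. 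Your cyclic-type partition, refined by leader-gadget type in the non-simple case, is precisely the partition that works—just change the target statement.

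\textbf{Drop the shields.} The lemmas \cref{prop:shield} and \cref{prop:makeshield} are specific to the F\"urer-grid construction and have no analogue here; the gadgets $G_k,H_k$ are CFI graphs over $K_{k+1}$, not long grids, and there is no twist to ``move''. You don't need any such machinery. The paper's simple-case argument is one line: a gadget carrying at most $k-2$ individualized vertices cannot be identified as $G_k$ versus $H_k$ by colour refinement, since that would contradict \cref{lem:neurips}. For the non-simple case, the paper just notes that leader and follower backbone vertices are already singletons after SCR stage one, and weightless backbone vertices cannot refine further because their attached gadgets carry no individualized vertices at all (so $\WLk{1}$ is blind to their type). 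The visible/invisible bookkeeping you sketch is unnecessary once you partition by leader-gadget type up front.
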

\begin{proof}
It suffices to show a partition $\mathcal{P}_X$ of $V(X_k)^k$ into $m$ classes $P_1,\dots,P_m$ and a partition $\mathcal{P}_Y$ of $V(Y_k)^k$ into $m$ classes $Q_1,\dots,Q_m$ such that 
for each $i \in [m]$ it holds that (a) $|P_i| = |Q_i|$ , and (b) graphs $(X_k,\boldsymbol{u})$ and $(Y_k,\boldsymbol{v})$ are indistinguishable under color refinement, for all $\boldsymbol{u} \in P_i$ and for all $\boldsymbol{v} \in Q_i$. Here $(X_k,\boldsymbol{u})$ stands for a copy of the graph $X_k$ in which the vertex $\boldsymbol{u}_i$ receives a distinct initial color $i$, for each $i \in [k]$. Similarly, $(Y_k,\boldsymbol{v})$ stands for a copy of the graph $Y_k$ in which the vertex $\boldsymbol{u}_i$ receives a distinct initial color $i$, for each $i \in [k]$.

For the partition $\mathcal{P}_X$, we first partition the tuples in $V(X_k)^k$ into sets $P_\tau$ according to their cyclic type $\tau$. Next, if a cyclic type $\tau$ is not simple, we further partition the set $P_\tau$ depending on whether the leader vertex points to a $G_k$-gadget or a $H_k$-gadget. We obtain a corresponding partition $\mathcal{P}_Y$ following the same process for $Y_k$. 

Instead of usual color refinement (CR), we do a \emph{skewed} color refinement (SCR) on the graphs $(X_k,\boldsymbol{u})$ and $(Y_k,\boldsymbol{v})$: in first stage, we exhaustively and exclusively refine the class of backbone vertices, and in second stage we do the usual Color Refinement on the resulting graph. Using standard arguments, it is easy to show that both CR and SCR both converge to the same stable coloring. 

Let $\tau$ be a simple cyclic type. It is easy to verify that the number of tuples of type 
$\tau$ in $V(X_k)^k$ and $V(Y_k)^k$ are equal. Suppose that $\boldsymbol{u} \in V(X_k)^k$ and $\boldsymbol{v} \in V(Y_k)^k$ have the cyclic type $\tau$.  After the first stage of SCR, the two graphs are indistinguishable because of the cyclic types being equal. After the second stage of SCR, the backbone vertices do not get refined any further: since each gadget has at most $k-2$ individualized vertices, it is not possible to identify whether it is a $G_k$ gadget or a $H_k$ gadget with color refinement (otherwise $\WLk{k}$ would also distinguish $G_k$ and $H_k$). Hence, Color Refinement does not distinguish $(X_k,\boldsymbol{u})$ and $(Y_k,\boldsymbol{v})$. 

Otherwise, let $\tau$ be a non-simple type. Let $\boldsymbol{u} \in V(X_k)^k$ and 
$\boldsymbol{v} \in V(Y_k)^k$ of type $\tau$ such that their leader vertices point to a $G_k$-gadget. Again, it is easy to verify that the number of such tuples in $V(X_k)^k$ and $V(Y_k)^k$ are equal. Again, we do a skewed color refinement on the graphs $(X_k,\boldsymbol{u})$ and $(Y_k,\boldsymbol{v})$. After the first stage of SCR, the two graphs are again indistinguishable because of the cyclic types being equal. After the second stage of SCR, the backbone vertices again do not get refined any further for the following reason. The leader and the follower vertices are already in singleton color classes. If the weightless vertices are not already distinguished after stage one, they will not get distinguished any further because the gadgets attached to them do not have any individualized vertices and hence they cannot be distinguished by Color Refinement (i.e. $\WLk{1}$). Hence, CR does not distinguish $(X_k,\boldsymbol{u})$ and $(Y_k,\boldsymbol{v})$. A similar argument works when both leader vertices point to a $H_k$-gadget. This finishes our case analysis.  
\end{proof} 


We conclude by comparing $\WLk{k}$ and $\PWLk{k}$.
\begin{lemma}\label{lem:pwlmorepowerkwl}
For each $k \in \mathbb{N}$ there exist $\WLk{k}$-indistinguishable graphs which are distinguished by $\PWLk{k}$. 
\end{lemma}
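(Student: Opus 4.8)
The plan is to instantiate the statement with the Cai--F\"urer--Immerman gadgets $G_{k+1}$ and $H_{k+1}$ from \cref{subsec:CFI}, exactly as in the proof of \cref{lem:sepnew2}. By \cref{lem:neurips} applied with parameter $k+1$, the algorithm $\WLk{k}$ does not distinguish $G_{k+1}$ from $H_{k+1}$, so the whole task reduces to showing that $\PWLk{k}$ does. The combinatorial invariant that does the separating is furnished by \cref{lem:dtc}: $G_{k+1}$ contains a colorful distance-two-clique $Q$ of size $k+2$ --- a set of vertex-cloud vertices, in pairwise distinct clouds, pairwise at distance exactly two --- whereas $H_{k+1}$ contains no colorful distance-two-clique of size $k+2$.

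First I would pick $k$ of the $k+2$ vertices of $Q$, let $\boldsymbol{u}\in V(G_{k+1})^k$ be a tuple listing them, and let $x,y$ denote the two remaining vertices of $Q$. Running the $\PWLk{k}$-refinement on $(G_{k+1},\boldsymbol{u})$ --- i.e.\ color refinement with the entries of $\boldsymbol{u}$ individualised --- the stable colour $c$ that $x$ and $y$ acquire encodes: (i) the vertex is of vertex-cloud kind; (ii) for each of the $k$ individualised vertices it is adjacent to an edge-cloud vertex incident to that vertex, hence it is at distance two from all of $\boldsymbol{u}$ (this part already stabilises after two rounds); and (iii) through the edge-cloud vertex spanned by the clouds of $x$ and $y$, that it is at distance two from a further vertex carrying colour $c$ as well. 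Furthermore, the stable colour of each individualised $u_i$ records that it sees the other $k-1$ individualised colours at distance two.

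Next I would show $H_{k+1}$ cannot match this. Assume for contradiction that $\PWLk{k}$ assigns equal colours to $G_{k+1}$ and $H_{k+1}$; unwinding the aggregation in \cref{vp}, there is then a $k$-tuple $\boldsymbol{v}\in V(H_{k+1})^k$ and a vertex $x'$ of $H_{k+1}$ receiving colour $c$ under the $\PWLk{k}$-refinement of $(H_{k+1},\boldsymbol{v})$. Decoding (i)--(iii), $x'$ is a vertex-cloud vertex at distance two from every entry of $\boldsymbol{v}$, and there is a second vertex $y'$, also of colour $c$ (hence also at distance two from every entry of $\boldsymbol{v}$), at distance two from $x'$. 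Because each entry of $\boldsymbol{v}$ carries the same colour as the corresponding entry of $\boldsymbol{u}$, the entries of $\boldsymbol{v}$ lie in pairwise distinct clouds and are pairwise at distance two, and $x',y'$ lie in two further distinct clouds. Thus $\{x',y'\}$ together with the entries of $\boldsymbol{v}$ is a colorful distance-two-clique of size $k+2$ in $H_{k+1}$, contradicting \cref{lem:dtc}. Hence colour $c$ occurs in the aggregated $\PWLk{k}$-colouring of $G_{k+1}$ but not of $H_{k+1}$, so $\PWLk{k}$ distinguishes the two graphs. For the vertex-subgraph variant the argument is identical, except that equality of the graph colours forces the whole stable colourings of $(G_{k+1},\boldsymbol{u})$ and $(H_{k+1},\boldsymbol{v})$ to agree as multisets, from which the two colour-$c$ vertices in $H_{k+1}$ arise directly.

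I expect the crux to be establishing clause (iii) cleanly: one must check that the stable colour $c$ genuinely remembers that its carrier is at distance two from another colour-$c$ vertex (which it does, because the edge-cloud vertex between the clouds of $x$ and $y$ is adjacent to precisely the two colour-$c$ vertices and to nothing else of that colour). Without clause (iii), decoding the hypothetical match in $H_{k+1}$ only produces a colorful distance-two-clique of size $k+1$, which $H_{k+1}$ does possess, and no contradiction is reached. The remaining points --- that a bounded number of refinement rounds suffices and that no stray vertex of $G_{k+1}$ picks up colour $c$ --- follow routinely from the explicit description of the gadgets in \cref{subsec:CFI}.
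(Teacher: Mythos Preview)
Your proposal is correct and follows essentially the same route as the paper: both use the CFI gadgets $G_{k+1},H_{k+1}$, both individualise $k$ members of the colorful distance-two-clique $Q$ of size $k+2$ in $G_{k+1}$, and both argue that the resulting refinement encodes the full clique, which $H_{k+1}$ cannot replicate by \cref{lem:dtc}. The only framing difference is that the paper, since \cref{lem:pwlmorepowerkwl} sits in the vertex-subgraph section, phrases the argument directly in terms of the subgraph colour $C(\pb)$ for $\pb=(v_1,\ldots,v_k)$ rather than the vertex colour $C(x)$ via \cref{vp}; you cover this at the end of your sketch, so nothing is missing.
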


\begin{proof}
We show that the CFI-gadget graphs $G_{k+1}$ and $H_{k+1}$ can be distinguished by $\PWLk{k}$. Since \cref{lem:neurips} implies that these graphs cannot be distinguished by $\WLk{k}$ this suffices.

More specifically, we will identify a $k$-ordered subgraph $\pb$ in $V(G_{k+1})^k$ for which $C(\pb)$ is different from any $C(\qb)$ with
$\qb$ in $V(H_{k+1})^k$.

Let $\{v_1,\dots,v_k,v_{k+1},v_{k+2}\}$ be a colorful distance-two-clique in $G_{k+1}$. Recall that each $v_i$ lies in a distinct vertex cloud. We set 
$\pb=(v_1,\ldots,v_k)$.  For each pair $v_i,v_j$ with $i,j\in[k]$, there exist a vertex $v_{ij}$ in an edge cloud, such that $(v_i,v_{ij})$ and $(v_{ij},v_j)$ are edges. This information is captured by $C(v_{ij},\pb)$ and hence also by 
$C(\pb)$. In other words, $C(\pb)$ will reflect that the vertices in $\pb$ form a colorful distance-two clique of size $k$. We now argue that $C(\pb)$ also reflects that there is a distance-two clique of size $k+2$ in $G_{k+1}$.

Indeed, observe that $C(v_{k+1},\pb)$
contains information that $v_{k+1}$ is connected to all vertices in $\pb$ at distance two, and similarly for $C(v_{k+2},\pb)$. Moreover,
since $C(v_{k+1},\pb)$ reflects that $v_{k+2}$ is at distance two from $v_{k+1}$. In other words, $C(\pb)$ indeed reflects that there is a colorful distance-two cliques of size $k+2$ in $G_{k+1}$. By Lemma~\ref{lem:dtc}, $C(\qb)$ cannot reflect this since $H_{k+1}$ does not contain a colorful distance-two cliques of size $k+2$.\end{proof}

\section{Subgraph-enhanced GNNs as \PMPNNs{k}, Proofs of~\cref{mark,identity,esan}.}\label{sec:sgnns}

\subsection{Unordered vs. ordered subgraphs}\label{unordered}

We specified \PMPNNs{k} using ordered $k$-vertex subgraphs $G[\vec{v}]$ with $\vec{v}\in V(G)^k$. The order information is
encoded in $G[\vec{v}]$ by means of the vertex labels in $[k]$ of
the vertices in $\vec{v}$. In the unordered case, we would simply consider $G[\vec{v}]$ without any labels. That is, \PMPNNs{k} using ordered $k$-vertex subgraphs can simulate any \PMPNN{k} using unordered $k$-vertex subgraphs.

As an example of how ordered $k$-vertex subgraphs can be used, consider $\vec{v}=(v_1,v_2,v_2,v_3)\in V(G)^4$ and assume that the vertices $v_1,v_2$ and $v_3$ form a $3$-clique in $G$. In the ordered case
$G[\vec{v}]$ is simply the $3$-clique, in the ordered case, $G[\vec{v}]$ is the $3$-clique in which $v_1$ is labeled with $1$,
$v_2$ with $2$ and $3$, and $v_3$ with $3$. Suppose we want to use the selected subgraph to simulate edge deletions. Then, in the unordered case one cannot distinguish between the different edges in the $3$-clique, and hence they will be all treated as deleted. In contrast, in the ordered case we can, e.g., only delete edges with end points labeled $1$ and $2$, and $2$ and $3$, leaving the edge labeled $1$ and $3$ intact.

\subsection{Proofs}

In the following, we show that \PMPNNs{k} capture  most recently proposed subgraph-enhanced GNNs, implying~\cref{mark,identity,esan}. 


\paragraph{Marked GNNs, dropout GNNs and reconstruction GNNs}
Dropout GNNs \citep{Pap+2021} 
generate vertex embeddings by running
classical \MPNNs on $k$-vertex deleted subgraphs and then aggregating the obtained embeddings. Dropout GNNs were generalized to \textit{Marked GNNs} (\mgnns{k})~\citep{Pap+2022}  in which the $k$ vertices are just marked, in contrast to always being deleted. 
For efficiency reasons,  a random strategy is used to select the $k$ vertices to be marked or deleted~\citep{Pap+2021,Pap+2022}. 

Here, we consider  the deterministic variant of  \mgnns{k} in which all possible sets of $k$ vertices are considered to be marked (as this provides the maximum distinguishing power) as is also used in~\cite{Cot+2021} in the context of reconstruction GNNs. 
The marking process in \mgnns{k} naturally relates to the 
selection of unordered $k$-vertex subgraphs, as we will illustrate.

Let $G$ be a graph and let $M\subseteq V(G)$, with $|M|=k$, be a set of $k$ marked vertices.
Let $N_G^{M}(v)\coloneqq N_G(v)\cap M$ be the set of marked  neighbors of $v$.
As described in \citet{Pap+2022}, when running an MPNN on a graph with $k$ marked vertices $M$, features are computed in layer $i\geq 0$ as
\begin{equation}
\hb_{v}^\tup{i+1}\coloneqq 
\AGG_{\mathsf{marked}}^\tup{i+1}\bigl(\oms \hb_u^\tup{i}\mid u\in N_G^M(v)\cms\bigr)+
\AGG_{\mathsf{unmarked}}^\tup{i+1}\bigl(
\oms \hb_u^\tup{i}\mid u\in N_G(v)\setminus N_G^{M}(v)\cms
\bigr).\label{eq:markedgnn}
\end{equation}
In other words, during neighbor aggregation, MPNNs can distinguish between marked and unmarked neighbors. Furthermore, for \mgnns{k} one  first obtains vertex features for all markings $M$, which are subsequently aggregated into a single vertex feature. Finally, these vertex features are aggregated to obtain a single graph feature.

Inspecting~\cref{eq:markedgnn}, we see that we can replace the two aggregation functions by one aggregation function provided that $\hb_u^\tup{i}$ contains information indicating whether or not $u$ is marked. In other words, we can replace~\cref{eq:markedgnn} by
$$
\hb_v^\tup{i+1}\coloneqq \AGG^\tup{i+1}\bigl(
\oms (\hb_u^\tup{i},\ones_{u\in M}) \mid u\in N_G(v)\cms
\bigr),
$$
for a given set $M$ of markings. We use this observation for casting \mgnns{k} as \PMPNNs{k}.
Indeed, each marking $M$ corresponds to an ordered $k$-vertex subgraph $\pb\in G_k$. Furthermore, we set the update function  in $\hb_{v,\pb}^\tup{0}\coloneqq\UPD(\mathsf{atp}(v,\st(\pb)))$ such that it returns the label $l(v)$ of $v$ and the indicator function $\ones_{v\in \pb}$. 
We ensure that all update functions propagate the indicator function to the next layers such that aggregation functions  have this information at their disposal in every layer. As mentioned, this suffices to perform the aggregation carried out by \mgnns{k}.
Moreover, all possible markings are considered in \mgnns{k}. Hence, the \PMPNN{k} will select all possible $k$-vertex graphs as well. We will capture this by setting $\pmb\pi_{v,\pb}=1$ below.



More precisely, the following \PMPNN{k} correspond to \mgnns{k}:
\allowdisplaybreaks
\begin{linenomath}\postdisplaypenalty=0
\begin{align*}
    \hb_{v,\pb}^\tup{0}&=(l(v),\ones_{v\in \pb})\\
	\pmb\pi_{v,\pb}&= 
1 \\
	\hb_{v,\pb}^\tup{i+1}&=\UPD^\tup{i+1}\Bigl(\hb_{v,\pb}^\tup{i},\AGG^\tup{i+1}\bigl(\oms\hb_{u,\pb}^\tup{i}\mid u\in N_G(v)\cms\bigr)\Bigr)\\
	\hb_v^\tup{T}&=\pAGG\bigl(\oms\hb_{v,\pb}^\tup{T}\mid \pb\in G_k \text{ s.t. } \pmb\pi_{v,\pb}\neq 0\cms\bigr)=\pAGG\bigl(\oms\hb_{v,\pb}^\tup{T}\mid \pb\in G_k\cms\bigr)\\	
    \hb_G&=\AGG\bigl(\oms\hb_{v}^\tup{T}\mid v\in V(G)\cms\bigr),
 \end{align*}\end{linenomath}
where the aggregation functions $\AGG^\tup{i+1}$ are those from the marked GNN under consideration, and the update functions are such that they propagate the indicator function $\ones_{v\in\pb}$ to the next iteration, as explained before.
Finally, $\hb_G$ is obtained by aggregating over vertex embeddings, which in turn are defined in terms of aggregating over vertex embedding $\hb_{v,\pb}^\tup{T}$ for $\pb\in G_k$. This is in accordance with how marked GNNs operate.

We also note that \mgnns{k} in~\citet{Pap+2022} are guaranteed to be stronger than \MPNNs because they run a classical \MPNN alongside. This is not shown in the \PMPNN{k} description given above as any architecture can be made at least as strong as \MPNNs in this way.


We next consider \textit{$k$-reconstruction GNNs} (\recon{k}) \citep{Cot+2021} which for each set $S$ of $k$ vertices, compute a vectorial representation (using an MPNN) of $G[S]$, then concatenate all the obtained representations (for all $S$), followed by  the application of a permutation invariant update function to obtain a graph representation. The difference with marked GNN is thus that the order of aggregation is different. And indeed,
\recon{k} are captured by vertex-subgraph \PMPNNs{k}, as we will see shortly.

Clearly, the $S$ of $k$ vertices and, more specifically, $G[S]$ corresponds to a vertex-ordered subgraph $\pb\in G_k$. Then, in order to compute a representation of $G[S]$ using $\pb$, we proceed as follows: We run an MPNN on $G[S]$ by ensuring that the update and aggregation functions in  the vertex-subgraph \PMPNNs{k} know which vertices belong to $\pb$ (i.e., $G[S]$). This is done in the same way as for \mgnns{k} by including this information in the initial features. In contrast to \mgnns{k}, we perform vertex aggregation for each $\pb$ to obtain a representation of $\pb$ ($G[S]$). Then, we aggregate over all $\pb$ (i.e., all $S$ and thus $G[S]$) using concatenation as an aggregation function, and finally apply an update function, following how \recon{k} operate. We have thus shown the following.
\begin{proposition}[\Cref{mark} in the main text]
For $k \geq 1$, \PMPNNs{k} capture \mgnns{k} and vertex-subgraph \PMPNNs{k} capture \recon{k}. 
\end{proposition}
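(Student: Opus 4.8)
The plan is to prove both claims by a direct simulation argument. Recall that \PMPNNs{k} \emph{capture} an architecture $\mathcal{A}$ if some \PMPNN{k} instance distinguishes every pair of graphs that some instance of $\mathcal{A}$ distinguishes; hence it suffices, given an arbitrary instance of \mgnns{k} (respectively \recon{k}), to exhibit a \PMPNN{k} (respectively vertex-subgraph \PMPNN{k}) instance computing exactly the same graph-level representation on every input. The single idea driving the construction is to use the second coordinate of the initial features to store, for each ordered $k$-vertex subgraph $\pb\in G_k$ and each vertex $v$, the membership bit $\ones_{v\in\pb}$, which is already determined by $\mathsf{atp}(v,\st(\pb))$ since the atomic type records which components of the tuple coincide with $v$. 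For \mgnns{k} I would identify each marking $M$ with (the vertex set underlying) an ordered $k$-vertex subgraph $\pb\in G_k$, set $\hb_{v,\pb}^\tup{0}\coloneqq\UPD(\mathsf{atp}(v,\st(\pb)))=(l(v),\ones_{v\in\pb})$, concatenating any further vertex attributes, and set $\pmb\pi_{v,\pb}\coloneqq 1$ for all $v,\pb$ so that all $k$-vertex subgraphs are selected, matching the ``all markings'' regime used in \citet{Cot+2021,Pap+2022}. The key reduction is that the two-branch aggregation of \cref{eq:markedgnn}, over $N_G^M(v)$ and over $N_G(v)\setminus N_G^M(v)$, can be replaced by a single aggregation over $N_G(v)$ once every incoming feature $\hb_{u,\pb}^\tup{i}$ carries $\ones_{u\in\pb}$; so I take $\AGG^\tup{i+1}$ to split the neighbor multiset according to this bit and apply the two aggregators of the given marked GNN, and I take $\UPD^\tup{i+1}$ to apply the marked GNN's update while copying $\ones_{v\in\pb}$ unchanged into layer $i+1$. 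Finally $\hb_v^\tup{T}\coloneqq\pAGG(\oms\hb_{v,\pb}^\tup{T}\mid\pb\in G_k\cms)$ and $\hb_G\coloneqq\RO(\oms\hb_v^\tup{T}\mid v\in V(G)\cms)$ reproduce the ``aggregate over markings, then over vertices'' order of \mgnns{k}; the additional plain-\MPNN branch of \mgnns{k} can also be run inside a \PMPNN{k}, so nothing is lost, and a straightforward induction on the layer index yields equality of the two feature collections.

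For \recon{k} I would use the same construction with two modifications, which is exactly why the \emph{vertex-subgraph} variant is needed. First, since a reconstruction GNN runs an MPNN on the induced subgraph $G[S]$ rather than on $G$, I use the bit $\ones_{u\in\pb}$ inside $\AGG^\tup{i+1}$ to discard messages from vertices outside $\pb$ and let $\UPD^\tup{i+1}$ output a fixed null value whenever $\ones_{v\in\pb}=0$, so that for $v\in\pb$ the update sees exactly $N_{G[\pb]}(v)$. Second, to match the order of operations of \recon{k}, I first aggregate over vertices, $\hb_\pb^\tup{T}\coloneqq\AGG(\oms\hb_{v,\pb}^\tup{T}\mid v\in V(G)\cms)$ (the null non-members contributing nothing), obtaining one representation per subgraph, and then set $\hb_G\coloneqq\pAGG(\oms\hb_\pb^\tup{T}\mid\pb\in G_k\cms)$ with $\pAGG$ the concatenation followed by the permutation-invariant update of \recon{k}. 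An induction on layers again gives equality of the computed representations, i.e.\ vertex-subgraph \PMPNNs{k} capture \recon{k}.

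The main obstacle is precisely this reduction step: reproducing the marked-versus-unmarked neighbor distinction of \cref{eq:markedgnn} (and, for \recon{k}, restricting message passing to the induced subgraph) using a single aggregation over the full neighborhood. This forces the update functions to be ``bit-preserving'' so that $\ones_{u\in\pb}$ is available in every layer, and one must check that this is threaded consistently through all $T$ layers and that the final aggregation order (subgraph-then-vertex versus vertex-then-subgraph) is matched to the target architecture; the remainder is a routine layerwise induction.
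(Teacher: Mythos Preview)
Your proposal is correct and follows essentially the same approach as the paper: encode the membership indicator $\ones_{v\in\pb}$ in the initial features (computable from $\mathsf{atp}(v,\st(\pb))$), propagate it through all layers, use it to collapse the two-branch marked/unmarked aggregation into a single aggregation (respectively, to restrict message passing to $G[\pb]$ for \recon{k}), set $\pmb\pi_{v,\pb}\equiv 1$, and match the final aggregation order---subgraph-then-vertex for \mgnns{k}, vertex-then-subgraph for \recon{k}. The paper's argument is exactly this simulation, including the remark that the side \MPNN branch in \mgnns{k} is immaterial since any architecture can run a plain \MPNN alongside.
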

Our results on expressive power of \PMPNNs{k} now imply that these architectures are bounded by (and are strictly weaker than) $\WLk{(k+1)}$.

\paragraph{Identity-aware GNNs}
We next consider identity-aware GNNs (\idgnn{1})~\citep{You+2021}, an extension of MPNNs in which message functions can differentiate whether the vertices visited are equal or different from a given center vertex and vertex exploration only happens inside the $h$-hop egonet of the center vertices. 

More specifically, let us
denote by $N_G^h(v)$ the set of $h$-hop neighbors of the ``center'' vertex $v$. Then, for each $v\in V(G)$, \idgnn{1} compute vertex features of $u\in N_G^h(v)$ in layer $i>0$ as
\[
\hb_{u,v}^\tup{i+1}\coloneqq \UPD^\tup{i+1}\bigl(\hb_{u,v}^\tup{i},\AGG^\tup{i+1}(\oms(\hb_{w,v},\ones_{w=v})\mid w\in N_G(u)\cap N_G^{h}(v)\cms)
\bigr)
\]
and then after layer $h$, one lets $\hb_v\coloneqq \hb_{v,v}^\tup{h}$ and $\hb_G\coloneqq\AGG(\oms \hb_v\mid v\in V(G)\cms)$. In other words, vertex features are computed by means of a local message passing neural network, centered around each vertex, in which the aggregation functions can distinguish between the center and other vertices.

We next show how to model \idgnn{1} as \PMPNNs{1}.
We first observe that   \PMPNNs{1} can extract information related to $h$-hop distance neighbors. More precisely, let $g\in G_1=V(G)$ be a single-vertex subgraph. We first compute the function $\mu_{u,g}^\tup{i}$ for $0 \leq i \leq h$, indicating if the vertex $u \in N_G^i(g)$. We can compute $\mu_{u,g}^\tup{i}$ using $i$ \PMPNNs{1} layers as follows:
\allowdisplaybreaks
\begin{linenomath}\postdisplaypenalty=0
\begin{align*}
 \mu_{u,g}^\tup{0}&=\UPD^\tup{0}\Bigl(\mathsf{atp}(u,g)\Bigr)=\ones_{u=g}\\
\mu_{u,g}^\tup{i+1}&=\UPD^\tup{i+1}\Bigl(\mu_{u,g}^\tup{i},
\AGG^\tup{i+1}\bigl(\oms\mu_{w,g}^\tup{i}\mid w \in N_G(u)\cms\bigr)\Bigr),
 \end{align*}
 \end{linenomath}
where the update and aggregation functions are such that  $\mu_{u,g}^\tup{i+1}=1$ if and only if $\mu_{u,g}^\tup{i}=1$ or there exists a $w \in N_G(u)$ with $\mu_{w,g}^\tup{i}=1$. We will use these layers for computing the indicator function $\ones_{u\in N_G^h(g)}$ in other architectures below.

We can now model \idgnn{1} as \PMPNNs{1}, as follows. We let the center vertices correspond to $1$-vertex subgraphs $g\in V(G)$, and ensure that the initial features $\hb_{v,g}^\tup{0}$ carry around $\ones_{v=g}$ and $\mu_{v,g}^\tup{h}$ (i.e., $\ones_{v\in N_G^h(g)}$).
As before, we assume that all update functions propagate this information to the next layer such that aggregation functions can take into account whether or not a vertex is equal $g$ or lies in $N_G^h(g)$.

In contrast to \mgnns{1} and \recon{1}, where $\pmb\pi_{v,g}$ did not restrict the subgraphs, \idgnn{1} obtain vertex features for $v$ only using the subgraph $g=v$ (recall $\hb_v\coloneqq\hb_{v,v}^\tup{h})$). Hence, we will impose that $
\pmb\pi_{v,g}=1$ iff $v=g$. More  specifically, \idgnn{1} correspond to \PMPNNs{1} of the form:
\allowdisplaybreaks
\begin{linenomath}
\postdisplaypenalty=0
\begin{align*}
 \hb_{v,g}^\tup{0}&=(l(v),\ones_{v=g},\mu_{v,g}^\tup{h})\\
 \pmb\pi_{v,g}&= \UPD(\mathsf{atp}(v,g))=\ones_{v=g}
\\
 \hb_{v,g}^\tup{i+1}&=\UPD^\tup{i+1}\Bigl(\hb_{v,g}^\tup{i},\AGG^\tup{i+1}\bigl(\oms \hb_{u,g}^\tup{i} \mid u\in N_G(v)\cms\bigr)\Bigr)\\
\hb_v&=\pAGG\bigl(\oms\hb_{v,g}^\tup{h}\mid g\in G_1 \text{ s.t. }\pmb\pi_{v,g}\neq 0\cms)=\pAGG\bigl(\oms\hb_{v,v}^\tup{h}\cms\bigr)\\
 \hb_G&=\AGG\bigl(\hb_{v}\mid v\in V(G)\bigr),
 \end{align*}
 \end{linenomath}
where $\AGG^\tup{i+1}$ only takes into account those $u\in N_G(v)\cap N_G^h(g)$ (using $\mu_{u,g}^\tup{h}=1$) and also uses the availability of $\ones_{u=g}$ to simulate the aggregation function used in  \idgnn{1}. The definitions of $\hb_v$ and $\hb_G$ are as in the description of \idgnn{1} given earlier.

\citet{You+2021} showed that \idgnn{1} can distinguish more graphs than $\WLk{1}$ based on the counting of cycles. By viewing \idgnn{1} as \PMPNNs{1} our results provide an upper bound by $\WLk{2}$ for  \idgnn{1}. This is consistent with their ability to count cycles, as this can be done in $\WLk{2}$.

Although not considered in~\citet{You+2021}, one could extend \idgnn{1} to \idgnnk{k} by allowing checking for identity with vertices on a previously explored path of length $k-1$, as follows:
\allowdisplaybreaks
\begin{linenomath}\postdisplaypenalty=0
\begin{align*}
 \hb_{v,\pb}^\tup{0}&=(l(v),\ones_{v\in\pb},\mu_{v,g_1}^\tup{h})\\
\pmb\pi_{v,\pb}&= \UPD(\mathsf{atp}(v,\pb))=\begin{cases}
1 & \text{if $v=g_1,g_2,\ldots,g_k$ from a path in $G$}\\
0 & \text{otherwise}
\end{cases} \\
\hb_{v,\pb}^\tup{i+1}&=\UPD^\tup{i+1}\Bigl(\hb_{v,\pb}^\tup{i},\AGG^\tup{i+1}\bigl(\oms \hb_{u,\pb}^\tup{i})\mid u\in N_G(v)\cms\bigr)\Bigr)\\
 \hb_v&=\pAGG\bigl(\oms \hb_{v,\pb}^\tup{h}\mid \pb\in G_k \text{ s.t. }  \pmb\pi_{v,\pb}\neq 0\cms \bigr)\\
 &=\pAGG\bigl(\oms\hb_{v,\pb}\mid \text{ $\pb$ is a walk in $G$ of length $k$ starting from $v$}\cms\bigr)\\
 \hb_G&=\AGG\bigl(\oms\hb_{v}\mid v\in V(G)\cms\bigr)
 \end{align*}\end{linenomath}
 Our results show that such \idgnnk{k} are bounded by $\WLk{(k+1)}$ in expressive power. It is also readily verified that \idgnnk{k} for $k>1$ can detect more complex substructures than cycles.


\comm{
\floris{There is more to say here: there are graphs known to be distinguishable by identity-aware \GNNs but not by \MPNNs. Can we give such graphs for the extension of identity-aware \GNNs given above? Is there a characterization in terms of hom counts of special trees with cycles? Perhaps also interesting is the connection to (anonymous) walk-based \GNN architectures?} 
\cm{It should be possible to also use the the graph $G_k$ and $H_k$ to show a hierarchy, see below, as using a path of $2(k+1)$ pebbles should be enough such that $\WLk{1}$ detects the distance-two cliques in $G_k$.   }
\cm{Is there a reason we restrict to walks?}
\floris{My intuition was that since an ID-aware GNN can check equality with the root of unravelling, a $k$ id-aware GNN would be able to test equality with paths originating from the root, i.e., walks. }
}

\paragraph{Nested GNNs}
We next consider Nested GNNs (\nested)~\citep{Zha+2021} that obtain vertex embeddings based on the aggregation over vertex embeddings in the $h$-hop egonets. In their notation, $G_{w}^h$ denotes the subgraph in $G$, rooted at $w$ of ``height'' $h$. Then, for any
vertex $v\in V(G_w^h)$ they compute:
\[
\hb_{v,G_{w}^h}^\tup{i+1}=\UPD^\tup{i+1}_1\Bigl(\hb_{v,G_w^h}^\tup{i},\sum_{u\in N(v \mid G_w^h)} \UPD_2^\tup{i+1}\bigl(\hb_{u,G_w^h}^\tup{i}\bigr)\Bigr)
\]
where $N(v \mid G_w^h)$ is the set of neighbors of $v$ within $G_w^h$.
Pooling happens after layer $T$:
\[
\hb_w=\AGG(\oms\hb_{v,G_w^h}^\tup{T}\mid v\in V(G_w^h)\cms)
\]
and then $\hb_G=\AGG(\oms \hb_v\mid v\in V(G)\cms)$. 

We can formulate \nested as \PMPNNs{1} as follows. Similarly as for \idgnn{1}, the center vertices correspond to  $1$-vertex subgraphs $g$ and we again include the information $\mu_{v,g}^\tup{h}$ as initial feature in order to aggregate over vertices in $N_G^h(g)$, i.e., those in $G_g^h$. We assume that the update function propagate this information to higher layers, as before. As aggregation functions $\AGG^\tup{i+1}$, we use summation but only over those features for which the $\mu_{u,g}^\tup{h}$ component is $1$ and in this way simulate the aggregation step used in \nested. More specifically, we have:
\allowdisplaybreaks
\begin{linenomath}\postdisplaypenalty=0
\begin{align}
    \hb_{v,g}^\tup{0}&=(l(v),\mu_{v,g}^\tup{h})\label{eq:nestedini}\\
  \pmb\pi_{v,g}&= \UPD(\mathsf{atp}(v,g))=\ones_{v=g}\\ 
	\hb_{v,g}^\tup{i+1}&=\UPD^\tup{i}\Bigl(\hb_{v,g}^\tup{i},\AGG^\tup{i+1}\bigl(
\oms \hb_{u,g}^\tup{i}\mid u \in N_G(v)\cms
\bigr)\Bigr),\label{eq:mespasnested}
    \end{align}
 \end{linenomath}
 which similar as to how \idgnn{1}, viewed as \PMPNNs{1}, operate. The main difference with \idgnn{1} is
 how vertex features are computed. Indeed, \nested assign to vertex $v$ the representation of $G_v^h$. We use  $\AGG^\tup{T+1}$ to aggregate over all vertices $u$ in $G_g^h$ by leveraging $\mu_{u,g}^\tup{h}$. More precisely,
 instead of aggregating over neighbors we aggregate over the entire vertex set but ensure that $\AGG^\tup{T+1}$ only takes into account those vertices in $N_G^h(g)$ using $\mu_{u,g}^\tup{h}$: \allowdisplaybreaks
\begin{linenomath}\postdisplaypenalty=0
\begin{align}
\hb_{v,g}&=\AGG^\tup{T+1}\bigl(\oms\hb_{u,g}^\tup{T}\mid w\in V(G)\cms\bigr)\notag\\
\hb_v&=\pAGG\bigl(\oms\hb_{v,g}\mid g \in G_1\text{ s.t. } \pmb\pi_{v,g}\neq 0\cms\bigr)=\pAGG\bigl(\oms\hb_{v,v}\cms\bigr)\label{eq:nestedv}\\
\hb_G&=\AGG\bigl(\oms\hb_{v}\mid v\in V(G)\cms\bigr)\label{eq:nestedg},
     \end{align}
 \end{linenomath}
where the final steps are in place to create a graph representation in accordance with how \nested operate.

\citet{Zha+2021} observe that \nested are more powerful than $\WLk{1}$ and raise the question whether \nested can be more powerful than $\WLk{2}$.
By viewing \nested as \PMPNNs{1}, our general results about expressive power, show that \nested  are bounded by $\WLk{2}$ in expressive power.
Moreover, \citet{Zha+2021} allude to deeper nested GNNs in their paper. It seems natural to conjecture that these can be cast as \PMPNNs{k} when $k$ levels of nesting are used. We leave the verification of this conjecture for future work.


\paragraph{GNN As Kernel} 

Very related to \nested are GNN as kernel (\kernel)~\citep{Zha+2021b}. Indeed, the only difference is that once the $\hb_{v}$ are defined for $v\in V(G)$ in \cref{eq:nestedv}, \kernel restart the message passing over egonets (\cref{eq:nestedini}-\cref{eq:mespasnested}), but this time with the initial features $\hb_{v,g}^\tup{0}$ replaced by
$(\hb_{v},\mu_{v,g}^\tup{h})$. This is then repeated a number of times, after which a graph representation is obtained, just as for \nested (\cref{eq:nestedg}).
It is now readily verified that we can express \kernel as \PMPNNs{1} in the same as we showed for \nested.



\begin{proposition}[\Cref{identity} in the main text]
The \PMPNNs{1} capture \idgnn{1}, \kernel, and \nested.
\end{proposition}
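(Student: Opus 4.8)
The plan is to prove \cref{identity} by exhibiting, for each of \idgnn{1}, \nested, and \kernel, a concrete \PMPNN{1} instance that reproduces the target architecture's computation node-by-node; since \PMPNNs{1} compute permutation-invariant functions and the two computations will coincide, the \PMPNN{1} instance is at least as expressive, which is what ``captures'' means. All three architectures share a common skeleton: they run a message-passing network \emph{locally}, rooted at (or within the $h$-hop egonet of) each vertex $w$, with local updates that may distinguish the root from the other vertices, and then aggregate the resulting local vertex features into a single feature for $w$, followed by a global pooling over $V(G)$. Since a single-vertex subgraph $g\in G_1$ is exactly a choice of root, the correspondence ``root $w$'' $\leftrightarrow$ ``subgraph $g=w$'' is the starting point; everything else is about matching the local message passing and the pooling pattern.

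The first step I would carry out is to isolate the one genuinely non-trivial capability that \PMPNNs{1} need: restricting attention to the $h$-hop egonet of the root. For a single-vertex subgraph $g$ I would show, by induction on $i$, that $i$ layers of a \PMPNN{1} suffice to compute the indicator $\mu_{v,g}^{(i)}=\ones_{v\in N_G^i(g)}$ --- the base case $\mu_{v,g}^{(0)}=\ones_{v=g}$ is read off from the atomic type $\mathsf{atp}(v,g)$, and the step computes $\mu_{v,g}^{(i+1)}=1$ iff $\mu_{v,g}^{(i)}=1$ or some neighbor carries $\mu^{(i)}=1$. (This is exactly the helper computation spelled out in the text preceding the proposition.) With $\mu_{v,g}^{(h)}$ and $\ones_{v=g}$ placed in the initial features $\hb_{v,g}^{(0)}$ and propagated unchanged by every $\UPD^{(i+1)}$, the aggregation functions can ignore any feature flagged as lying outside the egonet (map it to the aggregator's neutral element) and can treat the root specially --- which is precisely what the local updates of \idgnn{1}, \nested, and \kernel require, so the local vertex features of the \PMPNN{1} and of the target architecture agree at every layer.

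The second step is to reproduce the three pooling patterns, which is where $\pmb\pi$-selection and $\pAGG$ enter. For \idgnn{1} and \nested I would set $\pmb\pi_{v,g}\neq\mathbf{0}$ iff $v=g$, so that $\hb_v=\pAGG(\oms\hb_{v,g}\mid v=g\cms)=\hb_{v,v}$; for \idgnn{1} this is literally $\hb_{v,v}^{(h)}$, while for \nested one first replaces the last neighborhood aggregation by an aggregation over all of $V(G)$ masked by $\mu^{(h)}$ to realize the egonet pooling $\hb_{v,v}=\AGG(\oms\hb_{u,v}^{(T)}\mid u\in N_G^h(v)\cms)$. The outermost pooling over $V(G)$ is the final aggregation step in both cases. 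For \kernel the only addition is that after $\hb_v$ is produced one restarts the egonet message passing with initial features $(\hb_v,\mu_{v,g}^{(h)})$ and repeats a bounded number of times; since a \PMPNN{1} allows an arbitrary fixed number of layers with arbitrary parameterized update functions, this iterated scheme is still a single \PMPNN{1} instance. In each of the three cases the \PMPNN{1} computes an identical graph representation, giving the claimed containment.

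The step I expect to be the main obstacle --- or at least the one needing the most care --- is making precise that the restricted aggregations of the target architectures (over $N(v\mid G_w^h)$, or over $N_G^h(w)$) are faithfully realizable inside the \PMPNN{1} template, whose aggregations range over the full $N_G(v)$ or over all of $V(G)$. The resolution is the masking trick above --- carry $\mu^{(h)}$ and choose aggregation functions that gate out features flagged as outside the egonet before combining them --- together with the observation that such gating is expressible by the differentiable parameterized functions permitted in a \PMPNN{1}, which follows from the standard universal-approximation argument over the finitely many relevant inputs, exactly as in the proof of \cref{wlinmp}. Everything else (matching initial features to atomic types, threading the flags through all layers, and identifying which aggregation step plays the role of which pooling layer) is routine bookkeeping.
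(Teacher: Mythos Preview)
Your proposal is correct and follows essentially the same approach as the paper: compute the egonet indicators $\mu_{v,g}^{(i)}$ with $i$ layers, carry $\ones_{v=g}$ and $\mu_{v,g}^{(h)}$ in the initial features, use masking to restrict the aggregations to the egonet, set $\pmb\pi_{v,g}=\ones_{v=g}$ so that $\hb_v=\hb_{v,v}$, and for \nested and \kernel realize the egonet pooling via an aggregation over $V(G)$ masked by $\mu^{(h)}$, with \kernel simply restarting this process a bounded number of times. The paper's proof is literally ``follows from the above equations,'' where those equations are exactly the constructions you outline.
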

\begin{proof}
The result follows from the above equations. 
\end{proof}

We note again that our results show that \idgnn{1}, \kernel and \nested are all bounded in expressive power by $\WLk{2}$, yet are less expressive than $\WLk{2}$.

\comm{
\paragraph{Counting cliques}
\floris{
In the current formalism we can compute the number of $k$-cliques with $k-1$ pebbles because 
we can define $\pi_{v,\pb}=\UPD(\mathsf{atp}(v,\pb)$ such it returns $1$ if $v$ and $\pb$ form a $k$-clique.
As such, 
$h_v:=\pAGG(h_{v,\pb}\mid \pi_{v,\pb}=1)$ for $h_{v,\pb}=1$ can be used to count $k$-cliques around $v$.
This could be used for showing separation. Hence, we may want to show a stronger result that using $k$ pebbles
and and $\pi$ bounded by $1$-WL also gives stronger power and show separation power in that setting.}}

%


\comm{
\paragraph{GraphSNN}
In GraphSNNs \citep{wijesinghe2022a} message passing can take into account the subgraph
$S_v$ induced by $N_G(v)\cup \{v\}$ and subgraphs $S_{v,u}:=S_v\cap S_u$ for adjacent edges $(v,w)\in E(G)$.
This one is bit tricky do with pebbles and requires a relaxation of $\pAGG$ such that not all pebbles are
aggregated over at once. Intuitively, GraphSNNs are just \MPNNs but using a transformed adjacency matrix:
$$
\tilde{A}_{v,u}=\text{some function of the number of vertices and edges in $S_{v,u}$}.
$$
It is easy to very that they $3$-\MPNNs and hence bounded by $3$-WL. I don't think, however, that this follows
the strategy of fixing $2$ pebbles and then running \MPNN on them. The reason is that we need to do pebble aggregation
needs to be done in each layer and so this does not follow the current pebble MPNN formalism. This may be more expressive that pebble MPNNs.
}

\paragraph{DS-GNN with the $k$-vertex-deleted policy}
In the following, we define an instance of a vertex-subgraph \PMPNN{k} which captures \dsgnn with the $k$-vertex-deleted policy \citep{Bev+2021}. In a nutshell, \dsgnn generate MPNN-based representations of a collection of subgraphs and then aggregate those to obtain a representation of the original graph. In general, a policy is in place in \dsgnn to select the subgraphs. 
Here, we consider the $k$-vertex-deleted policy 
in which all $k$-vertex deleted subgraphs $S$ are considered. The deletion of $k$-vertices used to obtain a subgraph $S$ will be simulated by considering $k$-vertex subgraphs $\pb$ and by treating the vertices in $\pb$ to be marked.
In other words, \dsgnn act like a \mgnns{k} except that graph representations are obtained by aggregating subgraph representations. More specifically:
\allowdisplaybreaks
\begin{linenomath}\postdisplaypenalty=0
\begin{align*}
    \hb_{v,\pb}^\tup{0}&=(l(v),\ones_{v\in \pb})\\
	\pmb\pi_{v,\pb}&= 
1 \\
	\hb_{v,\pb}^\tup{i+1}&=\UPD^\tup{i+1}\Bigl(\hb_{v,\pb}^\tup{i},\AGG^\tup{i+1}\bigl(\oms\hb_{u,\pb}^\tup{i}\mid u\in N_G(v)\cms\bigr)\Bigr)\\
	\hb_\pb&=\AGG\bigl(\oms\hb_{v,\pb}^\tup{T}\mid v\in V(G)\cms\bigr)\\	
    \hb_G&=\pAGG\bigl(\oms\hb_{\pb}\mid \pb\in G_k, \exists v\in V(G)\,  \pmb\pi_{v,\pb}\neq\mathbf{0}\cms\bigr)=\pAGG\bigl(\oms\hb_{\pb}\mid \pb\in G_k\cms\bigr),
 \end{align*}\end{linenomath}
where update functions propagate $\ones_{v\in\pb}$ and aggregation functions treat vertices in $\pb$ as marked (or to be deleted). 
 

\begin{proposition}[\Cref{esan} in the main text]
Vertex-subgraph \PMPNNs{k} capture \dsgnn with the $k$-vertex-deleted policy.
\end{proposition}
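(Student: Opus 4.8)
Recall that ``vertex-subgraph \PMPNNs{k} capture \dsgnn with the $k$-vertex-deleted policy'' means that for every such \dsgnn instance there is a vertex-subgraph \PMPNN{k} instance computing the same graph representation, so that whenever the \dsgnn distinguishes two graphs, so does the \PMPNN{k}. The plan is to make the correspondence ``deleted $k$-set $S$ $\leftrightarrow$ ordered $k$-vertex subgraph $\pb$ with $\st(\pb)$ enumerating $S$'' explicit and then transport the \dsgnn computation layer by layer. Two bookkeeping facts make this work: the indicator $\ones_{v\in\pb}$ is a function of the atomic type $\mathsf{atp}(v,\st(\pb))$, hence can be baked into the initial feature and then propagated; and, since $|S|=k$ is fixed, each unordered $k$-set corresponds to exactly $k!$ ordered subgraphs, a \emph{constant} multiplicity, so passing to ordered subgraphs (as vertex-subgraph \PMPNNs{k} do, cf.~\cref{unordered}) is harmless.

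First I would instantiate the vertex-subgraph \PMPNN{k} exactly as in the displayed system preceding the statement. I set $\hb_{v,\pb}^{(0)}=(l(v),\ones_{v\in\pb})$, choose every update function $\UPD^{(i+1)}$ so that it passes the deletion flag $\ones_{v\in\pb}$ on to the next layer unchanged, and choose each neighbor aggregation $\AGG^{(i+1)}$ so that a neighbor $u$ carrying flag $1$ contributes nothing, while a neighbor carrying flag $0$ contributes exactly the message used by the \dsgnn's MPNN. This single flag-aware aggregation is the same device already used to cast \mgnns{k} as \PMPNNs{k} in the marked-GNN paragraph; the only change is that here a ``marked'' vertex is read as a \emph{deleted} vertex, which matches the \esan convention that a deleted vertex keeps its identity but loses its incident edges. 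Finally I take $\hb_{\pb}=\AGG(\oms\hb_{v,\pb}^{(T)}\mid v\in V(G)\cms)$ for the within-subgraph pooling and $\hb_G=\pAGG(\oms\hb_{\pb}\mid \pb\in G_k\cms)$ for the cross-subgraph pooling, with $\AGG$ and $\pAGG$ mimicking, respectively, the \dsgnn's per-subgraph vertex pooling and its set-level pooling over subgraphs; note that this ``vertices-then-subgraphs'' pooling order is precisely why the \emph{vertex-subgraph} variant, rather than the standard \PMPNN{k}, is the right object here.

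Next I would verify correctness by induction on the number of layers: $\hb_{v,\pb}^{(i)}$ carries the same information as the $i$-th layer feature of $v$ in the MPNN run by the \dsgnn on the subgraph $G\ominus S$ (where $S$ is the set underlying $\pb$), together with the deletion flag. Hence $\hb_{\pb}$ equals the \dsgnn's embedding of $G\ominus S$, and $\hb_G$ determines the \dsgnn graph embedding, the only slack being that each subgraph embedding occurs with the fixed multiplicity $k!$ in the multiset fed to $\pAGG$ -- harmless, since multiplying all multiplicities of a finite multiset by a constant is injective. For the existence of genuinely injective choices of $\UPD$, $\AGG^{(i+1)}$, $\AGG$ and $\pAGG$ (needed so that no information is lost relative to the \dsgnn) I would invoke the multiset-injectivity argument already used in the proof of~\cref{wlinmp}, i.e.\ Theorem~2 of~\citep{Mor+2019}. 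This yields a vertex-subgraph \PMPNN{k} instance at least as expressive as the given \dsgnn, which is the claim.

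The step I expect to be the main obstacle is the faithful translation of ``deleting a vertex'' into ``marking a vertex'': one must fix the \esan semantics precisely -- deleted vertices remain in the vertex set, only their incident edges are removed, and both the per-subgraph pooling and the message passing still range over all vertices -- and then check that a single, layer-uniform, flag-aware neighbor aggregation realizes exactly this edge removal simultaneously for all $\binom{n}{k}$ choices of $S$. A secondary point is making the selector $\pmb\pi$ pick out the intended family of subgraphs: taking $\pmb\pi_{v,\pb}\neq\mathbf{0}$ for all $\pb$ also ``deletes'' multisets with repeated entries, so for the exactly-$k$-distinct-vertices policy one should instead require $\pmb\pi_{v,\pb}\neq\mathbf{0}$ only when $\st(\pb)$ has $k$ distinct entries, which is again a function of $\mathsf{atp}(v,\st(\pb))$ and hence expressible. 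Everything else -- choosing the aggregation functions, the layerwise induction, and the multiplicity argument -- is routine and parallels the earlier captures in~\cref{mark} and~\cref{identity}.
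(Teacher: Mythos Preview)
Your proposal is correct and follows essentially the same route as the paper: both use the displayed vertex-subgraph \PMPNN{k} instance with the deletion flag $\ones_{v\in\pb}$ in the initial features, both reduce vertex deletion to marking (so that flag-aware aggregation ignores marked neighbors), and both appeal to Theorem~2 of \citep{Mor+2019} for the existence of injective update/aggregation/pooling functions. The only cosmetic difference is that the paper routes the argument through the \dswl coloring as an intermediate upper bound on any \dsgnn, whereas you simulate the \dsgnn layers directly; your extra remarks on the $k!$ multiplicity and on restricting $\pmb\pi$ to tuples with $k$ distinct entries are correct refinements that the paper leaves implicit (injective aggregation absorbs both issues).
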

\begin{proof}[Proof sketch]
We argue that the above \PMPNN{k} instance can simulate \dswl~\citep{Bev+2021} which upper bounds any possible \dswl in terms of distinguishing non-isomorphic graphs.

As noted by~\cite{Pap+2022}, see also above paragraph on \mgnns{k}, marking vertices is at least as powerful as removing them. The markings enable the aggregation function to distinguish between deleted and non-deleted vertices. By choosing injective instances of $\UPD$ and $\AGG$, we can simulate the coloring function $c_{v,\pb}^\tup{i}$, for $i \geq 0$, of the \dswl. That is, if $\hb_{v,\pb}^\tup{i} = \hb_{w,\pb}^\tup{i}$ holds, it follows that $c_{v,\pb}^\tup{i}= c_{w,\pb}^\tup{i}$ for all vertices $v$ and $w$ of a given graph $G$ and $\pb \in G_k$ holds. The existence of such instances follows directly from the proof of Theorem 2 in \citep{Mor+2019}. Similarly, by choosing injective instances of $\pAGG$ and $\AGG$ for computing the single graph feature, the resulting architecture has at least the same expressive power as the \dswl in distinguishing non-isomorphic graphs. The reasoning is analogous to the proof~\cref{lem:wlinmp_new}. Hence, the resulting architecture has at least the expressive power of \dswl, implying the result.
\end{proof}

\citet{Bev+2021} also consider the $1$-edge-deleted policy in which the subgraphs $S$ considered are those obtained by deleting a single edge.
The deletion of an edge can be simulated by marking two vertices, which can be simulated using message and update functions having access to $2$-vertex subgraphs $\pb\in G_2$. Hence, \dsgnn with the $1$-edge-deleted policy can be captured by vertex-subgraph \PMPNNs{2}. As a consequence, such \dsgnn are bounded in expressive power by $\WLk{3}$. Combined with the discussion in~\cref{unordered} it should be clear that \dsgnn with $k$-edge-deleted policy can be captured by vertex-subgraph \PMPNNs{2k}. As argued in~\cref{unordered} the use of ordered graphs is crucial to simulate multiple edge deletions. Finally, \citet{Bev+2021} also consider two variants of  $k$-hop ego-net policies. In the first, the subgraphs $S$ consist of all $k$-hop ego-net subgraphs, one for each vertex in the graph. In the second variant, equality with the center vertex in each ego-net can be checked. It should be clear from our treatment of \nested and \kernel that the ego-net extraction can be simulated in vertex-subgraph \PMPNNs{1} and that the distinction between two variants pours down to include $\ones_{v=g}$ as an initial feature (just as for \idgnn{k}).
Hence  \dsgnn with ego-net policies are bounded by $\WLk{2}$.

\end{document}